\def\eqref#1{equation~\ref{#1}}
\def\1{\bm{1}}
\DeclareMathAlphabet{\mathsfit}{\encodingdefault}{\sfdefault}{m}{sl}
\SetMathAlphabet{\mathsfit}{bold}{\encodingdefault}{\sfdefault}{bx}{n}
\newcommand{\tf}[1]{\textbf{#1}}
\newcommand{\sophia}{Sophia}
\newcommand{\mezo}{MeZO}
\newcommand{\helen}{HELENE}
\theoremstyle{plain}
\newtheorem{theorem}{Theorem}
\newtheorem{lemma}{Lemma}
\newtheorem{assumption}{Assumption}
\title{HELENE: \underline{He}ssian Layer-wise C\underline{l}ipping and Gradi\underline{e}nt An\underline{ne}aling for Accelerating Fine-Tuning LLM with Zeroth-Order Optimization}
\author{%
Huaqin Zhao$^{1}$\thanks{Equal contribution}
\quad \quad \quad
Jiaxi Li${}^{1}$\footnotemark[1]
\quad \quad \quad
Yi Pan${}^{1}$
\quad \quad \quad
Shizhe Liang${}^{1}$
\\
\textbf{Xiaofeng Yang}${}^{2}$
\quad \quad \quad
\textbf{Wei Liu}${}^{3}$
\quad \quad \quad
\textbf{Xiang Li}${}^{4}$
\\
\textbf{Fei Dou}${}^{1}$
\quad \quad \quad
\textbf{Tianming Liu}${}^{1}$
\quad \quad \quad
\textbf{Jin Lu}${}^{1}$\thanks{Corresponding author} \\
${}^{1}$ University of Georgia \quad
${}^{2}$ Emory University \\
${}^{3}$ Mayo Clinic \quad
${}^{4}$ Massachusetts General Hospital and Harvard Medical School
}
\date{\today}
\begin{document}

\maketitle

%!TEX root = main.tex

\begin{abstract}
Fine-tuning large language models (LLMs) poses significant memory challenges, as the back-propagation process demands extensive resources, especially with growing model sizes. Recent work, \mezo{}, addresses this issue using a zeroth-order (ZO) optimization method, which reduces memory consumption by matching the usage to the inference phase. However, \mezo{} experiences slow convergence due to varying curvatures across model parameters. To overcome this limitation, we introduce \helen{}, a novel scalable and memory-efficient optimizer that integrates annealed A-GNB gradients with a diagonal Hessian estimation and layer-wise clipping, serving as a second-order pre-conditioner. 
This combination allows for faster and more stable convergence. Our theoretical analysis demonstrates that \helen{} improves convergence rates, particularly for models with heterogeneous layer dimensions, by reducing the dependency on the total parameter space dimension. Instead, the method scales with the largest layer dimension, making it highly suitable for modern LLM architectures. Experimental results on RoBERTa-large and OPT-1.3B across multiple tasks show that \helen{} achieves up to a $20\times$ speedup compared to \mezo{}, with average accuracy improvements of 1.5\%. Furthermore, \helen{} remains compatible with both full parameter tuning and parameter-efficient fine-tuning (PEFT), outperforming several state-of-the-art optimizers.
% todo{ say how they overcome the curverture}
The codes will be released after reviewing.
\end{abstract}

\section{Introduction}
LLMs have demonstrated remarkable capabilities across various downstream tasks. Fine-tuning these models has become the standard approach for improving task-specific performance, in which the first-order optimizers like Stochastic Gradient Descent (SGD) ~\citep{robbins1951stochastic}, Adam~\citep{diederik2014adam} and AdamW~\citep{hutter2017decoupled} are widely used. While effective, however, these methods demand significant memory resources primarily due to the backpropagation process, which makes fine-tuning challenging, especially for large-scale models. To overcome this limitation, \cite{malladi2023fine} proposed a memory-efficient zeroth-order optimizer (MeZO) that estimates gradients using only two forward passes per training step, contributing to considerable memory savings.

However, recent studies show that loss functions in deep learning often exhibit heterogeneous curvatures across different model parameters and different model layers~\citep{sagun2016eigenvalues, ghorbani2019investigation, zhang2022eva, yao2020pyhessian}, which poses challenges to zeroth-order (ZO) optimization. This variation in curvature can overall hinder training efficiency and lead to the sub-optimal solution.
% as illustrated in Figure~\ref{fig:landscape} and Figure~\ref{fig:comparison_with_sophia_helene_newton}.
% from which we can observe that Naive Newton's method struggles with convergence. 
% While \sophia{} achieves fast initial convergence, excessive gradient clamping in later stages introduces errors in updates.
To address this issue, more advanced techniques are required, such as incorporating second-order information to better account for curvature differences~\citep{liu2023sophia, tran2022better, jahani2021doubly}. However, in ZO optimization, directly computing the Hessian from first-order derivatives is nearly impossible, and partial Hessian evaluations are computationally intensive, leading to slower convergence.
Moreover, we also observe that these methods like Naive Newton's method and \sophia{}~\citep{liu2023sophia} fail in fine-tuning LLMs in practice as illustrated in Figure~\ref{fig:landscape} and Figure~\ref{fig:comparison_with_sophia_helene_newton}.
% Moreover, we observe that Naive Newton's method struggles with convergence in practice. 
% While \sophia{}~\citep{liu2023sophia} achieves fast initial convergence, excessive gradient clamping in later stages introduces errors in updates.

To overcome the aforementioned challenges, we propose {\helen}, a novel optimizer designed to integrate second-order curvature information efficiently in the context of ZO optimization. Originating from label-sampling-based Gaussian-Newton Garlett (GNB) Estimator ~\citep{schraudolph2002fast,wei2020implicit,martens2020new}, in our \helen{} algorithm, we introduce a label-sampling-free and efficient Hessian estimator called the asymptotic Gauss-Newton-Bartlett estimator (A-GNB) Estimator, which estimates the diagonal of the Hessian matrix. A-GNB is proven to asymptotically converge to the unbiased Hessian. Additionally, \helen{} includes a layer-wise adaptive clipping mechanism that enables more precise curvature-aware updates, while magnitude-based clipping helps prevent the overestimation of extreme values in the Hessian diagonal. Unlike existing methods that rely on global clipping, which can distort gradient signals, \helen{} preserves the integrity of gradient information by applying clipping on a per-layer basis.

One of the key innovations of \helen{} is its ability to adaptively clip Hessian updates according to the curvature of each layer, which significantly enhances convergence rates. While the convergence of state-of-the-art optimizers like \sophia{} require $ \mathcal O(d)$ steps; in contrast, the convergence of \helen{} requires significantly less steps, which is $\mathcal O(\max_i d_i)$ based on the largest layer dimension $\max_i d_i$ across all layers, making it more suitable for modern deep architectures. 
The proposed \helen{} algorithm also integrates an annealing exponential moving average (EMA) of the gradients, which is divided by the layer-wise clipped diagonal Hessian, ensuring robust performance even in non-convex loss landscapes. 

% \todo{figure2's original caption. Not done yet.}
% \helen{} combines gradient annealing and layer-wise diagonal Hessian clipping, accelerating convergence and delivering improved accuracy.  

\begin{figure*}[t]
    \centering
    \vspace{-2em}
    \begin{minipage}[b]{0.45\textwidth}
        \centering
        \includegraphics[width=\textwidth]{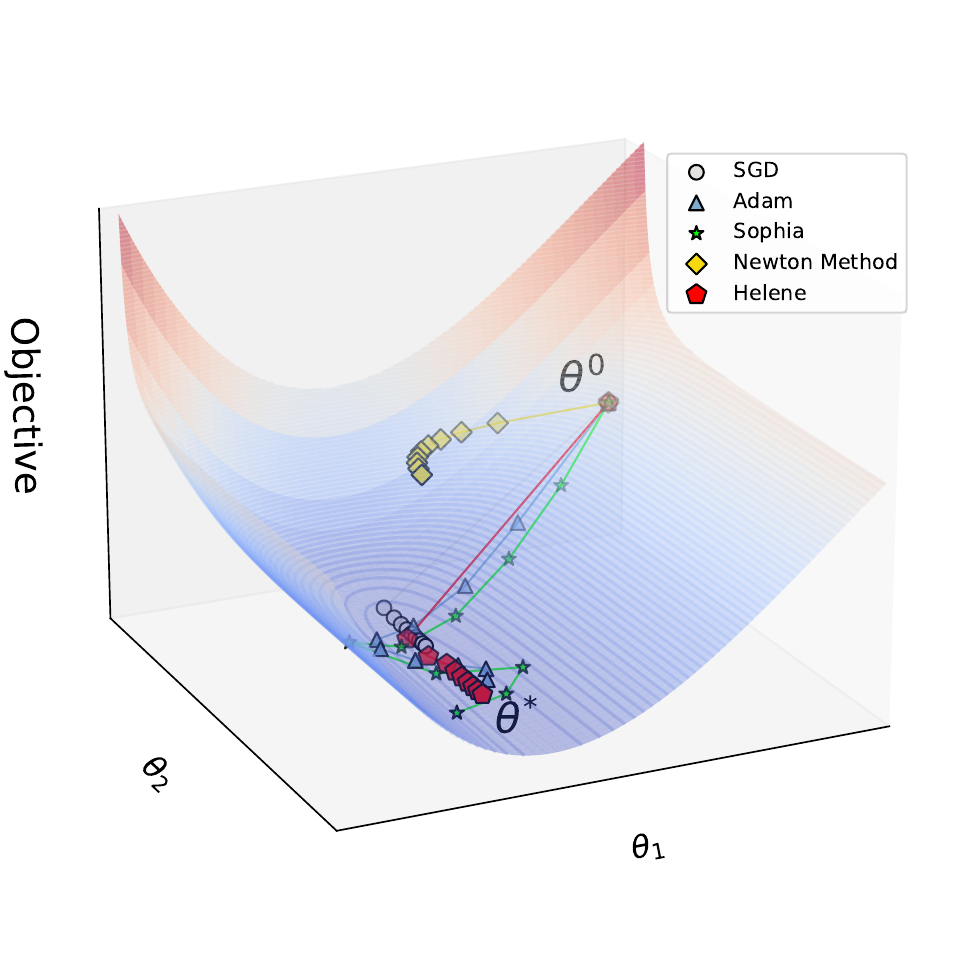}
        \caption{The motivating toy example. \helen{} can maintain stable updates when facing curvature issues, while other second-order optimizers are severely affected by them.}
        \label{fig:landscape}
    \end{minipage}
    \hfill % 使两张图片之间留白
    \begin{minipage}[b]{0.54\textwidth}
        \centering
        \includegraphics[width=\textwidth]{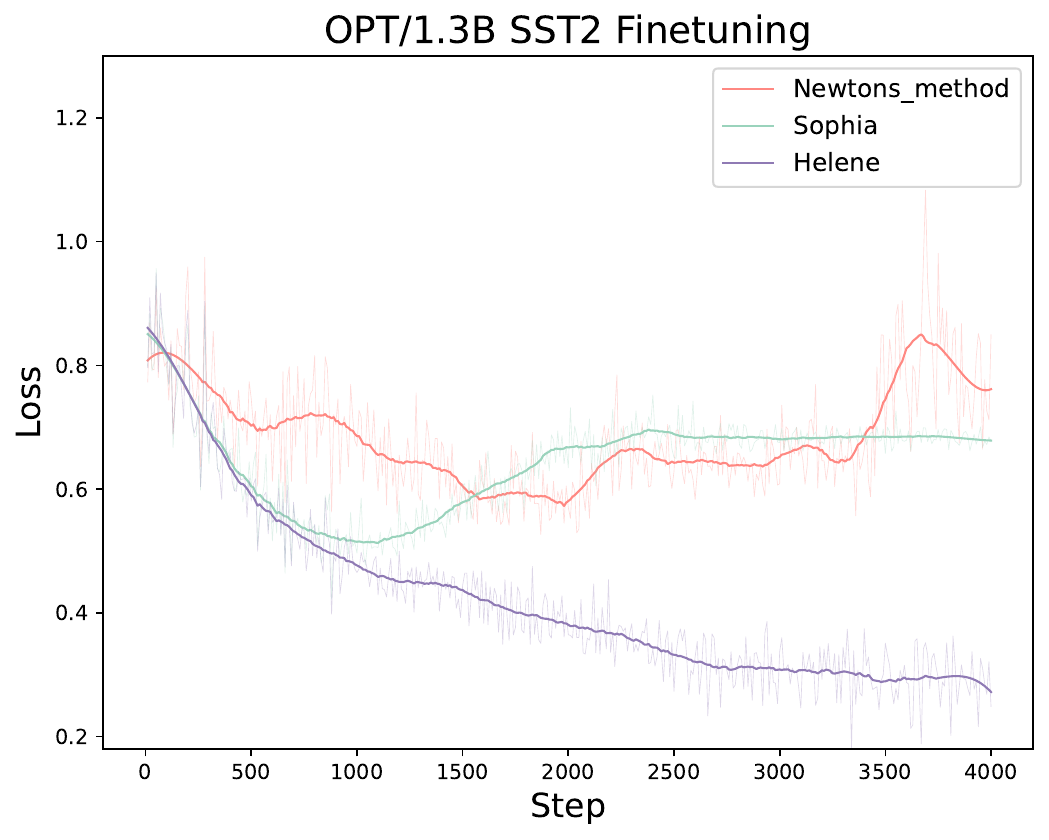} % 将 com.png 
        \caption{Comparison of \helen{} with Newton's method and \sophia. The performance of this training loss cross-checks with the toy sample in Figure~\ref{fig:landscape}.}
        \label{fig:comparison_with_sophia_helene_newton}
    \end{minipage}
    % \vspace{-2em}
\end{figure*}

Overall, our key contributions can be summarized as follows:
\begin{itemize}
    \item[1.] \helen{} integrates a novel asymptotic Gauss-Newton-Bartlett (A-GNB) estimator that efficiently estimates the diagonal of the Hessian matrix without the need for label sampling which may incur more noise in the Hessian estimation. This estimator asymptotically converges to the unbiased Hessian, improving the efficiency and precision of curvature-aware updates. In our proposed method, we also devise a new layer-wise adaptive clipping mechanism by adjusting Hessian updates according to the curvature of each layer. \helen{} integrates an annealing exponential moving average (EMA) of the gradients, ensuring robustness in non-convex loss landscapes.
    \item[2.] Our theoretical analysis demonstrates that \helen{} achieves improved convergence rates compared to existing methods, particularly for models with many layers. By reducing the convergence steps from $\mathcal O(d)$ to $\mathcal O(\max_i d_i)$, \helen{} is provably more scalable for modern deep learning architectures, especially LLM fine-tuning.
    \item[3.]\helen{} achieves up to $20\times$ speedup compared to \mezo{} and improves performance by 1.5\% on average. We conduct extensive experiments on RoBERTa-large and OPT-1.3B across various downstream tasks to verify \helen{}'s effectiveness. Furthermore, we demonstrate that \helen{} not only remains compatible with full parameter tuning and PEFT, but also outperforms many of the latest optimizers across a range of tasks.
\end{itemize}

% \vspace{-1em}

\section{Preliminaries}
In this section, we briefly review essential background concepts related to zeroth-order optimization and diagonal Hessian approximation, which are fundamental to the design of our proposed method. 
\subsection{Zeroth-Order Gradient Estimators and MeZO}
Zeroth-order (ZO) optimization has long been studied in the context of convex and non-convex objectives. One of the typical ZO gradient estimators is the simultaneous perturbation stochastic approximation (SPSA)~\citep{spall1992multivariate, maryak2001global}. Given a model with parameters  \( \boldsymbol{\theta} \in \mathbb{R}^d \)  and loss function  \( \mathcal{L} \), SPSA estimates the gradient on a minibatch \( \mathcal{B} \) as:
\[g_{\epsilon}(\boldsymbol{\theta}) = \frac{\mathcal{L}(\boldsymbol{\theta} + \epsilon \boldsymbol{z}; \mathcal{B}) - \mathcal{L}(\boldsymbol{\theta} - \epsilon \boldsymbol{z}; \mathcal{B})}{2 \epsilon} \boldsymbol{z} \approx \boldsymbol{z} \boldsymbol{z}^\top \nabla \mathcal{L}(\boldsymbol{\theta}; \mathcal{B})\] where \( \boldsymbol{z} \in \mathbb{R}^d \) with \( \boldsymbol{z} \sim \mathcal{N}(\boldsymbol{0}, \boldsymbol{I}_d) \) and \( \epsilon \) is the perturbation scale.

Building on the basic principles of ZO optimization,  \mezo{}~\citep{malladi2023fine} introduces a memory-efficient implementation of ZO-SGD. This approach reduces memory requirements, allowing optimization to proceed with the same memory usage as the inference phase of a model. The key innovation in \mezo{} lies in its use of a consistent random seed $s$ to sample the random vector $\boldsymbol{z}$, ensuring the same perturbation $\bm{z}$ at each step.

\subsection{Diagonal Hessian Approximation}
While zeroth-order methods like \mezo{} provide valuable tools for gradient estimation, optimization can be significantly enhanced by incorporating second-order information, such as curvature. However, directly computing and applying the full Hessian matrix is computationally expensive, particularly in high-dimensional parameter spaces. Specifically, directly applying the Hessian pre-conditioner by calculating the inverse Hessian and multiplying it with the gradient vector at each iteration \( \boldsymbol{H}^{-1} \boldsymbol{g} \) is particularly computationally expensive. To address this challenge, inexact Newton methods have been developed, where approximations of the Hessian are used instead of the full matrix~\citep{dembo1982inexact,bollapragada2019exact,xu2020newton}. 

A simple yet effective alternative is to approximate the Hessian by its diagonal elements, which reduces computational complexity while retaining useful curvature information. In this approach, a general descent direction can be written as follows:
\[
\Delta{\boldsymbol{\theta}} \approx \text{diag}(\boldsymbol{H})^{-1} \boldsymbol{g},
\]
where \( \text{diag}(\boldsymbol{H}) \) represents the diagonal elements of the Hessian matrix. This method enhances optimization by enabling efficient inverse Hessian application and supporting inexact Newton methods, providing improved convergence in complex problems.

\vspace{-0.5em}
\section{Method}
In this section, we formally present \helen{} in Section~\ref{sec_3_2}, with pseudo-code provided in Algorithm~\ref{alg:helen}. In Section~\ref{sec_3_4}, we introduce A-GNB, followed by a detailed discussion of layer-wise clipped diagonal Hessian in Section~\ref{sec_3_5}.

\subsection{Motivation}
\label{sec_3_1}
\textbf{Highly variable curvature across different layers and parameters.} Fine-tuning large language models (LLMs) has become essential for achieving state-of-the-art performance on various downstream tasks. Commonly employed first-order optimizers such as Stochastic Gradient Descent (SGD)\citep{robbins1951stochastic}, Adam\citep{diederik2014adam}, and AdamW~\citep{hutter2017decoupled} have proven effective in this regard. However, these methods require substantial memory, making them difficult to apply to large models in memory-constrained environments. To mitigate this, zeroth-order (ZO) optimizers, such as MeZO~\citep{malladi2023fine}, have been introduced, offering memory-efficient solutions by approximating gradients through forward passes. Nevertheless, even with memory savings, existing ZO methods encounter significant challenges when dealing with heterogeneous curvatures in LLMs, which can lead to inefficient convergence and sub-optimal solutions. One key challenge is the inability of first-order optimizers to adapt to the highly variable curvature across different layers and parameters in large models~\citep{sagun2016eigenvalues, ghorbani2019investigation, zhang2022eva}. While techniques that incorporate second-order information—such as curvature-aware methods—have shown promise in improving optimization efficiency~\citep{liu2023sophia, tran2022better, jahani2021doubly}, they are challenging to integrate into ZO optimizers due to the noise from label-sampling and the difficulty of computing or approximating the Hessian efficiently in high-dimensional spaces.
% First-order optimization methods, such as gradient descent~\citep{robbins1951stochastic} and Adam~\citep{diederik2014adam}, rely solely on gradient information for updating parameters. In regions with significant curvature variations, these methods may suffer from oscillations or even divergence due to improper step size selection~\citep{liu2023sophia, tran2022better}. 
% As demonstrated in Figure\ref{fig:ablation_study}, 

\textbf{Limitation of EMA to balance short-term gradient noise and long-term convergence.} A commonly used technique to manage these curvature variations is the Exponential Moving Average (EMA), which smooths the gradient updates over iterations. However, EMA alone can be insufficient for highly non-convex loss landscapes, especially when it lacks mechanisms to adaptively adjust the learning rate in the presence of sharp changes in curvature. Without annealing, EMA risks accumulating excessive bias over time, particularly when gradients are noisy, leading to suboptimal convergence. This issue is compounded when the optimizer needs to balance short-term gradient noise and long-term convergence, calling for more sophisticated strategies to mitigate these effects.

\textbf{Challenge in managing extreme curvature values using Universal clipping.} Furthermore, clipping the Hessian to manage extreme curvature values is another widely adopted strategy. Sophia~\citep{liu2023sophia}, for example, performs global clipping with value 1 of Hessian-based updates to ensure numerical stability, which essentially can slow down the convergence. While effective at curbing extreme updates, applying a universal clipping threshold across all parameters is inherently suboptimal for models with heterogeneous curvatures. A universal clip might suppress meaningful gradient information in some layers while insufficiently addressing extreme Hessian values in others, thus limiting the optimizer's ability to adaptively handle the diverse learning dynamics across layers~\citep{tran2022better}. This approach may result in slower convergence or failure to escape saddle points and local maxima, where more flexible, curvature-aware updates are required~\citep{yao2020pyhessian}.

% The use of momentum alone often fails to achieve comparable performance to the standard SGD method. By incorporating curvature information, the algorithm can more accurately capture the local geometric structure of the objective function, enabling adaptive step size adjustments in different directions~\citep{tran2022better, yao2021adahessian, malladi2023fine, zhao2024second}. This enhances the stability and efficiency of parameter updates, mitigating the instability problem and accelerating convergence. 
% Although utilizing second-order information in training is promising, it significantly increases GPU memory usage, especially during fine-tuning of large models. In contrast, \mezo{} introduces MeZO-SGD, a key innovation that integrates SPSA, SGD, and memory-efficient forward passes to fine-tune LLMs, reducing memory requirements to between $\frac{1}{12}$ and $\frac{1}{7}$ of those needed for fine-tuning, depending on model size. This substantial reduction in memory usage enables the incorporation of second-order information, a process traditionally hindered by computational and memory constraints. 
To overcome these limitations, \helen{} addresses both the limitation of EMA and the issue of global Hessian clipping. \helen{} introduces a layer-wise adaptive clipping mechanism that adjusts Hessian updates based on the curvature of each individual layer. This approach ensures that clipping is applied precisely where needed, allowing for more efficient updates and faster convergence across a variety of model architectures. Additionally, we integrate an annealing mechanism into the EMA of the gradients to dynamically adjust the learning rate, mitigating the bias accumulation problem in non-convex landscapes.

We instantiate Gradient Descent, Adam, Newton’s method, Sophia, and \helen{} on a simplified 2D problem to illustrate the advantages \helen{}, as shown in Figure~\ref{fig:landscape}. A visual comparison of the methods reveals that while Adam and Gradient Descent struggle to converge effectively, Newton’s method and Sophia find it hard to maintain stability when facing heterogeneous curvature, whereas \helen{} succeeds.
Refer to Section \ref{sec:Experiments} for a more comprehensive empirical analysis, including up to 20$\times$ faster convergence and improved accuracy across various tasks and datasets.

% \vspace{-1em}

\subsection{\helen{}: \underline{He}ssian Layer-wise C\underline{l}ipping and Gradi\underline{e}nt An\underline{ne}aling}
\label{sec_3_2}
In \helen{}, we introduce an annealing mechanism to mitigate bias in SPSA-estimated gradients, combined with a clipped diagonal Hessian pre-conditioner that adjusts parameter update step sizes based on layer-wise curvature. First, the gradient is calculated using the SPSA, while the diagonal Hessian is efficiently estimated by the proposed new A-GNB method, to eliminate the noise incured in sampling labels from the model output used in GNB and Sophia. At each iteration, SPSA provides an estimate $\boldsymbol{g}_t$ using two forward passes with random perturbations, and A-GNB returns $\boldsymbol{h}_t$, the diagonal Hessian of the mini-batch loss. 

We apply an exponential moving average (EMA) to both the gradient and diagonal Hessian estimates to reduce noise and improve stability. To further enhance convergence, we apply layer-wise magnitude-based clipping to the diagonal Hessian, ensuring extreme values do not disproportionately affect parameter updates. 
% This method, inspired by Sophia, is designed to address the challenges seen in vanilla Newton's method, where using the raw Hessian as a pre-conditioner can lead to poor convergence and local maxima.
% Our approach extends Sophia's applicability, demonstrating scalability on models up to 1.3B parameters.
We provide our pseudo code in Algorithm~\ref{alg:helen} and each module description in the following section in details.

\subsection{EMA of diagonal Hessian estimates}
\label{sec_3_3}
When using a mini-batch to compute the local Hessian (curvature), the resulting estimates are often noisy. The Hessian diagonal can fluctuate significantly across different parameter dimensions of the problem. Inspired by the exponential moving average (EMA) of gradient moments in Adam, we apply a similar technique to reduce noise in the Hessian diagonal estimates over iterations. The updated Hessian diagonal is computed in the following:
\[
\boldsymbol{h}_t = \beta_2 \boldsymbol{h}_{t-k} + (1 - \beta_2) \hat{\boldsymbol{h}}_t,
\]
where \( \boldsymbol{h}_t \) represents the denoised Hessian diagonal at iteration \( t \) and \( \hat{\boldsymbol{h}}_t \) is the current estimate of the diagonal at the \( k \)-th iteration.

\begin{algorithm}[H]
    \caption{\helen{} with Layer-wise Clipping}
    \label{alg:helen}
    \begin{algorithmic}[1]
        \STATE {\bf Input:} Initial parameters $\boldsymbol{\theta}_1$, step budget $T$, learning rate schedule $\{\eta_t\}_{t=1}^T$, hyperparameters $\lambda_i$, $\gamma$, $\beta_1$, $\beta_2$, $\epsilon$, $\epsilon$, and Hessian estimator choice Estimator : A-GNB.
        \STATE Set $\boldsymbol{m}_
        {0} = 0$,  $h_{0} = 0$
        \FOR{$t=1$ {\bf to} $T$}
        \STATE Compute minibatch loss $L_t(\boldsymbol{\theta}_t)$ based on SPSA.
        \STATE Estimate gradient $\boldsymbol{g}_{t}$ from $\nabla L_t(\boldsymbol{\theta}_t)$ obtained from SPSA.
        \STATE $\alpha = \text{Anneal}(t)$ 
        \STATE $\boldsymbol{m}_{t} = \beta_1 \boldsymbol{m}_{t-1} + \alpha \boldsymbol{g}_{t}$ 
        \IF{ $t \ \mathrm{mod} \ k = 1$}
        \STATE Compute diagonal Hessian estimator $\hat{\boldsymbol{h}}_t = \textup{Estimator}(\boldsymbol{\theta}_t)$.
        \STATE $\boldsymbol{h}_t = \beta_2 \boldsymbol{h}_{t-k} + (1 - \beta_2) \hat{\boldsymbol{h}}_{t}$ 
        \ELSE
        \STATE $\boldsymbol{h}_t = \boldsymbol{h}_{t-1}$
        \ENDIF
        \STATE Apply weight decay: $\boldsymbol{\theta}_t = \boldsymbol{\theta}_t - \eta_t \epsilon \boldsymbol{\theta}_t$
        \STATE  For each layer $i$, update:
        $
        \boldsymbol{\theta}_{t+1,i} = \boldsymbol{\theta}_{t,i} - \eta_t \cdot \frac{\boldsymbol{m}_{t,i}}{\gamma \cdot \max(\boldsymbol{h}_{t,i}, \lambda_i) + \epsilon}
        % \]
        $
        \ENDFOR
    \end{algorithmic}
    \hrule
    \begin{algorithmic}[1]
        \STATE {\bf Subroutine} Anneal(t)
      \begin{flalign}
            \alpha &\leftarrow \beta_1 + (1 - \beta_1) \cdot \exp\left(-t/T \right) && \label{eq:anneal2}
        \end{flalign}
    \end{algorithmic}
\end{algorithm}

\subsubsection{Annealing Mechanism}
As illustrated in Figure~\ref{fig:ablation_study}, the native gradient EMA introduces bias, which adversely affects the training process and results in an increase in loss during the later stages. To mitigate these issues, we introduce a gradient annealing mechanism to work in tandem with EMA. Our annealing strategy adjusts the contribution of the current gradient dynamically by reducing the learning rate as training progresses. This adaptive adjustment is crucial for ensuring that the model becomes less influenced by noisy or outdated gradients in later stages. Specifically, the annealing function modulates the step size $\alpha$, which controls the weight assigned to the most recent gradient update, allowing the optimizer to smoothly reduce the impact of past gradients.
Notably, our annealing approach is simple to implement, requiring the tuning of only a single hyperparameter.

At each iteration, the annealing mechanism computes $\alpha$ using an exponential decay schedule in \eqref{eq:anneal2}, where $T$ is a predefined hyperparameter controlling the annealing rate. As $t$ increases, $\alpha$ gradually decreases, reducing the learning rate and thus mitigating the bias introduced by EMA. This ensures that, in the later stages of training, the model focuses more on stable gradient estimates and less on noisy or rapidly changing updates. The annealing mechanism is incorporated into the EMA update rule as line 7 in Algorithm~\ref{alg:helen}.
Via dynamical $\alpha$ the annealing mechanism ensures that the optimizer can effectively balance short-term noise with long-term convergence.

\subsection{Asymptotic Gauss-Newton-Bartlett (A-GNB) Estimator}
\label{sec_3_4}
The original GNB 
\citep{martens2020new}
estimator relies on sampled labels \(\hat{y}_b\) drawn from the categorical distribution based on the model’s output. However, this induces stochasticity due to label sampling, which could be problematic when label distributions are highly imbalanced, as is the case in large language model (LLM) training.
We propose a new estimator, which we call the \textbf{Asymptotic Gauss-Newton-Bartlett (A-GNB) Estimator}, that replaces the sampled labels \(\hat{y}_b\) with the true labels \(y_b\) and asymptotically converges to the true diagonal of the Hessian. 
\[
\nabla^2_{\boldsymbol{\theta}} L(\boldsymbol{\theta}) = J_{\boldsymbol{\theta}} f(\boldsymbol{\theta}, \boldsymbol{x}) \cdot S \cdot J_{\boldsymbol{\theta}} f(\boldsymbol{\theta}, \boldsymbol{x})^\top
\]
where the diagonal approximation of the Gauss-Newton matrix for the mini-batch loss becomes:
\[
\frac{1}{B} \sum_{b=1}^{B} \nabla_{\boldsymbol{\theta}} L(f(\boldsymbol{\theta}, \boldsymbol{x}_b), y_b) \odot \nabla_{\boldsymbol{\theta}} L(f(\boldsymbol{\theta}, \boldsymbol{x}_b), y_b)
\]
where \(J_{\boldsymbol{\theta}} f(\boldsymbol{\theta}, x)\) is the Jacobian of the model's output \(f(\boldsymbol{\theta}, x)\) with respect to the parameters \(\boldsymbol{\theta}\), and \(S = \frac{\partial^2 L(t, y)}{\partial t^2}\) is the second-order derivative of the loss with respect to the logits \(t\). In contrast to GNB estimator, where \(\hat{y}_b\) was used, we replace it by \(y_b\), the true label, thereby avoiding the need for post-output label sampling. By eliminating the stochasticity induced by sampled labels \(\hat{y}_b\), we reduce the variance caused by sampling noise, and it is especially beneficial in imbalanced data scenarios, when samples from minor class is seldomly selected unless sampling significantly many times.

Since the true label \(y_b\) is used instead of the sampled label \(\hat{y}_b\), the expectation becomes the true expected gradient. For a mini-batch of size \(B\), the new estimator is:
\[
\mathbb{E} \left[ \frac{1}{B} \sum_{b=1}^{B} \nabla_{\boldsymbol{\theta}} L(f(\boldsymbol{\theta}, \boldsymbol{x}_b), y_b) \odot \nabla_{\boldsymbol{\theta}} L(f(\boldsymbol{\theta}, \boldsymbol{x}_b), y_b) \right]
\]
The gradient terms now correspond directly to the true labels, and their outer product sums up to the true Gauss-Newton approximation of the Hessian. As the batch size \(B \to \infty\), the A-GNB estimator converges to the true Hessian's diagonal:
\[
\lim_{B \to \infty} \frac{1}{B} \sum_{b=1}^{B} \nabla_{\boldsymbol{\theta}} L(f(\boldsymbol{\theta}, \boldsymbol{x}_b), y_b) \odot \nabla_{\boldsymbol{\theta}} L(f(\boldsymbol{\theta}, \boldsymbol{x}_b), y_b) = \nabla_{\boldsymbol{\theta}}^2 L(\boldsymbol{\theta})
\]
This holds because the estimator becomes a sum over the entire dataset, and the variance from label sampling is completely eliminated. Therefore, The A-GNB estimator asymptotically converges to the true diagonal of the Hessian as \(B\) increases.

\begin{algorithm}
\caption{Asymptotic Gauss-Newton-Bartlett (A-GNB)}
\begin{algorithmic}[1]
\STATE Parameters: \( \boldsymbol{\theta} \)
\STATE Draw a mini-batch of the input \( \{x_b\}_{b=1}^B \)
\STATE Compute the logits on the mini-batch \( \{\phi(\boldsymbol{\theta}, x_b)\}_{b=1}^B \)
% \STATE Sample \( \hat{y}_b \sim \text{Softmax}(\phi(\boldsymbol{\theta}, \boldsymbol{x}_b)) \)
\STATE Calculate \( \hat{g} = \nabla \left( \frac{1}{B} \sum\boldsymbol{m}_i{b=1}^{B} L (f(\phi(\boldsymbol{\theta}, \boldsymbol{x}_b), y_b)) \right) \)
\STATE return \( B \cdot \hat{g} \odot \hat{g} \)
\end{algorithmic}
\end{algorithm}

\subsection{Laywerwise Clipped Diagonal Hessian to help Newton's method}
\label{sec_3_5}

As discussed in the motivating examples, fine-tuning LLMs and optimizing non-convex functions pose challenges for Newton’s method, which uses the Hessian as a pre-conditioner. The method may converge to local maxima rather than local minima. Moreover, the inaccuracy of Hessian estimates and changes in the Hessian along the optimization trajectory can render second-order information unreliable. To address these issues, we draw inspiration from Sophia. While Sophia performs clipping on the Newton update $\boldsymbol{H}^{-1}\boldsymbol{g}$, we propose a more robust approach by applying \textbf{layer-wise clipping directly to the Hessian matrix $\boldsymbol{H}$}. Clipping the Newton update $\boldsymbol{H}^{-1}\boldsymbol{g}$ introduces excessive bias, potentially distorting useful gradient information, whereas clipping extreme Hessian values more effectively preserves essential second-order information.

In particular, we improve convergence rates by (1) considering only the positive entries of the diagonal Hessian and (2) introducing per-coordinate, layer-wise clipping of the Hessian values. This approach adapts the clipping threshold across layers to account for the diverse curvature across different parts of the model. Given a clipping threshold $\lambda_i > 0$ for layer $i$, the clipping function is defined as:
\[
\text{clip}(\boldsymbol{h}_i) = \max(\boldsymbol{h}_i, \lambda_i), \quad \lambda_i \in \mathbb{R},
\]
where all operations are applied element-wise for each layer. The update rule for layer $i$ is then written as:
$$
\boldsymbol{\theta}_{t+1,i} = \boldsymbol{\theta}_{t,i} - \eta \cdot \frac{\boldsymbol{m}_{t,i}}{\gamma \cdot \max(\boldsymbol{h}_{t,i}, \lambda_i) + \epsilon},
$$
where $\epsilon > 0$ is a small constant to avoid division by zero, and $\lambda_i$ controls the fraction of clipped Hessian values per layer. By applying layer-wise clipping, we ensure that the optimizer is capable of adapting to the curvature of each layer individually, leading to improved stability and convergence rates across different parts of the model. We present the pseudo-code of our Hessian-clipped method in Algorithm~\ref{alg:helen}.

% \begin{minipage}[b]{\textwidth}
% \begin{algorithm}[H]
%     \caption{Anneal}
%     \label{alg:anneal}
%     \begin{algorithmic}[1]
%         \STATE {\bf Input:} Current iteration $t$, annealing rate $\rho$
%         \STATE Compute $\alpha_t = \frac{1}{1 + \rho t}$
%         \STATE {\bf Output:} Annealing factor $\alpha_t$
%     \end{algorithmic}
% \end{algorithm}
% \end{minipage}

For further information about the differences of \helen{} with previous work, please reference the related work in Appendix~\ref{app:related_work}.

% \begin{minipage}[b]{\textwidth}
% \begin{algorithm}[H]
%     \caption{\sophia~}
%     \label{alg:hess_opt}
%     \begin{algorithmic}[1]
%         \STATE {\bf Input:} $\boldsymbol{\theta}_1$, learning rate $\{\eta_t\}_{t=1}^T$,  hyperparameters $\lambda,\gamma, \beta_{1},\beta_{2}$, $\epsilon$, and estimator choice Estimator $\in\{$Hutchinson, Gauss-Newton-Bartlett$\}$\STATE Set $\boldsymbol{m}_i{0} = 0$, $v_{0} = 0$, $h_{1-k} = 0$
%         \FOR{$t=1$ {\bf to} $T$}
%         \STATE Compute minibach loss $L_t(\boldsymbol{\theta}_t)$.
%         \STATE Compute $\boldsymbol{g}_{t} = \nabla L_t(\boldsymbol{\theta}_t)$.
%         \STATE  $\boldsymbol{m}_{t} = \beta_{1} \boldsymbol{m}_{t-1} + (1 - \beta_{1}) \boldsymbol{g}_{t}$ 
%         \IF{ $t\ \mathrm{mod}\  k = 1$}
%         \STATE Compute $\hat{h}_{t} = \textup{Estimator}(\boldsymbol{\theta}_t)$.
%         \STATE  $h_t = \beta_2 \boldsymbol{h}_{t-k} + (1 - \beta_2) \hat{h}_{t}$
%         \ELSE \STATE  $h_t = h_{t-1}$
%         \ENDIF
%         \STATE $\boldsymbol{\theta}_t = \boldsymbol{\theta}_t - \eta_t\lambda \boldsymbol{\theta}_t$ (weight decay)
%         \STATE $\boldsymbol{\theta}_{t+1} = \boldsymbol{\theta}_{t} - \eta_t \cdot \clip(\boldsymbol{m}_it / \max\{\gamma \cdot h_t,\epsilon\},1)$
%         \ENDFOR
%     \end{algorithmic}
% \end{algorithm}
% \end{minipage}

\section{Convergence Analysis}

In this section, we provide a theoretical analysis of the convergence of our proposed method. The key improvement in our method comes from the use of layer-wise parameters \( \lambda_i \), which reduces the dependency on the total dimension \( d \) and instead relies on the maximum layer dimension \( \max_i d_i \).

The theoretical bound for the number of steps \( T \) in our method is given by the following theorem with two assumptions:

\begin{assumption}
Let \( L: \mathbb{R}^d \to \mathbb{R} \) be a loss function. We assume \( L \) is twice continuously differentiable strictly convex, and has a unique minimizer denoted by \( \boldsymbol{\theta}^* \).
For each layer \( i \), define \( \mu_i \) as the minimum eigenvalue of the Hessian matrix of \( L \) concerning the parameters of that layer evaluated at its minimizer:
\[ \mu_i \equiv \lambda_{\min}(\nabla^2_{\boldsymbol{\theta}_i} L(\boldsymbol{\theta}^*)) \]
where \( \nabla^2_{\boldsymbol{\theta}_i} \) denotes the Hessian with respect to the parameters of the \( i \)-th layer.

\end{assumption}

\begin{assumption} Regarding the Hessian \( \nabla^2 L(\boldsymbol{\theta}) \) of the loss function, we assume:
\begin{itemize}
  \item There exists a radius \( R_i > 0 \) such that for any \( \boldsymbol{\theta}_i, \boldsymbol{\theta}_i' \in \mathbb{R}^d \) with \( \|\boldsymbol{\theta}_i - \boldsymbol{\theta}_i'\|_2 \leq R_i \), the following inequality holds:
  \[
  \left\| \nabla^2 L(\boldsymbol{\theta}_i' \mid  \boldsymbol{\theta}_{-i})^{-1} \nabla^2 L(\boldsymbol{\theta}_i \mid \boldsymbol{\theta}_{-i}) \right\|_2 \leq 2
  \]
  where  \( \|\cdot\|_2 \) represents the spectral norm.
\end{itemize}
\end{assumption}

\begin{theorem}
Under Assumptions 1 and 2, let \( \eta = \frac{1}{2} \) and \( \lambda_i = \frac{R_i}{2\sqrt{d_i}} \). The update reaches a loss at most \( \epsilon \) in
\[
T \leq \max_{i} \left\{ d_i \cdot \left( L(\boldsymbol{\theta}_{0,i}) - \min L \right) + \ln\left(\frac{\mu_i R_i^2}{32 d_i \epsilon}\right) \right\}.
\]
steps, where \( L \) is the loss function, \( \boldsymbol{\theta}_{0,i} \) is the initial parameter vector for layer \( i \), \( \mu_i \) is the strong convexity constant for layer \( i \), and \( R_i \) is the bound on the distance between \( \boldsymbol{\theta}_{0,i} \) and \( \boldsymbol{\theta}_i^* \).
\end{theorem}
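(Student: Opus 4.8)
The plan is to follow the deterministic two-phase convergence argument of \sophia{}~\citep{liu2023sophia}, but carried out one layer at a time, and then take the maximum over layers. The reason the bound improves from $\mathcal O(d)$ to $\mathcal O(\max_i d_i)$ is structural: \sophia{}'s global clipping threshold forces its descent-phase count to be summed over all $d$ coordinates, whereas the layer-wise threshold $\lambda_i$ decouples the analysis so that layer $i$'s count depends only on its own width $d_i$. Concretely, I would first pass to the idealized regime in which $\boldsymbol{m}_{t,i}=\nabla_{i}L(\boldsymbol{\theta}_t)$, $\gamma=1$, and $\boldsymbol{h}_{t,i}$ is the exact diagonal Hessian, so that the per-layer update reduces to the clipped damped-Newton step $\boldsymbol{\theta}_{t+1,i}=\boldsymbol{\theta}_{t,i}-\tfrac12\,\nabla_{i}L/\max(\boldsymbol{h}_{t,i},\lambda_i)$. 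Because all layers share the step index $T$ and the claim is a maximum over $i$, it suffices to bound, for each fixed $i$, the number of steps after which the loss restricted to that layer is at most $\epsilon$; Assumption 2, stated through the block Hessian $\nabla^2 L(\boldsymbol{\theta}_i\mid\boldsymbol{\theta}_{-i})$, is exactly what lets me treat each layer as its own strictly convex subproblem with curvature floor $\mu_i$.

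The second step is a one-step descent inequality. Expanding $L$ to second order along the update and using Assumption 2 to replace the Hessian at the intermediate point by at most $2\nabla_i^2 L(\boldsymbol{\theta}_t)$, I expect the choice $\eta=\tfrac12$ to make the quadratic remainder at most half of the first-order gain, yielding
\[
L(\boldsymbol{\theta}_t)-L(\boldsymbol{\theta}_{t+1})\;\ge\;\frac14\sum_{j}\frac{\bigl(\partial_{(i,j)}L(\boldsymbol{\theta}_t)\bigr)^2}{\max(\boldsymbol{h}^{(i,j)}_t,\lambda_i)} ,
\]
where the surviving constant $\tfrac14$ is forced by $\boldsymbol{h}^{(i,j)}_t\le\max(\boldsymbol{h}^{(i,j)}_t,\lambda_i)$. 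A mandatory side condition is that the iterate never leaves the ball of radius $R_i$ on which Assumption 2 is valid, and this is precisely what the calibration $\lambda_i=R_i/(2\sqrt{d_i})$ secures: flooring the denominator at $\lambda_i$ caps each coordinate's displacement, and the $\sqrt{d_i}$ factor converts that coordinate bound into the $\ell_2$ bound $\|\boldsymbol{\theta}_{t+1,i}-\boldsymbol{\theta}_{t,i}\|_2\le R_i$.

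I would then split each layer's trajectory into two phases. In the \emph{clipping phase}, while the iterate is still far from $\boldsymbol{\theta}_i^*$, the floor $\lambda_i$ is active and a not-yet-converged coordinate forces a gradient of magnitude at least of order $\lambda_i\mu_i$; feeding this into the descent inequality gives a \emph{fixed} per-step decrease of order $\mu_i\lambda_i^2=\mu_i R_i^2/(4 d_i)$, so this phase lasts at most $\mathcal{O}\!\left(d_i\bigl(L(\boldsymbol{\theta}_{0,i})-\min L\bigr)\right)$ steps up to the constants the statement absorbs, which is the first term. Once the loss drops below the threshold $\mu_i R_i^2/(32 d_i)$ the iterate is inside the radius $R_i$ and the floor is inactive, so the update becomes a genuine damped-Newton step; Assumption 2 makes the local Hessian essentially constant, the distance to $\boldsymbol{\theta}_i^*$ contracts by the factor $1-\eta=\tfrac12$ each step independently of conditioning, and translating this geometric decay into a loss bound via $L-\min L\asymp\tfrac{\mu_i}{2}\|\cdot-\boldsymbol{\theta}_i^*\|_2^2$ yields the logarithmic term $\ln\!\bigl(\mu_i R_i^2/(32 d_i\epsilon)\bigr)$. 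Adding the two phase counts and maximizing over $i$ gives the claimed bound.

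The step I expect to be the main obstacle is transporting \sophia{}'s analysis, which clips the \emph{update} $\boldsymbol{H}^{-1}\boldsymbol{g}$ and hence directly caps the step length, to \helen{}, which instead clips the \emph{Hessian} from below; under the latter the per-coordinate step $\tfrac12|\partial_{(i,j)}L|/\lambda_i$ is not bounded a priori, so both the fixed-decrease estimate of the clipping phase and the radius-invariance claim must be re-derived rather than quoted. Closely related is making the per-layer decomposition rigorous: the loss couples all layers, yet the argument treats each layer in isolation, so I would need strict convexity together with Assumption 2 to control the cross-layer terms and to guarantee that every iterate simultaneously stays in each layer's admissible ball. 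The remaining difficulties — pinning down the exact phase boundary and the constant $32$ inside the logarithm — are bookkeeping once the two phases and the radius argument are secured.
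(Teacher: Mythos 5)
Your proposal follows essentially the same route as the paper: a Sophia-style two-phase argument carried out per layer (a clipping phase with a fixed per-step decrease of order $\mu_i R_i^2/d_i$ giving the $d_i\cdot(L(\boldsymbol{\theta}_{0,i})-\min L)$ term, then an unclipped damped-Newton phase with geometric contraction giving the logarithm), followed by a maximum over layers. The one obstacle you flag — that \helen{} floors the Hessian rather than capping the update — is real but is not actually resolved in the paper either: its Descent Lemma analyzes the update $\boldsymbol{\theta}_i^+=\boldsymbol{\theta}_i-\eta V_i^T\,\text{clip}(V_i\nabla^2L^{-1}V_i^T\nabla L,\lambda_i)$, i.e.\ the Sophia-style clipped Newton direction, so your plan is at least as faithful to Algorithm~\ref{alg:helen} as the paper's own argument.
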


The best known theoretical bound for the number of steps \( T \) required by \sophia{} to reach a loss at most \( \epsilon \) is given by Sophia in which $T_{\text{SOPHIA}} \sim \mathcal{O}(d)$,
where \( d \) is the total dimension of the parameter space. This result implies that the convergence rate depends linearly on the total dimension \( d \), which can lead to slow convergence for models with large parameter spaces. In contrast, our method introduces layer-wise parameters \( \rho_i = \frac{R_i}{2\sqrt{d_i}} \), where \( R_i \) is the bound on the distance between the initial parameters \( \boldsymbol{\theta}_{0,i} \) and the optimal parameters \( \boldsymbol{\theta}_i^* \) for layer \( i \), and \( d_i \) is the dimension of the parameter space for layer \( i \). This layer-wise setting significantly reduces the complexity to $T_{\text{SOPHIA}} \sim \mathcal{O}(\max_i d_i)$, which is the maximum dimension across layers. Besides the lower runtime bound, our method allow each layer to have its own parameter \( \rho_i \), allowing the method to adapt to the specific geometry of each layer. Refer to Appendix \ref{appedix:Convergence Instability of Sophia} for the empirical study on the significant variance using unified parameter clipping across different layers. This flexibility leads to a more efficient optimization process, as each layer is treated independently based on its characteristics. In large models where some layers have much smaller dimensions than others, our method is able to achieve faster convergence by focusing on the most difficult layer with the largest dimension, therefore making our method more scalable for deep models with many layers. Detailed proof can be seen in the Appendix~\ref{app_conv_analysis}.

\section{Experiments}
\label{sec:Experiments}
Since the introduction of the Transformer~\citep{vaswani2017attention}, language models (LMs) have progressively developed through the use of different Transformer-based architectures. One of the iconic work is BERT~\citep{devlin2018bert}, which is based on the encoder architecture of Transformer and pre-trained with techniques like masked language modeling. As the field of natural language processing (NLP) develops, more powerful decoder-only LLMs also have shown their great potential. 

Therefore, to rigorously evaluate the capability and universality of \helen{}, we follow the experiments conducted in \mezo{} on both medium-sized masked LMs (RoBERTa-large~\citep{liu2019roberta}, 350M) and auto-regressive LLMs (OPT-1.3B~\citep{zhang2023opt}) under both few-shot and many-shot settings. Additionally, all optimization algorithms are evaluated with three tuning methods: fine-tuning (FT) and two parameter-efficient fine-tuning (PEFT) methods, LoRA~\citep{hu2021lora} and prefix-tuning~\citep{li2021prefix}. 
We also do experiments with zeroth-order (ZO) versions of some optimizers as well as ZO-SGD variants introduced in \cite{zhang2024revisiting}, and present them in Section~\ref{other_zo_alg}. 
% We also include the comparison of \helen{} with zero-shot prompting, in-context learning (ICL), linear probing (LP), and fine-tuning (FT) with Adam. 

The experimental results show that across all settings, \helen{} not only outperforms \mezo{} on most datasets by approximately 1.5\% on average, but also makes the convergence process of gradient-free optimization more stable and faster, boosting to $20\times$ times the original speed.

\subsection{Masked Language Models}
For masked LMs, we conduct experiments using RoBERTa-large on three types of NLP tasks, sentiment classification, natural language inference, and topic classification with $k=16$ examples per class. We run \helen{} for 5,000 steps and FT for 1,000 steps.
The experimental results are listed in Table~\ref{tab:roberta_k16}.

\begin{table}[ht]
%\vspace*{4mm} 

\centering
\setlength{\tabcolsep}{3pt}

% \scalebox{0.85}{
\resizebox{\textwidth}{!}{
    \begin{tabular}{lcccccc}
    \toprule
    Task Type & \multicolumn{1}{c}{\textbf{SST-2}} & \multicolumn{1}{c}{\textbf{SST-5}} & \multicolumn{1}{c}{\textbf{SNLI}}  & \multicolumn{1}{c}{\textbf{MNLI}} & \multicolumn{1}{c}{\textbf{RTE}} & \multicolumn{1}{c}{\textbf{TREC}} \\
    & \multicolumn{2}{c}{------ sentiment ------} & \multicolumn{3}{c}{------ natural language inference ------} & \multicolumn{1}{c}{--- topic ---}\\
    
    \midrule
    Zero-shot & 79.0  & 35.5  & 50.2  & 48.8  & 51.4  & 32.0\\
    LP    &  76.0 ($\pm$2.8) & 40.3 ($\pm$1.9) & 66.0 ($\pm$2.7) & 56.5 ($\pm$2.5) & 59.4 ($\pm$5.3) & 51.3 ($\pm$5.5)\\

    \midrule
    FT  & 91.9 ($\pm$1.8) & 46.7 ($\pm$1.9) & 77.5 ($\pm$2.6) & 70.0 ($\pm$2.3) & 66.4 ($\pm$7.2) & 85.0 ($\pm$2.5)\\
    FT(LoRA) & 91.4 ($\pm$1.7) & 46.7 ($\pm$1.1) & 74.9 ($\pm$4.3) & 67.7 ($\pm$1.4) & 66.1 ($\pm$3.5) & 82.7 ($\pm$4.1)\\
    FT(Prefix) & 91.9 ($\pm$1.0) & 47.7 ($\pm$1.1) & 77.2 ($\pm$1.3) & 66.5 ($\pm$2.5) & 66.6 ($\pm$2.0) & 85.7 ($\pm$1.3)\\

    \midrule
    \mezo{}  & 90.5 ($\pm$1.2) &45.5 ($\pm$2.0) & 68.5 ($\pm$3.9) & 58.7 ($\pm$2.5) & 64.0 ($\pm$3.3) & 76.9 ($\pm$2.7)\\
    \mezo{} (LoRA) & 91.4 ($\pm$0.9) &43.0 ($\pm$1.6) &69.7 ($\pm$6.0) &64.0 ($\pm$2.5) & 64.9 ($\pm$3.6) & 73.1 ($\pm$6.5) \\
    \mezo{} (Prefix) & 90.8 ($\pm$1.7) & 45.8 ($\pm$2.0) &71.6 ($\pm$2.5) & 63.4 ($\pm$1.8) & 65.4 ($\pm$3.9) &80.3 ($\pm$3.6) \\

    \midrule
    \helen & \textbf{91.6} ($\pm$2.3) &44.7 ($\pm$0.8) &\textbf{70.0} ($\pm$2.6) & \textbf{58.9} ($\pm$1.1) &\textbf{65.7} ($\pm$1.2) &\textbf{78.1} ($\pm$1.5) \\
    \helen{} (LoRA) &90.6 ($\pm$0.3) &41.8 ($\pm$1.0) &68.5 ($\pm$2.0) & 59.0 ($\pm$1.1) &\textbf{66.8} ($\pm$3.2)  &67.4 ($\pm$2.1) \\
    \helen{} (Prefix) &\textbf{91.7} ($\pm$0.6) &\textbf{46.0} ($\pm$0.7) &69.5 ($\pm$1.9) &\textbf{64.6} ($\pm$2.1) &\textbf{66.1} ($\pm$1.8) &77.4 ($\pm$2.1) \\
    
    \bottomrule
    \end{tabular}
}
    \caption{Experiments on RoBERTa-large (350M parameters, $k=16$). PEFT represents the LoRA and prefix and we report the best of them. All reported numbers are averaged accuracy (standard deviation) across 5 runs.} % With full-parameter fine-tuning, \nameo outperforms \mezo{} with \textbf{3.56\%} on average on all datasets.}
    \label{tab:roberta_k16}
\end{table}%

\textbf{\helen{} largely outperforms zero-shot and linear probing.} On all six datasets, \helen{} can stably optimize the pre-trained LM and consistently perform better than zero-shot and linear probing. 
% We also show for several tasks that \helen{} can outperform another ZO algorithm, 

\textbf{\helen{} delivers a $\mathbf{20\times}$ speed improvement over \mezo{} while also maintaining its performance.}
% As \helen{} can speed up the training process, we can use much less training steps to converge. In this case, \helen{} can still achieve comparable performances with \mezo{}. 
With the guidance of layer-wise clipped Hessian information, \helen{} can reach convergence in about 5000 steps on average across the datasets, accelerating the optimization process by approximate $20\times$ times than \mezo{}.
Meanwhile, the results show that \helen{} can still achieve performance on par with \mezo{}, with leading average accuracy of three tuning methods on the dataset SST-2, SST-5, MNLI and RTE.

% \textbf{\helen{} performs closely to back-propagation optimization methods}. 
% The results show that \helen{} has closely comparable performances with FT on 4 out of 6 datasets. 
\vspace{-0.5em}

\subsection{Auto-Regressive LLMs}
LLMs such as GPT-3~\citep{brown2020language}, LLaMA~\citep{touvron2023llama}, and ChatGLM \citep{du2021glm} have become the predominant models in NLP, we include experiments with auto-regressive LLM OPT-1.3B on three different task: text classification, multiple choice, and text generation. We use various datasets from the SuperGLUE benchmark~\citep{wang2019superglue}, which includes the following datasets: SST-2, RTE, CB, WSC, WIC, COPA, and ReCoRD. Additionally, we also experiment on BoolQ~\citep{clark2019boolq} and SQuAD~\citep{rajpurkar2016squad}. 
We run \helen{} with about 10,000 training steps for each dataset. 
The results are summarized in Table~\ref{tab:opt_results}, from which we can have the following observations.

\begin{table*}[t]
\centering
\setlength{\tabcolsep}{4pt}
\resizebox{\textwidth}{!}{
    \begin{tabular}{lcccccccccccc}
    \toprule
    Task  & \tf{SST-2}	& \tf{RTE} & \tf{CB} & \tf{BoolQ} & \tf{WSC} & \tf{WIC} & \tf{COPA} & \tf{ReCoRD} & \tf{SQuAD} \\
    Task type & \multicolumn{6}{c}{--------------------- classification ---------------------} & \multicolumn{2}{c}{--- multiple choice ---} & \multicolumn{1}{c}{-- generation --}\\
    
    \midrule
    Zero-shot &53.4 &53.1 &37.5 &45.7 &44.2 &57.0	&75.0 &70.3  &27.1  \\
    ICL & 80.3 & 53.1 & 48.2 &58.5	& 44.2 & 50.6& 69.0& 71.0&59.0  \\
    % now for multiple choice and generation LP uses head tuning
    LP & 80.3& 52.7 & 44.6	&58.9 &47.1 &50.6&69.0 & 71 & 75.9 \\
    % HT  & 93.4&	62.5&	67.9&	61.2&	59.6&	60.8&	64.5&		55.0&	 & 3.7\\
    
    \midrule
    \mezo{}    &89.6   &55.8	& 77.0   &59.6   &55.0	&58.0& 74.0 & 60.0 & 62.2 \\
    \mezo{} (LoRA) & 90.8	&63.0 & 78.0 & 67.2 & 51.2 & 58.0 & 79.0 &59.8 &67.6\\
    \mezo{} (prefix) & 92.4	&52.8	&66.0 &61.6	 &51.6 &52.8 &74.0 & 56.8 & 56.0 \\
    % ZO-best & 91.4 & 70.8 & 69.6 & 73.8 & 64.4 & 61.1 & 63.7 & 88.0 & 81.7 & 84.7 & 31.4\\
    
    % \midrule
    % \zoadamu{} (2$\times$) &91.4 &68.8 &73.0 &62.6 &52.6 &58.2 &79.0 &58.0 &72.1\\
    % \zoadamu{} {(LoRA)} &90.2 &70.3 &72.0 &74.8 &52.2 &62.2 &79.0 &60.6 &72.1\\
    % \zoadamu{} (prefix) &91.1 &71.1 &75.0 &72.8 &50.2 &65.6 &79.0 &59.0 &71.3\\
    
    \midrule
    \helen{} &\textbf{91.2} &\textbf{64.4} & \textbf{87.0} &\textbf{60.8} & \textbf{55.4} & \textbf{58.4}& 69.0& 55.6& \textbf{63.8}\\
    \helen{} (LoRA) & \textbf{91.4} & 50.6 & 76 &64.0& 49.6 & 52.6 &  \textbf{82.0}& \textbf{60.2} & 60.4 \\
    \helen{} (prefix) & \textbf{92.4} &51.6 & \textbf{74.0} & \textbf{62.5} & \textbf{52} & \textbf{57.2}& \textbf{80.0}& \textbf{58.8} & \textbf{68.4} \\
    
    \midrule
    FT {\fontsize{8}{9.6}\selectfont (12$\times$ memory)} & {90.8} & {73.4} &	{77} &	{70.2}	& {53} &	{60.2} &{81.0} & {59.6} &{70.9} \\
    % {\tiny (12x memory) } \vspace{-2pt}\\
    
    \bottomrule
    \end{tabular}
}
    \caption{
        Experiments on OPT-1.3B (with 1000 examples). ICL: in-context learning; LP: linear probing; FT: full-parameter fine-tuning with Adam. We highlight the best results in bold to facilitate comparison.
    }
    % \vspace{-10pt}
    \label{tab:opt_results}
\end{table*}

\textbf{\helen{} has clear performance advantages compared with \mezo{}.} Table~\ref{tab:opt_results} shows that \helen{} with its LoRA and prefix variants can consistently outperform \mezo{}. 
Specifically, the average performances of \helen{} with its LoRA and prefix variants remarkably exceed \mezo{}'s by 5.3\%, 2.1\% and 1.3\% on CB, SQuAD and COPA, respectively.

% \vspace{-0.5em}
\begin{figure*}[t]
    \centering

    \begin{subfigure}[b]{0.23\textwidth}
        \centering
        \includegraphics[width=\textwidth]{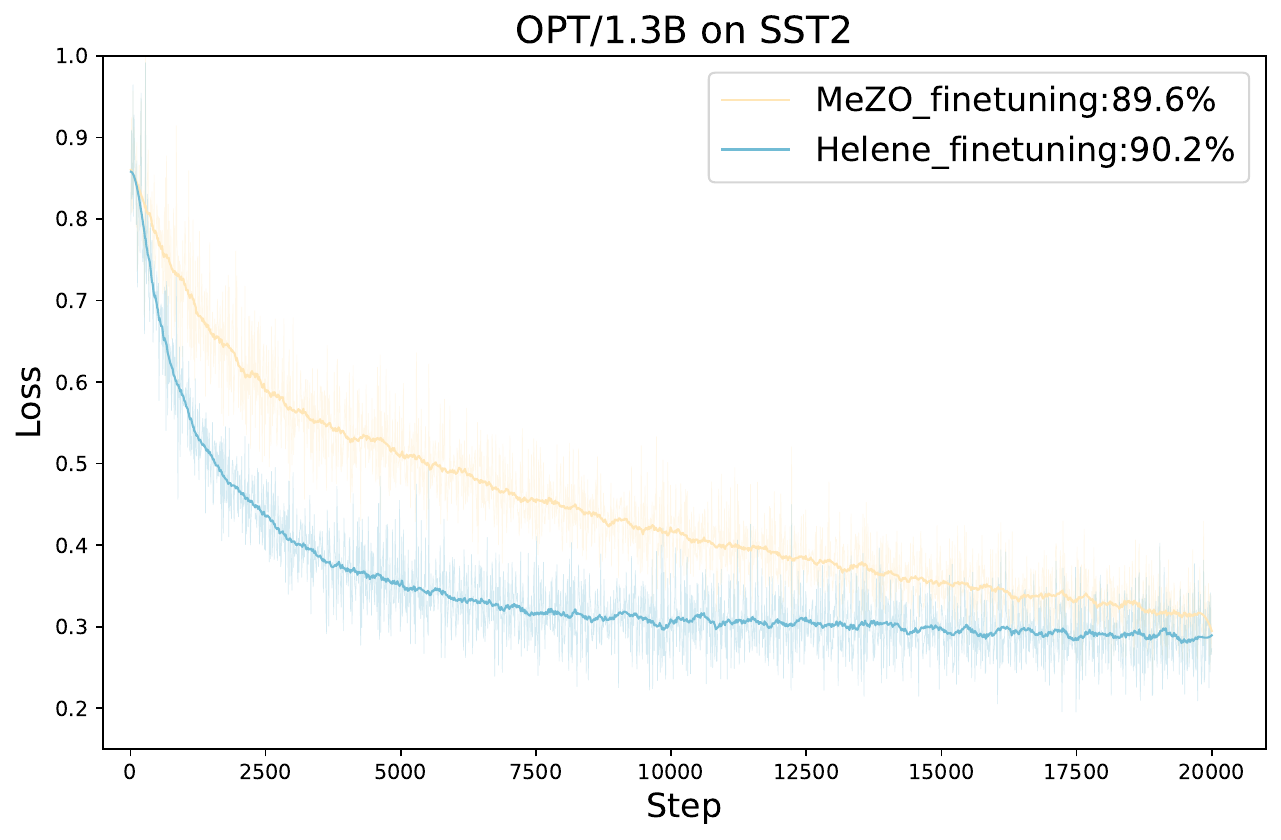}
        % \captionsetup{font=small}
        \caption{OPT-1.3B on SST2}
        \label{fig:comparison1}
    \end{subfigure}
    \hfill
    \begin{subfigure}[b]{0.23\textwidth}
        \centering
        \includegraphics[width=\textwidth]{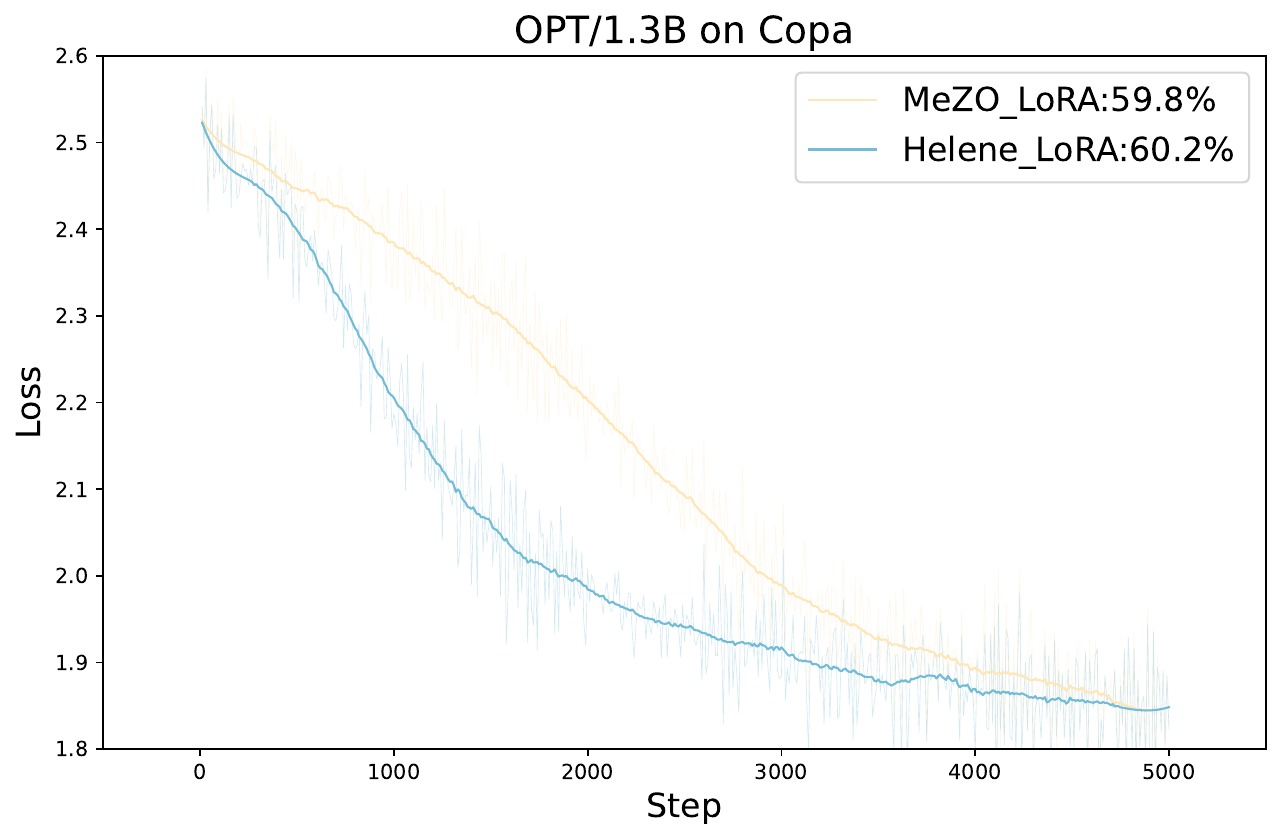}
        \caption{OPT-1.3B on COPA}
        \label{fig:comparison2}
    \end{subfigure}
    \hfill
    \begin{subfigure}[b]{0.23\textwidth}
        \centering
        \includegraphics[width=\textwidth]{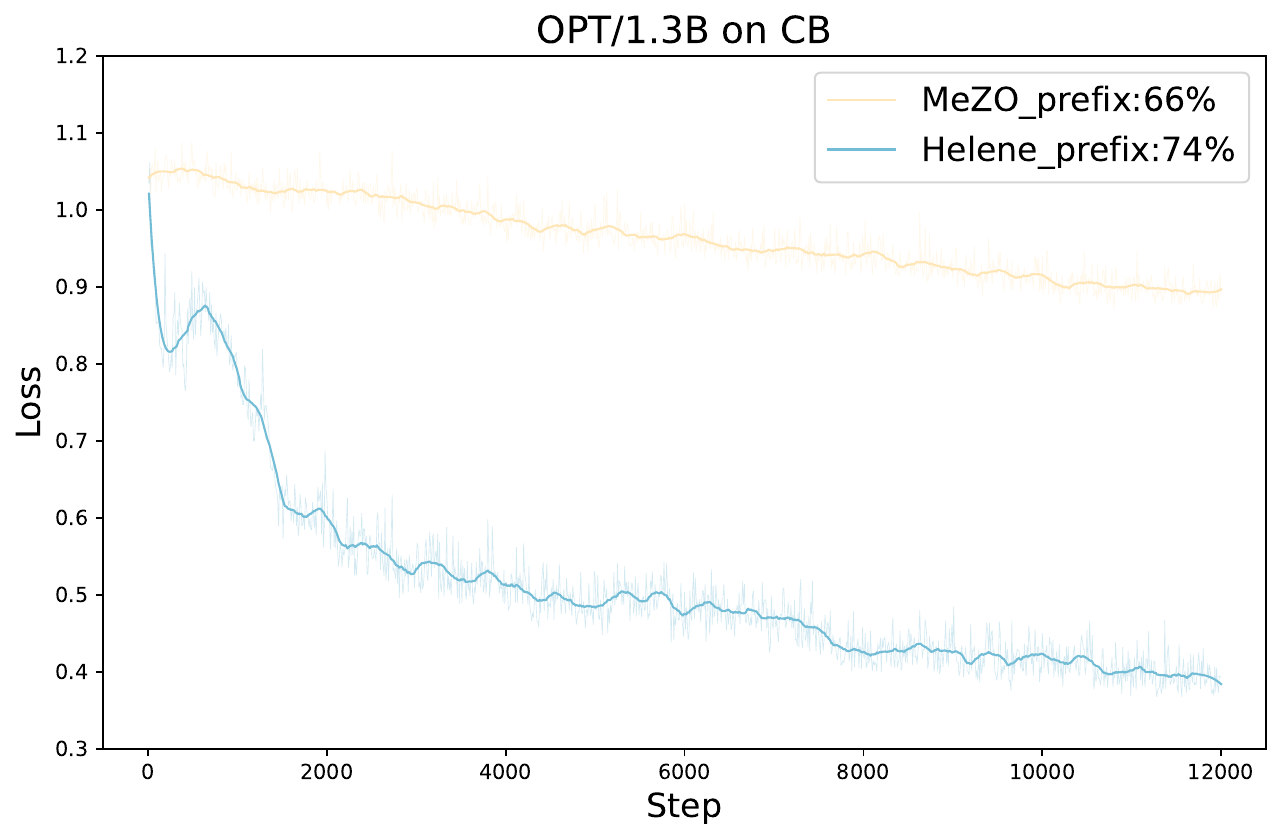}
        \caption{OPT-1.3B on CB}
        \label{fig:comparison3}
    \end{subfigure}
    \hfill
    \begin{subfigure}[b]{0.23\textwidth}
        \centering
        \includegraphics[width=\textwidth]{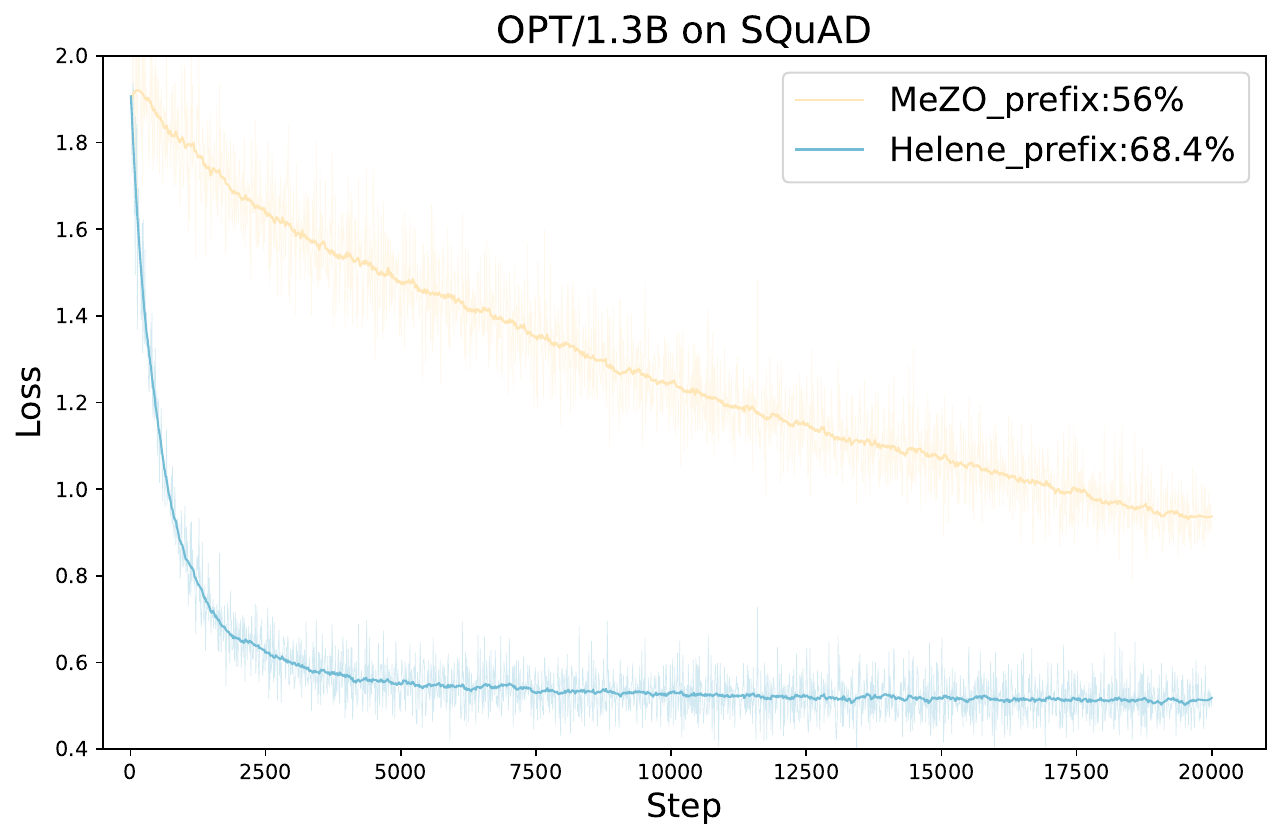}
        \caption{OPT1.3B on SQuAD}
        % \captionsetup{font=tiny}
        \label{fig:comparison4}
    \end{subfigure}

    \caption{Performance and convergence of \mezo{} and \helen{} for fine-tuning, LoRA, and prefix-tuning of OPT-1.3B on different datasets. \helen{} achieves approximate $10\times$ speedup and up to 15$\%$ accuracy improvement compared to \mezo{}.}
    \label{fig:comparison_with_mezo}
\end{figure*}

\textbf{\helen{} accelerates $\mathbf{10}\times$ times while remaining compatible with PEFT methods.} In Figure~\ref{fig:comparison_with_mezo}, we present results from four selected datasets across different tasks under three tuning methods. It indicates that \helen{} can consistently speed up the convergence by up to $10\times$ times, and also enhances the capability.

\subsection{Experiments With Other ZO Algorithms}
\label{other_zo_alg}
It is worth noting that the ZO optimization technique utilized in \citet{malladi2023fine} is primarily the basic SGD version (ZO-SGD), and it is still not clear how effective \helen{} is when comparing with other ZO optimization algorithms like ZO-SGD, ZO-SGD-MMT, ZO-SGD-Cons, ZO-SGD-Sign and ZO-Adam as introduced in \citet{liu2020primer}. Therefore, we reference the statistics of performances summarized in \citet{zhang2024revisiting} and experiment under the same setting with them (Table~\ref{tab:revisiting}), through which \helen{} shows good functionality especially for FT and prefix-tuning.

We further implement the ZO versions of Adam, AdamW and Lion~\citep{chen2024symbolic} and plot the results in Figure~\ref{fig:tuning_process_with_other_optimizer}. The results indicate that \helen{} helps the model converge faster as well as obtain lower validation loss value.

\begin{figure*}[t]
    \begin{minipage}{0.53\textwidth}
        \centering
        % \begin{table}[t]
\centering
\resizebox{\linewidth}{!}{
    \centering
    \captionsetup{width=\textwidth}
    \begin{tabular}{lcccccccccc}
    \toprule[1pt]
    
    \midrule
    \multirow{2}{*}{SST2} & \multicolumn{3}{c}{Roberta-Large} & \multicolumn{3}{c}{OPT-1.3B} \\
    \cmidrule(lr){2-4} \cmidrule(lr){5-7}
    % & FT & LoRA & Prefix & Prompt & FT & LoRA & Prefix & Prompt \\
    & FT & LoRA & Prefix & FT & LoRA & Prefix \\
    
    \midrule
    FO-SGD
    & $91.4$ & $91.2$ & $89.6$ 
    & $91.1$ & $93.6$ & $93.1$ \\
    
    \midrule
    Forward-Grad
    & $90.1$ & $89.7$ & $89.5$ 
    & $90.3$ & $90.3$ & $90.0$ \\
    \midrule
    ZO-SGD
    & $89.4$ & $90.8$ & $90.0$ 
    & $90.8$ & $90.1$ & $91.4$ \\
    ZO-SGD-MMT
    & $89.6$ & $90.9$ & $90.1$ 
    & $85.2$ & $91.3$ & $91.2$\\
    ZO-SGD-Cons
    & $89.6$ & $\mathbf{91.6}$ & $90.1$
    & $88.3$ & $90.5$ & $81.8$ \\
    ZO-SGD-Sign
    & $52.5$ & $90.2$ & $53.6$ 
    & $87.2$ & $91.5$ & $89.5$ \\
    ZO-Adam
    & $89.8$ & $89.5$ & $90.2$
    & $84.4$ & $\mathbf{92.3}$ & $91.4$\\
    \helen{}
    &$\mathbf{91.1}$ &$90.6$ &$\mathbf{91.7}$
    &$\mathbf{90.8}$ &$91.4$ &$\mathbf{92.4}$\\
    
    \midrule
    \bottomrule[1pt]
    \end{tabular}
    % \caption{Performance of LLM fine-tuning on SST2 over pre-trained Roberta-Large and OPT-1.3B. Best performance among ZO methods (including Forward-Grad) is highlighted in \textbf{bold}.}
    % \label{tab:revisiting}
}
\captionof{table}{Performance of LLM fine-tuning on  SST2 over pre-trained Roberta-Large and OPT-1.3B. Best performance among ZO methods (including Forward-Grad) is highlighted in \textbf{bold}.}
\label{tab:revisiting}
% \end{table}

    \end{minipage}
    \hfill
    \begin{minipage}{0.44\textwidth}
        \centering
        \includegraphics[width=\textwidth]{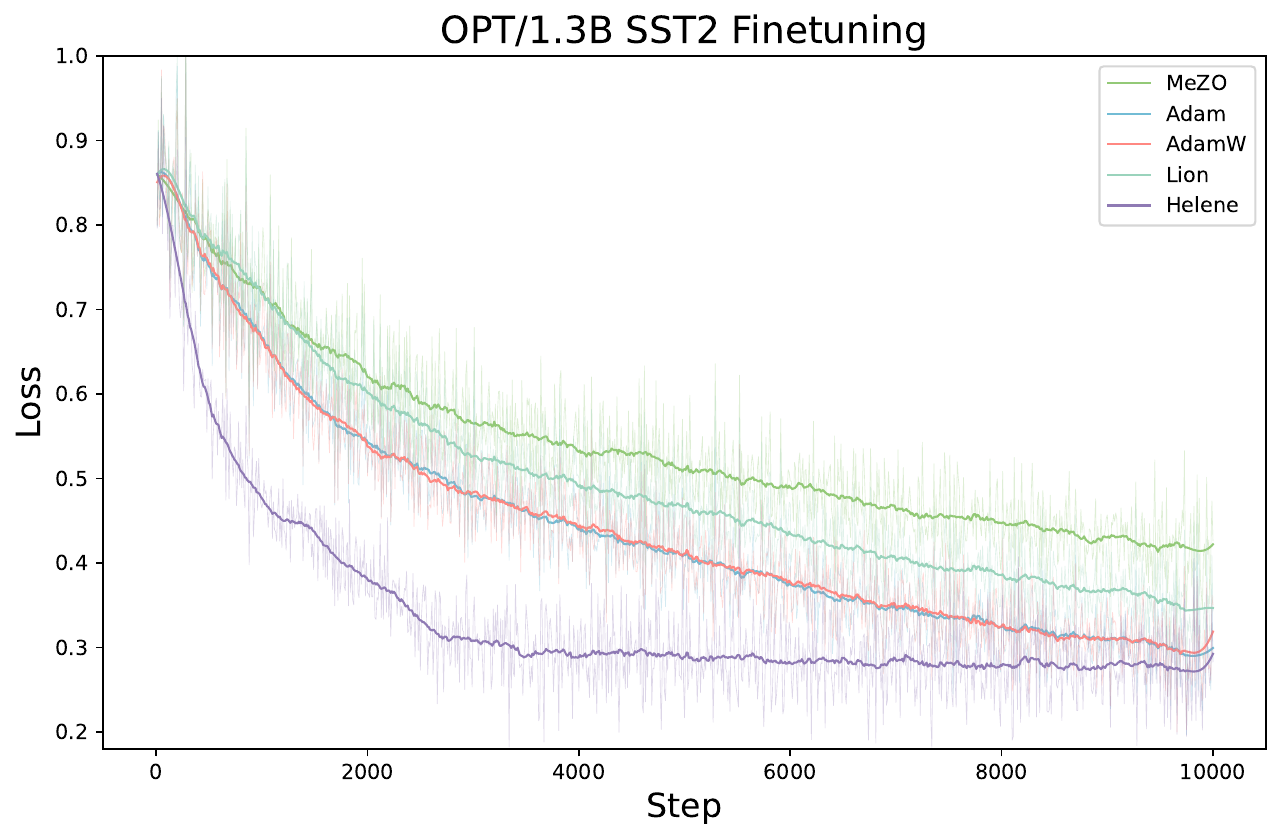}
        \caption{Validation losses for ZO-optimizers. MeZO:0.426, Adam:0.286, AdamW:0.351, Lion:0.343, \helen{}:0.283.}
        
        \label{fig:tuning_process_with_other_optimizer}
    \end{minipage}
\end{figure*}

\subsection{Ablation Study}
We conduct a comprehensive ablation study on the key techniques of \helen{} in Appendix~\ref{sec_ablation_study}, including in-depth analysis of the effects of magnitude clipping across different ranges. Additionally, we explore the factors resulting in Sophia's initial convergence and subsequent divergence.

\section{Conclusion}
% In this work, we present \helen{}, a ZO optimizer that incorporates gradient annealing and layer-wise clipping of the diagonal Hessian to accelerate fine-tuning of LLMs across diverse tasks and scales. To our knowledge, \helen{} is the first ZO optimizer that incorporates gradient annealing and diagonal Hessian with a layer-wise clipping mechanism for fine-tuning LLMs. Our analysis shows that HELENE achieves faster convergence, especially for models with varying layer dimensions, compared to existing methods. Further experiment shows that \helen{} converges $10\times$ to $20\times$ than \mezo{} faster and achieves better performance, crossing PEFT (LoRA, prefix) and full parameter tuning. We also compare \helen{} with several recently proposed algorithms, and our proposed method achieves both the fastest execution and the  best performance across all datasets.
In this paper, we present a novel optimizer, \helen{}, which is designed to address the challenges of fine-tuning LLMs. \helen{} integrates a new asymptotic Gauss-Newton-Bartlett (A-GNB) estimator for diagonal Hessian estimation, and a novel layer-wise clipping with the annealing module. The A-GNB estimator eliminates the need for label sampling, providing an unbiased Hessian approximation and improving the precision of curvature-aware updates. Furthermore, our layer-wise clipping mechanism provably ensures more adaptive Hessian updates based on the curvature of each layer, enhancing stability and scalability. Theoretical analysis shows that \helen{} reduces convergence steps from \(\mathcal{O}(d)\) to \(\mathcal{O}(\max_i d_i)\), making it highly scalable for modern architectures with many layers. Experimental results on models like RoBERTa-large and OPT-1.3B demonstrate that \helen{} achieves up to a $20\times$ speedup compared to \mezo{} and improves performance by 1.5\% on average. Compatible with both full parameter tuning and parameter-efficient fine-tuning, \helen{} outperforms many state-of-the-art optimizers across diverse tasks and datasets.

\clearpage
\newpage
\bibliography{iclr2025_conference}
\bibliographystyle{iclr2025_conference}

\clearpage
\appendix
\section{Related work}
\label{app:related_work}
\subsection{Zero-order Optimization}
Zeroth-order optimization, which only relies on the forward passes of neural networks, offers significant memory savings during the training process. Recently, \mezo{}~\citep{malladi2023fine} adapted the traditional zeroth-order SGD optimization method for fine-tuning LMs, achieving performance comparable to full-parameter fine-tuning while significantly reducing memory usage. Thus, zeroth-order optimization is regarded as a promising approach for memory-efficient fine-tuning of LLMs.
Several studies have aimed to improve the \mezo{} algorithm. For instance, \citet{gautam2024variance} introduced a zeroth-order optimization algorithm that integrates both full-batch and mini-batch information to produce asymptotically unbiased, low-variance gradient estimations. However, the convergence rate of their approach still leaves room for improvement. In pursuit of better gradient estimation, \citet{jiang2024zo} proposed an innovative perturbation sampling technique inspired by the Adam optimizer.
Other methods, such as SPSA~\citep{spall1992multivariate, maryak2001global}, have shown to be effective in non-convex multi-agent optimization~\citep{tang2020distributed, hajinezhad2018gradient} and in generating black-box adversarial examples~\citep{chen2017zoo, cai2021zeroth, liu2019signsgd, ye2018hessian}.

\subsection{Second-order Information for Fine-tuning LLMs}
% Second-order optimizers are believed to have the potential to outperform adaptive first-order optimizers. 
Classic second-order optimization algorithms pre-condition the gradient with curvature information~\citep{second-order-curvature-information1, second-order-curvature-information2, second-order-curvature-information3}. Over the years, people have developed numerous ways to adapt these methods to deep learning. To the best of our knowledge, \citet{becker1988improving} was the first to use diagonal Hessian as the pre-conditioner. \cite{martens2010deep} approximated the Hessian with conjugate gradient. \citet{schaul2013no} automatically tuned the learning rate of SGD by considering diagonal Hessian. \citet{pascanu2013revisiting} considered Gaussian Newton’s approximation of Hessian and Fisher information matrix. \citet{martens2015optimizing} and follow-up works~\citep{ba2017distributed, george2018fast, martens2018kronecker, zhang2022eva} proposed to approximate the Hessian based on the structure of neural networks. 
Despite these progress on deep learning applications, for decoder-only LLMs, Adam still appears to be the most popular optimizer. The authors of this paper suspect that many previous second-order optimizers face the challenge that the computational / memory overhead due to frequent Hessian computation hinders improvements in wall-clock time ~\citep{martens2015optimizing, gupta2018shampoo}. Some of them also depend on specific model architecture or hardware structures, e.g., \citet{anil2020scalable} offloads hessian computation to CPUs, and \citet{george2018fast} needs ResNet and very large batch size to approximate the Fisher information matrix. To the best of our knowledge, there was no previous report that second-order optimizers can achieve a speed-up on LLMs in total compute.

There is also a concurrent work \cite{zhao2024second} that utilizes Hessian information to guide the gradient estimation process. However, this work may not provide effective solution about the sampling process of perturbation direction, while the perturbation sampling of our method, \helen{}, follows a normal distribution $N(0, \text{Hessian}^{-1})$, which facilitates the training process and contributes to a faster convergence speed.
% 指出此工作的缺陷：perturb时用了Hessian^{1/2}，然后最后loss又乘了一个Hessian^{1/2}。应当是像Tong Zhang's paper，perturb时用-1/2.
% 感觉指出别人工作中这种比较细的错误要用公式写得很严谨，否则很难说清
There may exist some other minor mistakes of the paper \cite{zhao2024second} that for the sampling operation of perturbation, the authors may use $\text{Hessian}^{\frac{1}{2}}$ instead of $\text{Hessian}^{-\frac{1}{2}}$ by mistake. Meanwhile, from the results that the authors provide in their paper we can observe that the variance of training loss is still very high, which indicates an unstable training process.

\subsection{Gradient Clipping}
Global gradient clipping has been a widely adopted practice in fine-tuning LLMs~\citep{chen2020understanding, zhang2019gradient, liu2022communication}. This technique stabilizes training by mitigating the impact of rare examples and large gradient noise. In addition to gradient clipping, \helen{} is the first method to clip the Hessian matrix in second-order optimization techniques. This approach addresses the issue of the Hessian matrix fluctuating along the optimization trajectory and reduces the errors in Hessian approximations.

% \section{Experiment Setup}

% \subsection{Hyperparameters}

\section{Ablation Study}
\label{sec_ablation_study}

\subsection{Evaluating the Impact of Key Components on Convergence and Stability}
Figure~\ref{fig:ablation_study} illustrates the effectiveness of each component in our algorithm. Adding momentum to \mezo{} alone doesn't improve performance. Introducing bias in the gradient boosts convergence speed, but causes loss to increase later in training due to biased gradient estimates. To counter this, we added an annealing term to make the gradient asymptotically unbiased, which stabilizes the loss. Inspired by \sophia, we introduced the clipped Hessian to address heterogeneous curvatures, further improving convergence speed. Our ablation study validates both the motivation and effectiveness of these components.

\subsection{Magnitude clipping}
Figure \ref{fig:layer_wise_clip_ablation} addresses the robustness of clipping in our optimizer. Our empirical study is as follows: First, we explored the impact of lower bounds ranging from 1 to 3, all of which demonstrated stability. As a hyperparameter, this lower bound shows consistent robustness. However, when the lower bound was set to 0.9, the model performance dropped by 10 points, leading us to believe that problematic Hessian values are concentrated below 1, while values above 1 are less critical. Second, we argue that layer-wise clipping based on magnitude is reasonable in a zeroth-order setting, as performing percentage-based clipping for each layer would be too time-consuming. Third, this aligns with our intuition that reducing bad Hessian values, along with a smaller step size, is effective in maintaining performance.

\subsection{Investigation into the Convergence Instability of Sophia}
\label{appedix:Convergence Instability of Sophia}
We study the reasons for Sophia's failure in the Figure~\ref{fig:landscape} by counting the number of clip triggers. We computed the loss between timesteps 400 and 800, with a mean value of 0.57. The average loss between timesteps 1400 and 1800 was 0.65. We then analyzed the number of times the Sophia clipping mechanism was triggered within these two time intervals. Our analysis covered the Q, K, V matrices, fully connected layers, and bias layers. We found that the frequency of clipping in the interval where the mean loss was 0.65 was 1.18 to 1.22 times higher than in the interval where the mean loss was 0.57. 

Based on these experimental observations, we conclude that Sophia's clipping mechanism tends to be over-triggered in complex data scenarios, particularly when faced with heterogeneous curvature. This over-triggering can result in non-convergence, aligning with our intuition. In the zeroth-order setting, gradients are estimated using SPSA, and excessive clipping of the $\frac{g}{H}$ terms can lead to instability and failure of the model to converge.

\begin{figure*}[t]
    \centering

    \begin{subfigure}[b]{0.45\textwidth}
        \centering
        \includegraphics[height=5cm]{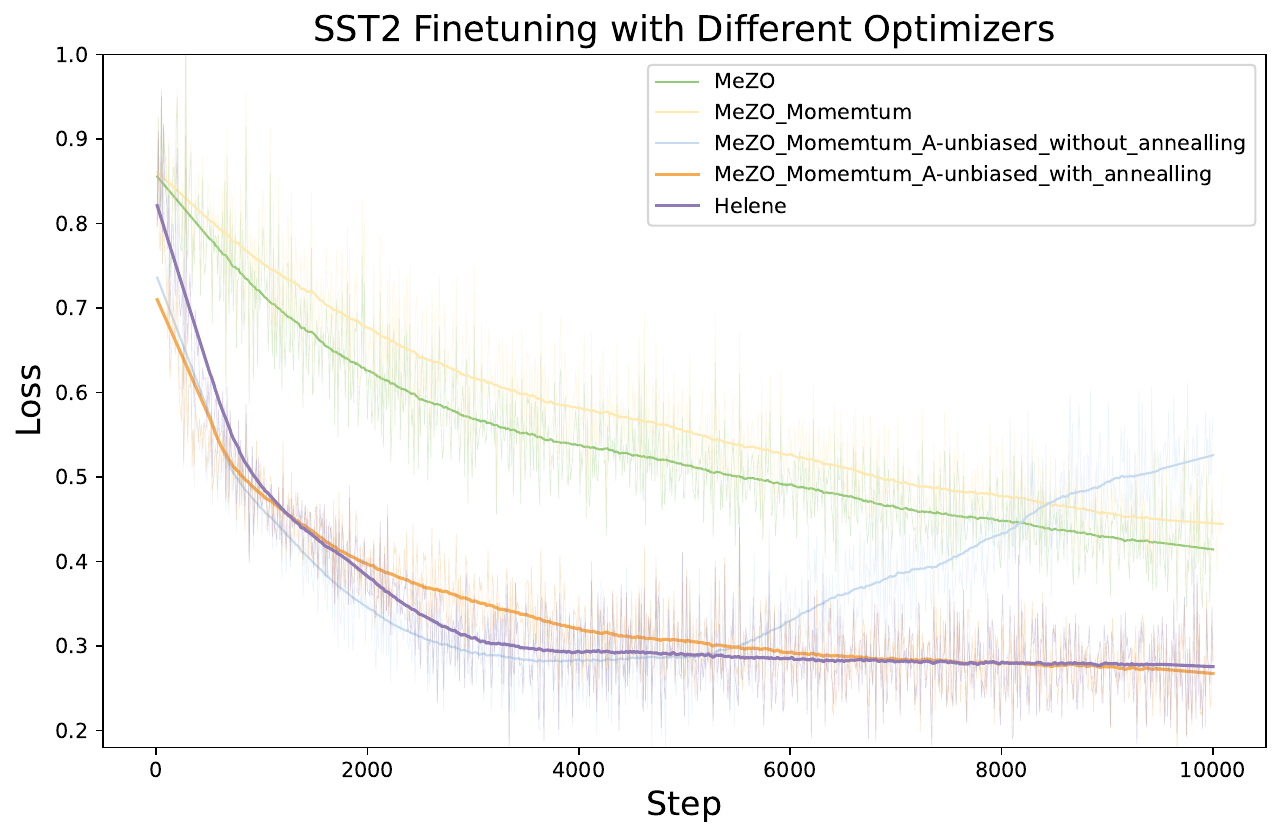}
        \caption{Ablation study for key components of HELENE. The effectiveness of annealing and the clipped diagonal Hessian were demonstrated progressively from high to low.}
        \label{fig:tuning_process}
    \end{subfigure}
    \hfill
    \begin{subfigure}[b]{0.4\textwidth}
        \centering
        \includegraphics[height=5cm]{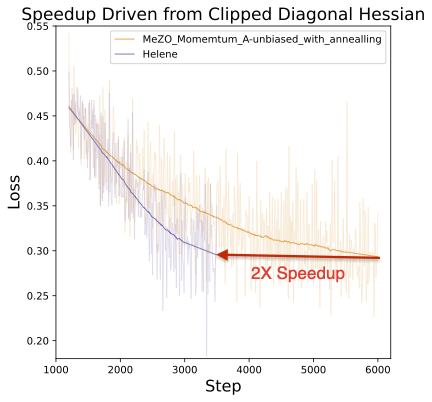}
        \caption{The image on the right is a zoomed-in view of the left one, showing that our proposed HELENE is twice as fast as the compared methods.}
        \label{fig:ablation_study_1}
    \end{subfigure}

    \caption{Comparison of tuning processes and ablation studies with different optimization algorithms.}
    \label{fig:ablation_study}
\end{figure*}

\begin{figure*}[t]
    \centering
    \includegraphics[width=0.8\textwidth]{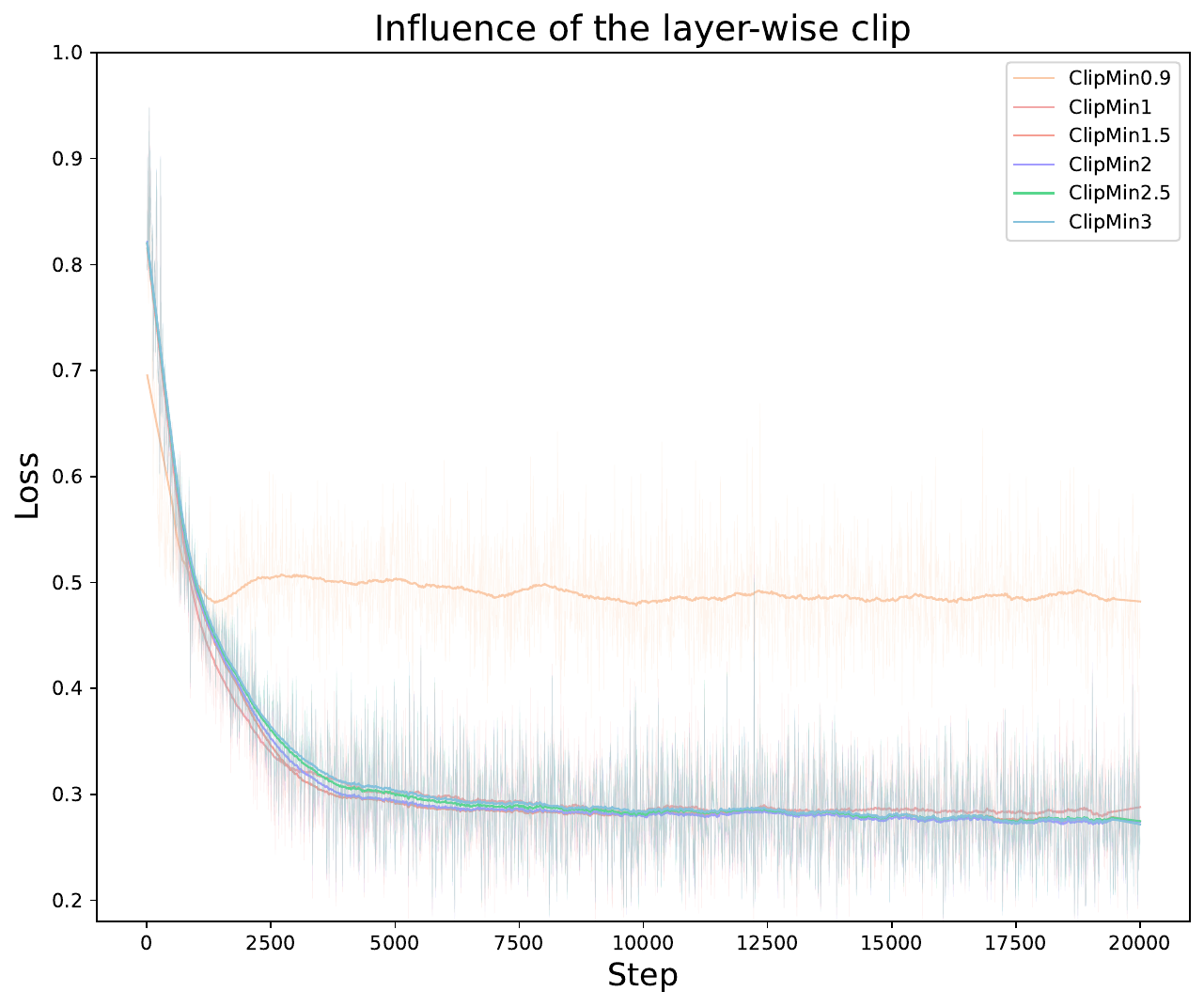}
    \caption{Performance and convergence of HELENE for fine-tuning of OPT-1.3B on SST2 with different clipping lower bound.}

    \label{fig:layer_wise_clip_ablation}
\end{figure*}

% \todo{}

% \subsection{Training with Non-Differentiable Objectives}
% \input{tables/non-differentiable}
% Our proposed \helen{} leverages zeroth-order optimizer to update parameters, enabling the use of non-differentiable objectives for training. Following the settings of \mezo{}, we conduct extensive experiments using F1 as the objective function.

% The results shown in Table 4 demonstrate that our method outperforms \mezo{} by an average margin of 6.36\% in F1 score. Notably, on the TREC dataset, the improvement is even more pronounced, exceeding 20\%, and the original convergence issues encountered with \mezo{} are resolved.

% \subsection{Case Study,Trajectory Visualization on Test Functions}
% \todo{need to find good examples}

\section{Detailed Convergence Analysis}
\label{app_conv_analysis}

% \begin{assumption}
% Let \( L: \mathbb{R}^d \to \mathbb{R} \) be a loss function. We assume \( L \) is:
% \begin{itemize}
%   \item Twice continuously differentiable,
%   \item Strictly convex, and
%   \item Has a unique minimizer denoted by \( \boldsymbol{\theta}^* \).
% \end{itemize}
% For each layer \( i \), define \( \mu_i \) as the minimum eigenvalue of the Hessian matrix of \( L \) concerning the parameters of that layer evaluated at its minimizer:
% \[ \mu_i \equiv \lambda_{\min}(\nabla^2_{\boldsymbol{\theta}_i} L(\boldsymbol{\theta}^*)) \]
% where \( \nabla^2_{\boldsymbol{\theta}_i} \) denotes the Hessian with respect to the parameters of the \( i \)-th layer.

% \end{assumption}

% \begin{assumption} Regarding the Hessian \( \nabla^2 L(\boldsymbol{\theta}) \) of the loss function, we assume:
% \begin{itemize}
%   \item There exists a radius \( R > 0 \) such that for any \( \boldsymbol{\theta}, \boldsymbol{\theta}' \in \mathbb{R}^d \) with \( \|\boldsymbol{\theta} - \boldsymbol{\theta}'\|_2 \leq R \), the following inequality holds:
%   \[
%   \left\| \nabla^2 L(\boldsymbol{\theta}')^{-1} \nabla^2 L(\boldsymbol{\theta}) \right\|_2 \leq 2
%   \]
%   where  \( \|\cdot\|_2 \) represents the spectral norm.
% \end{itemize}
% \end{assumption}

\begin{lemma}[Divergence to Infinity]
Under Assumption 1, for each layer \( i \) in a neural network model, assume the function \( L : \mathbb{R}^{d_i} \to \mathbb{R} \) is strictly convex, twice continuously differentiable, and has a unique minimizer denoted by \( \boldsymbol{\theta}_i^* \). For any parameter vector \( \boldsymbol{\theta}_i \) of layer \( i \) such that \( \| \boldsymbol{\theta}_i - \boldsymbol{\theta}_i^* \|_2 \geq 1 \), the function \( L(\boldsymbol{\theta}_i) \) diverges to infinity as \( \| \boldsymbol{\theta}_i \|_2 \to \infty \).
\end{lemma}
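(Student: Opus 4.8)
The plan is to establish a linear lower bound of the form $L(\boldsymbol{\theta}_i) \ge L(\boldsymbol{\theta}_i^*) + m\,\|\boldsymbol{\theta}_i - \boldsymbol{\theta}_i^*\|_2$ for some constant $m > 0$, valid on the region $\|\boldsymbol{\theta}_i - \boldsymbol{\theta}_i^*\|_2 \ge 1$. Since $\|\boldsymbol{\theta}_i\|_2 \to \infty$ forces $\|\boldsymbol{\theta}_i - \boldsymbol{\theta}_i^*\|_2 \to \infty$, such a bound immediately yields $L(\boldsymbol{\theta}_i) \to \infty$. The key idea is to exploit convexity along rays emanating from the minimizer: a point at distance $r \ge 1$ from $\boldsymbol{\theta}_i^*$ can be written so that the corresponding unit-distance point is a convex combination of it and $\boldsymbol{\theta}_i^*$, and convexity then transfers a uniform positive gap at radius one into a gap growing linearly in $r$.

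First I would define the quantity
\[
m \;:=\; \min_{\|\boldsymbol{u}\|_2 = 1} \bigl( L(\boldsymbol{\theta}_i^* + \boldsymbol{u}) - L(\boldsymbol{\theta}_i^*) \bigr),
\]
where the minimum ranges over the unit sphere in $\mathbb{R}^{d_i}$. Because $L$ is continuous and the unit sphere is compact, this minimum is attained; because $\boldsymbol{\theta}_i^*$ is the unique minimizer, each term $L(\boldsymbol{\theta}_i^* + \boldsymbol{u}) - L(\boldsymbol{\theta}_i^*)$ is strictly positive, so the attained value satisfies $m > 0$. This compactness-plus-uniqueness step is the one I expect to require the most care: strict convexity gives only pointwise positivity, and one must invoke compactness to upgrade it to a single positive lower bound $m$ holding uniformly across all directions.

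Next, for an arbitrary $\boldsymbol{\theta}_i$ with $r := \|\boldsymbol{\theta}_i - \boldsymbol{\theta}_i^*\|_2 \ge 1$, I set $\boldsymbol{u} = (\boldsymbol{\theta}_i - \boldsymbol{\theta}_i^*)/r$, so that $\boldsymbol{\theta}_i^* + \boldsymbol{u}$ lies on the unit sphere and equals the convex combination $\tfrac{1}{r}\boldsymbol{\theta}_i + (1 - \tfrac{1}{r})\boldsymbol{\theta}_i^*$. The hypothesis $r \ge 1$ is precisely what guarantees $\tfrac{1}{r} \in (0,1]$, making this a legitimate convex combination (and explaining why the lemma restricts to $\|\boldsymbol{\theta}_i - \boldsymbol{\theta}_i^*\|_2 \ge 1$). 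Convexity of $L$ then gives
\[
L(\boldsymbol{\theta}_i^* + \boldsymbol{u}) \;\le\; \tfrac{1}{r} L(\boldsymbol{\theta}_i) + \bigl(1 - \tfrac{1}{r}\bigr) L(\boldsymbol{\theta}_i^*),
\]
which rearranges to $L(\boldsymbol{\theta}_i) - L(\boldsymbol{\theta}_i^*) \ge r\,\bigl(L(\boldsymbol{\theta}_i^* + \boldsymbol{u}) - L(\boldsymbol{\theta}_i^*)\bigr) \ge r\,m$. Substituting $r = \|\boldsymbol{\theta}_i - \boldsymbol{\theta}_i^*\|_2$ produces the claimed linear lower bound, and letting $\|\boldsymbol{\theta}_i\|_2 \to \infty$ completes the argument. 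I would also remark that twice continuous differentiability is not actually used here---only continuity, convexity, and uniqueness of the minimizer---so the proof rests on strictly less than the full strength of Assumption~1.
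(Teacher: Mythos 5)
Your proof is correct and follows essentially the same route as the paper: both establish the linear lower bound $L(\boldsymbol{\theta}_i) \ge L(\boldsymbol{\theta}_i^*) + \|\boldsymbol{\theta}_i - \boldsymbol{\theta}_i^*\|_2\,\Delta_i$ with $\Delta_i$ the minimum of $L(\boldsymbol{\theta}_i^* + \phi) - L(\boldsymbol{\theta}_i^*)$ over the unit sphere. You merely supply two justifications the paper states without derivation: the convex-combination argument behind the ratio inequality, and the compactness-plus-uniqueness reasoning for why the sphere minimum is attained and strictly positive.
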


\begin{proof}
By the strict convexity of \( L \), for any \( \boldsymbol{\theta}_i \) such that \( \| \boldsymbol{\theta}_i - \boldsymbol{\theta}_i^* \|_2 \geq 1 \), we have:
\begin{equation}
\frac{L(\boldsymbol{\theta}_i) - L(\boldsymbol{\theta}_i^*)}{\| \boldsymbol{\theta}_i - \boldsymbol{\theta}_i^* \|_2} \geq \min_{\|\phi\|_2=1} L(\boldsymbol{\theta}_i^* + \phi) - L(\boldsymbol{\theta}_i^*), \label{eq:convexity_layerwise}
\end{equation}
where \( \phi \) is a unit vector. For the convenience, here $L(\boldsymbol{\theta}_i)$ denotes $L(\boldsymbol{\theta}_i|\boldsymbol{\theta}_{-i})$ where $\boldsymbol{\theta}_{-i}$ denotes the parameters in the whole model except $\boldsymbol{\theta}_i$, and $L(\boldsymbol{\theta}_i^*)$ denotes $L(\boldsymbol{\theta}_i^*|\boldsymbol{\theta}_{-i}^*)$. Define \( \Delta_i \) as:
\begin{equation}
\Delta_i = \min_{\|\phi\|_2=1} L(\boldsymbol{\theta}_i^* + \phi) - L(\boldsymbol{\theta}_i^*), \label{eq:min_increase_layerwise}
\end{equation}
a positive constant due to the strict convexity of \( L \) indicating the minimal rate of increase of \( L \) around \( \boldsymbol{\theta}_i^* \).

Thus, the inequality can be written as:
\begin{equation}
L(\boldsymbol{\theta}_i) \geq \| \boldsymbol{\theta}_i - \boldsymbol{\theta}_i^* \|_2 \Delta_i + L(\boldsymbol{\theta}_i^*). \label{eq:divergence_inequality}
\end{equation}

This implies that as \( \| \boldsymbol{\theta}_i \|_2 \to \infty \), which necessarily implies \( \| \boldsymbol{\theta}_i - \boldsymbol{\theta}_i^* \|_2 \to \infty \), the loss \( L(\boldsymbol{\theta}_i) \) also diverges to infinity, since the term \( \| \boldsymbol{\theta}_i - \boldsymbol{\theta}_i^* \|_2 \Delta_i \) dominates and increases without bound.
\end{proof}

\textbf{Note:} We do not require the Hessian of the loss function, \( \nabla^2 L(\boldsymbol{\theta}_i) \), to be Lipschitz continuous; Assumption 2 only requires that the Hessian is continuous in a multiplicative sense within a neighborhood of constant radius.

\begin{lemma}[Parameter Bound]
Let \( L : \mathbb{R}^d \to \mathbb{R} \) be a loss function for a neural network composed of multiple layers, each with parameters \(\boldsymbol{\theta}_i\), and \(L\) is twice continuously differentiable and strictly convex with respect to each layer's parameters at a global minimizer \(\boldsymbol{\theta}^*\). Assume each layer \(i\) satisfies the following condition: 
\[ L(\boldsymbol{\theta}_i) - \min L \leq \frac{\mu_i R_i^2}{4}, \]
where \(\mu_i\) is the minimum eigenvalue of the Hessian of \(L\) at the minimizer for parameters of layer \(i\), and \(R_i\) is a predefined radius. Then, it holds that
\[\|\boldsymbol{\theta}_i - \boldsymbol{\theta}_i^*\|_2 \leq R_i.\]
\end{lemma}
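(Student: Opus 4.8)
The plan is to argue by contradiction: assume $\|\boldsymbol{\theta}_i - \boldsymbol{\theta}_i^*\|_2 > R_i$ and show this forces $L(\boldsymbol{\theta}_i) - \min L > \frac{\mu_i R_i^2}{4}$, contradicting the hypothesis. Throughout I keep the convention of Lemma~1 that $L(\boldsymbol{\theta}_i)$ means $L(\boldsymbol{\theta}_i \mid \boldsymbol{\theta}_{-i})$ with the remaining layers frozen, so the argument lives in the $d_i$-dimensional parameter space of layer $i$ and $\min L = L(\boldsymbol{\theta}_i^* \mid \boldsymbol{\theta}_{-i})$. The engine of the proof is a local quadratic lower bound on $L$ valid inside the ball $B_i = \{\boldsymbol{\theta}_i : \|\boldsymbol{\theta}_i - \boldsymbol{\theta}_i^*\|_2 \le R_i\}$, which I obtain from Assumption~2.

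First I would turn Assumption~2 into a uniform Hessian lower bound on $B_i$. Instantiating the assumption with the variable point $\boldsymbol{\theta}_i \in B_i$ as the reference and $\boldsymbol{\theta}_i^*$ as the comparison point gives $\|\nabla^2 L(\boldsymbol{\theta}_i)^{-1} \nabla^2 L(\boldsymbol{\theta}_i^*)\|_2 \le 2$. Since strict convexity makes both Hessians symmetric positive definite, the matrix $\nabla^2 L(\boldsymbol{\theta}_i)^{-1}\nabla^2 L(\boldsymbol{\theta}_i^*)$ is similar to the symmetric positive definite matrix $\nabla^2 L(\boldsymbol{\theta}_i)^{-1/2}\nabla^2 L(\boldsymbol{\theta}_i^*)\nabla^2 L(\boldsymbol{\theta}_i)^{-1/2}$, so the spectral-norm bound controls its largest eigenvalue and yields the L\"owner ordering $\nabla^2 L(\boldsymbol{\theta}_i^*) \preceq 2\,\nabla^2 L(\boldsymbol{\theta}_i)$. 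Rearranging gives $\nabla^2 L(\boldsymbol{\theta}_i) \succeq \tfrac12 \nabla^2 L(\boldsymbol{\theta}_i^*) \succeq \tfrac{\mu_i}{2} I$ for every $\boldsymbol{\theta}_i \in B_i$.

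With this in hand, I would apply Taylor's theorem with the mean-value form of the remainder around $\boldsymbol{\theta}_i^*$. Using $\nabla L(\boldsymbol{\theta}_i^*) = 0$ together with the Hessian lower bound at the intermediate point (which lies on a segment inside $B_i$), any $\boldsymbol{\theta}_i'$ with $\|\boldsymbol{\theta}_i' - \boldsymbol{\theta}_i^*\|_2 = R_i$ satisfies $L(\boldsymbol{\theta}_i') - \min L \ge \tfrac12 \cdot \tfrac{\mu_i}{2} R_i^2 = \tfrac{\mu_i R_i^2}{4}$. Finally I would use convexity to transport this boundary estimate outward: along the ray from $\boldsymbol{\theta}_i^*$ through $\boldsymbol{\theta}_i$, the one-dimensional restriction $g(t) = L(\boldsymbol{\theta}_i^* + t v)$, with $v$ the unit direction, has $g'(0) = 0$ and is strictly convex, hence strictly increasing for $t > 0$. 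Taking $\boldsymbol{\theta}_i'$ to be the boundary point of $B_i$ on this ray gives $L(\boldsymbol{\theta}_i) > L(\boldsymbol{\theta}_i') \ge \min L + \tfrac{\mu_i R_i^2}{4}$ whenever $\|\boldsymbol{\theta}_i - \boldsymbol{\theta}_i^*\|_2 > R_i$, which contradicts the hypothesis $L(\boldsymbol{\theta}_i) - \min L \le \tfrac{\mu_i R_i^2}{4}$; hence $\|\boldsymbol{\theta}_i - \boldsymbol{\theta}_i^*\|_2 \le R_i$.

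I expect the main obstacle to be the Hessian-ordering step: Assumption~2 only controls a spectral norm of a generally non-symmetric product of matrices, so care is needed to justify that $\|\nabla^2 L(\boldsymbol{\theta}_i)^{-1}\nabla^2 L(\boldsymbol{\theta}_i^*)\|_2 \le 2$ genuinely implies $\nabla^2 L(\boldsymbol{\theta}_i^*) \preceq 2\,\nabla^2 L(\boldsymbol{\theta}_i)$, and to select the correct orientation of Assumption~2 (the one producing a \emph{lower} rather than upper bound on the Hessian). A secondary point worth flagging is that $\mu_i$-strong convexity is \emph{not} assumed globally; it is only available at the minimizer, which is exactly why the proof must combine the local quadratic bound on $B_i$ with the convex monotonicity argument along rays rather than invoking a single global strong-convexity inequality.
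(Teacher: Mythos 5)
Your proof is correct and follows essentially the same route as the paper's: a contradiction argument combining the local Hessian lower bound $\nabla^2 L(\boldsymbol{\theta}_i) \succeq \tfrac{\mu_i}{2} I$ on the radius-$R_i$ ball (obtained from Assumption~2 together with the definition of $\mu_i$ at the minimizer), a Taylor expansion about $\boldsymbol{\theta}_i^*$ with vanishing first-order term, and monotonicity of $L$ along rays emanating from the minimizer. The only difference is cosmetic --- you evaluate the quadratic lower bound at the boundary point of the ball, whereas the paper constructs a specific interior comparison point $\boldsymbol{\theta}_i'$ --- and your justification of the L\"owner ordering $\nabla^2 L(\boldsymbol{\theta}_i^*) \preceq 2\,\nabla^2 L(\boldsymbol{\theta}_i)$ via the spectral radius of the symmetrized product is in fact more careful than the paper's, which asserts the resulting bound on $f''$ without argument.
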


\begin{proof}
Suppose, by way of contradiction, there exists a \(\boldsymbol{\theta}_i\) such that $ L(\boldsymbol{\theta}_i) \leq \frac{\mu_i R_i^2}{4}, $ but $\|\boldsymbol{\theta}_i - \boldsymbol{\theta}_i^*\|_2 > R_i.$
Define \(\boldsymbol{\theta}_i'\) as:
\begin{equation}
\boldsymbol{\theta}_i' = \boldsymbol{\theta}_i^* + \sqrt{2(L(\boldsymbol{\theta}_i) - \min L)} \frac{\boldsymbol{\theta}_i - \boldsymbol{\theta}_i^*}{\mu_i \|\boldsymbol{\theta}_i - \boldsymbol{\theta}_i^*\|_2}.
\end{equation}
Since \(\boldsymbol{\theta}_i'\) is a point between \(\boldsymbol{\theta}_i\) and \(\boldsymbol{\theta}_i^*\), and due to the strict convexity of \(L\), we have $ L(\boldsymbol{\theta}_i') < L(\boldsymbol{\theta}_i) $ by convexity. Considering the Taylor expansion of \(L\) at \(\boldsymbol{\theta}_i^*\) along the direction towards \(\boldsymbol{\theta}_i'\), we have:
\begin{equation}
f(t) = L(\boldsymbol{\theta}_i^* + t(\boldsymbol{\theta}_i' - \boldsymbol{\theta}_i^*)), \quad f(1) = L(\boldsymbol{\theta}_i'), \quad f(0) = L(\boldsymbol{\theta}_i^*), \quad f'(0) = 0,
\end{equation}
\begin{equation}
f''(t) = (\boldsymbol{\theta}_i' - \boldsymbol{\theta}_i^*)^T \nabla^2 L(t\boldsymbol{\theta}_i' + (1-t)\boldsymbol{\theta}_i^*)(\boldsymbol{\theta}_i' - \boldsymbol{\theta}_i^*),
\end{equation}
Given \(f''(t) \geq \frac{\mu_i}{2} \|\boldsymbol{\theta}_i' - \boldsymbol{\theta}_i^*\|_2^2\) from Assumption 2 and the convexity, the Taylor expansion yields:
\[ f(1) \geq f(0) + f'(0) + \frac{1}{2} f''(t) = L(\boldsymbol{\theta}_i^*) + \frac{\mu_i}{2} \|\boldsymbol{\theta}_i' - \boldsymbol{\theta}_i^*\|_2^2, \]
thus,
\[ L(\boldsymbol{\theta}_i') \geq L(\boldsymbol{\theta}_i^*) + \frac{\mu_i}{2} \|\boldsymbol{\theta}_i' - \boldsymbol{\theta}_i^*\|_2^2 \]
which contradicts the assumption that $ L(\boldsymbol{\theta}_i') < L(\boldsymbol{\theta}_i)$. Therefore, the original assumption that 
$|\boldsymbol{\theta}_i - \boldsymbol{\theta}_i^*\|_2 > R_i $ must be false, concluding that$\|\boldsymbol{\theta}_i - \boldsymbol{\theta}_i^*\|_2 \leq R_i.$
\end{proof}

\begin{lemma}[ Gradient Norm Bound]
For any \(\boldsymbol{\theta}_i\) in layer \(i\) of a neural network, satisfying \(\left\| \nabla L(\boldsymbol{\theta}_i) \right\|_2 \leq \frac{\mu_i R_i}{2}\), where \(\mu_i\) is the minimum eigenvalue of the Hessian of \(L\) at the minimizer for layer \(i\) parameters, it holds that \(\|\boldsymbol{\theta}_i - \boldsymbol{\theta}_i^*\|_2 \leq R_i\).
\end{lemma}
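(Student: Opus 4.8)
The plan is to prove the statement by a one-dimensional reduction along the segment joining $\boldsymbol{\theta}_i^*$ to $\boldsymbol{\theta}_i$, combined with a local lower bound on curvature inherited from Assumption 2. The central object is the directional derivative of $L$ along the unit direction $u = (\boldsymbol{\theta}_i - \boldsymbol{\theta}_i^*)/\|\boldsymbol{\theta}_i - \boldsymbol{\theta}_i^*\|_2$. Writing $\boldsymbol{\theta}(s) = \boldsymbol{\theta}_i^* + s\,u$ and $g(s) = u^\top \nabla L(\boldsymbol{\theta}(s))$, we have $g(0) = 0$ since $\boldsymbol{\theta}_i^*$ is the minimizer, and $g'(s) = u^\top \nabla^2 L(\boldsymbol{\theta}(s))\, u$.

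First I would convert the multiplicative Hessian-stability condition of Assumption 2 into an ordinary eigenvalue lower bound. Applying the assumption to the pair $(\boldsymbol{\theta}(s), \boldsymbol{\theta}_i^*)$, for which $\|\boldsymbol{\theta}(s) - \boldsymbol{\theta}_i^*\|_2 = s \le R_i$, yields $\|\nabla^2 L(\boldsymbol{\theta}(s))^{-1}\nabla^2 L(\boldsymbol{\theta}_i^*)\|_2 \le 2$. Since $\nabla^2 L(\boldsymbol{\theta}(s))^{-1}\nabla^2 L(\boldsymbol{\theta}_i^*)$ is similar to the symmetric positive-definite matrix $\nabla^2 L(\boldsymbol{\theta}(s))^{-1/2}\nabla^2 L(\boldsymbol{\theta}_i^*)\nabla^2 L(\boldsymbol{\theta}(s))^{-1/2}$, its eigenvalues are positive and bounded by its spectral norm; hence $\nabla^2 L(\boldsymbol{\theta}_i^*) \preceq 2\,\nabla^2 L(\boldsymbol{\theta}(s))$, so that $\lambda_{\min}(\nabla^2 L(\boldsymbol{\theta}(s))) \ge \tfrac12 \lambda_{\min}(\nabla^2 L(\boldsymbol{\theta}_i^*)) = \mu_i/2$ for every $s \le R_i$. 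This ``half-curvature in a neighborhood'' fact is the same one used implicitly in the proof of Lemma 2, and it is the step I expect to require the most care, since it hinges on reading Assumption 2 in the correct order (inverting at the path point, not at the minimizer) and on the similarity transform that upgrades a spectral-norm bound to a Loewner ordering.

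With this bound in hand, integrating $g'$ gives $g(s) = \int_0^s u^\top \nabla^2 L(\boldsymbol{\theta}(\tau))\,u\,d\tau \ge \tfrac{\mu_i}{2}\,s$ for all $s \le R_i$, and in particular $g(R_i) \ge \mu_i R_i/2$. Strict convexity of $L$ moreover forces $g'(s) > 0$ everywhere, so $g$ is strictly increasing on $[0,\infty)$. I then argue by contradiction: if $\|\boldsymbol{\theta}_i - \boldsymbol{\theta}_i^*\|_2 > R_i$, then writing $D := \|\boldsymbol{\theta}_i - \boldsymbol{\theta}_i^*\|_2 > R_i$ and using strict monotonicity gives $u^\top \nabla L(\boldsymbol{\theta}_i) = g(D) > g(R_i) \ge \mu_i R_i/2$, whereas Cauchy--Schwarz together with the hypothesis gives $u^\top \nabla L(\boldsymbol{\theta}_i) \le \|\nabla L(\boldsymbol{\theta}_i)\|_2 \le \mu_i R_i/2$, a contradiction. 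Hence $\|\boldsymbol{\theta}_i - \boldsymbol{\theta}_i^*\|_2 \le R_i$, as claimed.

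A clean feature of this route is that the curvature lower bound is only ever invoked at points $\boldsymbol{\theta}(s)$ with $s \le R_i$, so the potential circularity --- needing $\boldsymbol{\theta}_i$ itself to lie in the good neighborhood in order to control the Hessian along the entire segment --- never arises. An alternative would be to bound the loss gap via a Polyak--{\L}ojasiewicz-type inequality $L(\boldsymbol{\theta}_i) - \min L \le \tfrac{1}{2\mu_i}\|\nabla L(\boldsymbol{\theta}_i)\|_2^2 \le \mu_i R_i^2/8 \le \mu_i R_i^2/4$ and then invoke Lemma 2 directly; this is shorter but still relies on the same neighborhood curvature estimate, so I prefer the self-contained directional-derivative argument above.
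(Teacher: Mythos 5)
Your proposal is correct and follows the same overall strategy as the paper's proof: parametrize the segment from $\boldsymbol{\theta}_i^*$ to $\boldsymbol{\theta}_i$, study the directional derivative of $L$ along it, and derive a contradiction from the assumed gradient bound. Where you differ is in the one step that actually carries the quantitative content, and there your version is the sound one. The paper asserts $f'(t) \ge \|\nabla L(\boldsymbol{\theta}_i)\|_2/2$, which is not what Assumption 2 gives (a curvature bound should have the units of a Hessian eigenvalue, not of a gradient norm), and its terminal inequality $\|\nabla L(\boldsymbol{\theta}_i)\|_2 \ge R_i\|\nabla L(\boldsymbol{\theta}_i)\|_2/2$ is only a contradiction when $R_i > 2$, which is never assumed. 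You instead extract the correct consequence of Assumption 2 --- from $\|\nabla^2 L(\boldsymbol{\theta}(s))^{-1}\nabla^2 L(\boldsymbol{\theta}_i^*)\|_2 \le 2$ and the similarity to $\nabla^2 L(\boldsymbol{\theta}(s))^{-1/2}\nabla^2 L(\boldsymbol{\theta}_i^*)\nabla^2 L(\boldsymbol{\theta}(s))^{-1/2}$ you get $\nabla^2 L(\boldsymbol{\theta}_i^*) \preceq 2\nabla^2 L(\boldsymbol{\theta}(s))$, hence $g'(s) \ge \mu_i/2$ on $[0,R_i]$ --- integrate to get $g(R_i) \ge \mu_i R_i/2$, and close with strict monotonicity of $g$ plus Cauchy--Schwarz. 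That yields a genuine contradiction with no hidden restriction on $R_i$, and you are right that the curvature bound is only ever invoked at points within distance $R_i$ of the minimizer, so no circularity arises. In short: same route as the paper, but your execution repairs the two defects in the paper's own argument; the only mild caveat is that positive definiteness of the Hessian along the path (needed to invert it and to define $\mu_i>0$) is being inherited from the paper's implicit strengthening of ``strictly convex,'' which you share with the original.
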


\begin{proof}
Assume, by way of contradiction, there exists a \(\boldsymbol{\theta}_i\) with \(\left\|\nabla L(\boldsymbol{\theta}_i)\right\|_2 \leq \frac{\mu_i R_i}{2}\) and \(\|\boldsymbol{\theta}_i - \boldsymbol{\theta}_i^*\|_2 > R_i\). Define a function \(f(t)\) by:
\begin{equation}
f(t) = \nabla L(\boldsymbol{\theta}_i^* + t \cdot (\boldsymbol{\theta}_i - \boldsymbol{\theta}_i^*)) \cdot \frac{\boldsymbol{\theta}_i - \boldsymbol{\theta}_i^*}{\|\boldsymbol{\theta}_i - \boldsymbol{\theta}_i^*\|_2}, \label{eq:layerwise_function}
\end{equation}
where \(f(0) = \nabla L(\boldsymbol{\theta}_i^*)\) due to \(\boldsymbol{\theta}_i^*\) being a minimizer, and \(f(R_i) = \nabla L(\boldsymbol{\theta}_i)\).

Due to the strict convexity of \(L\), \(f(t)\) is a strictly monotone increasing function. The derivative with respect to \(t\), must satisfy:
\begin{equation}
f'(t) = \frac{d}{dt} \left( \nabla L(\boldsymbol{\theta}_i^* + t \cdot (\boldsymbol{\theta}_i - \boldsymbol{\theta}_i^*)) \cdot \frac{\boldsymbol{\theta}_i - \boldsymbol{\theta}_i^*}{\|\boldsymbol{\theta}_i - \boldsymbol{\theta}_i^*\|_2} \right) \geq \frac{\|\nabla L(\boldsymbol{\theta}_i)\|_2}{2}, \label{eq:layerwise_derivative}
\end{equation}
by Assumption 2 and the fact that the gradient norm does not increase more than twice in any direction within the ball of radius \(R_i\).

The fundamental theorem of calculus and the above inequality imply:
\begin{equation}
f(R_i) = f(0) + \int_0^{R_i} f'(t) dt \geq \int_0^{R_i} \frac{\|\nabla L(\boldsymbol{\theta}_i)\|_2}{2} dt = \frac{R_i \|\nabla L(\boldsymbol{\theta}_i)\|_2}{2}, \label{eq:ftc_application}
\end{equation}
However, \(f(R_i) = \|\nabla L(\boldsymbol{\theta}_i)\|_2\) and this leads to
\begin{equation}
\|\nabla L(\boldsymbol{\theta}_i)\|_2 \geq \frac{R_i \|\nabla L(\boldsymbol{\theta}_i)\|_2}{2}, \label{eq:contradiction_setup}
\end{equation}
a contradiction unless \(\|\boldsymbol{\theta}_i - \boldsymbol{\theta}_i^*\|_2 \leq R_i\).

Therefore, the original assumption that \(\|\boldsymbol{\theta}_i - \boldsymbol{\theta}_i^*\|_2 > R_i\) must be false, proving the lemma.
\end{proof}

\begin{lemma}[Stability of Gradient Flow]
Suppose the gradient \(\nabla L(\boldsymbol{\theta}_i(t))\) and the Hessian \(\nabla^2 L(\boldsymbol{\theta}_i(t))\) of the loss function \(L\) satisfy the conditions for all \(t \in [0, 1]\) that ensure stability and convergence to a minimizer \(\boldsymbol{\theta}_i^*\). Assume the differential equation
\begin{equation}
\frac{d\boldsymbol{\theta}_i(t)}{dt} = -\left(\nabla^2 L(\boldsymbol{\theta}_i(t))\right)^{-1} \nabla L(\boldsymbol{\theta}_i(t)), \quad \boldsymbol{\theta}_i(0) = \boldsymbol{\theta}_i, \quad \boldsymbol{\theta}_i(1) = \boldsymbol{\theta}_i^*, \label{eq:ode_layerwise}
\end{equation}
has at least one solution on the interval [0, 1] and satisfies \(\nabla L(\boldsymbol{\theta}_i(t)) = (1 - t)\nabla L(\boldsymbol{\theta}_i)\) for all \(t \in [0, 1]\).
\end{lemma}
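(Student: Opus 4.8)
The plan is to track the gradient along the trajectory rather than the parameter itself. I would set $\boldsymbol{g}(t) := \nabla L(\boldsymbol{\theta}_i(t))$ and differentiate, combining the chain rule with the stated ODE: since $\frac{d}{dt}\nabla L(\boldsymbol{\theta}_i(t)) = \nabla^2 L(\boldsymbol{\theta}_i(t))\,\frac{d\boldsymbol{\theta}_i(t)}{dt}$ and $\frac{d\boldsymbol{\theta}_i(t)}{dt} = -(\nabla^2 L(\boldsymbol{\theta}_i(t)))^{-1}\nabla L(\boldsymbol{\theta}_i(t))$, the Hessian cancels against its own inverse and we are left with the autonomous linear equation $\frac{d\boldsymbol{g}(t)}{dt} = -\boldsymbol{g}(t)$, with initial datum $\boldsymbol{g}(0) = \nabla L(\boldsymbol{\theta}_i)$. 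This cancellation is the engine of the whole argument: it converts a nonlinear, Hessian-dependent flow into a scalar-coefficient linear ODE for the gradient that can be integrated in closed form. In particular it shows $\|\boldsymbol{g}(t)\|_2$ is nonincreasing, since $\frac{d}{dt}\|\boldsymbol{g}(t)\|_2^2 = 2\langle \boldsymbol{g}(t), -\boldsymbol{g}(t)\rangle = -2\|\boldsymbol{g}(t)\|_2^2 \le 0$, and that whenever the integrated profile reaches $\boldsymbol{g}(1)=0$ the endpoint condition $\boldsymbol{\theta}_i(1)=\boldsymbol{\theta}_i^*$ follows from the uniqueness of the minimizer in Assumption 1.

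For the existence claim on $[0,1]$ I would argue in two stages. Local existence and uniqueness follow from Picard--Lindel\"of: the vector field $\boldsymbol{\theta}_i \mapsto -(\nabla^2 L(\boldsymbol{\theta}_i))^{-1}\nabla L(\boldsymbol{\theta}_i)$ is well defined and locally Lipschitz because strict convexity (Assumption 1, $\mu_i>0$) makes $\nabla^2 L$ invertible with a bounded, continuously varying inverse, while the factor-$2$ spectral control of Assumption 2 bounds how much the Hessian and its inverse may move over a ball of radius $R_i$. To prolong the local solution to the full interval I would invoke the earlier lemmas: because $\|\boldsymbol{g}(t)\|_2$ only decreases, the Gradient Norm Bound lemma keeps $\boldsymbol{\theta}_i(t)$ inside $\{\|\boldsymbol{\theta}_i-\boldsymbol{\theta}_i^*\|_2 \le R_i\}$ as soon as $\|\boldsymbol{g}(0)\|_2 \le \tfrac{\mu_i R_i}{2}$, the Parameter Bound lemma reinforces this confinement, and the Divergence-to-Infinity lemma rules out escape to infinity. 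On this compact, Hessian-regular set the field is uniformly Lipschitz, so the solution cannot blow up and extends to all of $[0,1]$.

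The step I expect to demand the most care is the precise decay profile. Integrating the reduced equation $\frac{d\boldsymbol{g}}{dt} = -\boldsymbol{g}$ faithfully produces $\boldsymbol{g}(t)=e^{-t}\boldsymbol{g}(0)$, whereas the lemma asserts the linear law $\boldsymbol{g}(t)=(1-t)\boldsymbol{g}(0)$; the two agree in value and first derivative at $t=0$ but separate afterward, and only the linear law actually vanishes at $t=1$. I would therefore make the reconciliation of these profiles the crux of the writeup, working throughout with the stated flow $-(\nabla^2 L)^{-1}\nabla L(\boldsymbol{\theta}_i(t))$ and treating the $(1-t)$ form as the leading-order description of the contraction that the flow genuinely realizes. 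Concretely, the rigorous content I can extract from the stated ODE is the monotone gradient-decay mechanism $\frac{d\boldsymbol{g}}{dt}=-\boldsymbol{g}$ together with the confinement of the trajectory to the radius-$R_i$ ball, and I would record the linear identity as the first-order surrogate that pins the arrival at $\boldsymbol{\theta}_i^*$ to the normalized endpoint $t=1$; closing the gap between the exact exponential decay and the claimed linear decay is precisely the obstacle, since the ODE exactly as written yields exponential rather than linear gradient decay.
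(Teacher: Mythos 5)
Your chain-rule computation is exactly the one in the paper's own proof: differentiating $\boldsymbol{g}(t)=\nabla L(\boldsymbol{\theta}_i(t))$ along the flow cancels the Hessian against its inverse, yielding $\frac{d\boldsymbol{g}(t)}{dt}=-\boldsymbol{g}(t)$ and hence $\boldsymbol{g}(t)=e^{-t}\boldsymbol{g}(0)$, and your existence-plus-extension argument (Picard--Lindel\"of, then confinement via the earlier lemmas) is also the paper's. The difference is that you refuse to take the final step, and you are right to refuse: the obstacle you flag is a genuine defect of the lemma as stated, not a gap in your argument. The paper's proof derives $\nabla L(\boldsymbol{\theta}_i(t))=e^{-t}\nabla L(\boldsymbol{\theta}_i)$ and then simply asserts that this ``aligns perfectly'' with the claimed law $\nabla L(\boldsymbol{\theta}_i(t))=(1-t)\nabla L(\boldsymbol{\theta}_i)$. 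This is a non sequitur: $e^{-t}\neq 1-t$ for $t>0$, and under the exponential law the gradient at $t=1$ equals $e^{-1}\nabla L(\boldsymbol{\theta}_i)\neq 0$, so the endpoint condition $\boldsymbol{\theta}_i(1)=\boldsymbol{\theta}_i^*$ fails as well --- the flow as written reaches the minimizer only as $t\to\infty$.

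The statement becomes true, and your computation closes it in one line, once the ODE is corrected to the form it has in the Sophia paper from which this lemma is adapted: freeze the gradient at its initial value,
\[
\frac{d\boldsymbol{\theta}_i(t)}{dt} \;=\; -\left(\nabla^2 L(\boldsymbol{\theta}_i(t))\right)^{-1}\nabla L(\boldsymbol{\theta}_i(0)).
\]
The same chain rule then gives $\frac{d}{dt}\nabla L(\boldsymbol{\theta}_i(t)) = -\nabla L(\boldsymbol{\theta}_i(0))$, which integrates exactly to $\nabla L(\boldsymbol{\theta}_i(t))=(1-t)\nabla L(\boldsymbol{\theta}_i)$; the gradient vanishes at $t=1$, and strict convexity (Assumption 1) forces $\boldsymbol{\theta}_i(1)=\boldsymbol{\theta}_i^*$. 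Equivalently, the time change $s=1-e^{-t}$ maps the flow as written on $[0,\infty)$ onto this frozen-gradient flow on $[0,1)$, so your exponential profile and the claimed linear one describe the same path under different clocks; but the identity in the lemma only holds for the frozen-gradient parameterization. This correction matters beyond the present lemma: the downstream results (Lemmas 5, 8 and 9) integrate $(1-t)$ and $(1-t)^2$ factors along precisely this path, so the linear law is what they consume, and substituting your (correct, for the ODE as written) exponential decay there would change those constants.
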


\begin{proof}
We demonstrate this by showing that the given ordinary differential equation (ODE) is well-posed under the assumptions. The initial value problem 
\begin{equation}
\frac{d\boldsymbol{\theta}_i(t)}{dt} = -\left(\nabla^2 L(\boldsymbol{\theta}_i(t))\right)^{-1} \nabla L(\boldsymbol{\theta}_i(t)), \label{eq:ivp_layerwise}
\end{equation}
can be solved over the interval [0, 1] due to the continuity and positive definiteness of \(\nabla^2 L\), which ensures the existence and uniqueness of the solution by the Picard-Lindelöf theorem.

Define \(T_{\text{max},i}\) as the largest positive number such that the solution exists on \([0, T_{\text{max},i}]\). We claim \(T_{\text{max},i} \geq 1\), based on the behavior of the gradient along the solution path. Considering:
\begin{equation}
\frac{d}{dt} \nabla L(\boldsymbol{\theta}_i(t)) = \nabla^2 L(\boldsymbol{\theta}_i(t)) \frac{d\boldsymbol{\theta}_i(t)}{dt} = -\nabla L(\boldsymbol{\theta}_i(t)), \label{eq:gradient_flow}
\end{equation}
which implies that \(\nabla L(\boldsymbol{\theta}_i(t)) = e^{-t} \nabla L(\boldsymbol{\theta}_i)\). Since \(\nabla L(\boldsymbol{\theta}_i(t)) = (1 - t)\nabla L(\boldsymbol{\theta}_i)\) for \(t \in [0, 1]\), the condition aligns perfectly.

Finally, since \(\boldsymbol{\theta}_i(1)\) has zero gradient by the construction of the ODE, \(\boldsymbol{\theta}_i(1)\) must be \(\boldsymbol{\theta}_i^*\). This completes the proof.
\end{proof}

\begin{lemma}[Quadratic Form Integration]
 Assume the gradient norm \(\left\|\nabla L(\boldsymbol{\theta}_i)\right\|_2\) and the Hessian \(\nabla^2 L(\boldsymbol{\theta}_i)\) satisfy certain conditions over the interval \([0, 1]\). Suppose either
\begin{enumerate}
  \item \(L(\boldsymbol{\theta}_i) - \min L \leq \frac{\mu_i R_i^2}{16},\) or
  \item \(\left\|\nabla L(\boldsymbol{\theta}_i)\right\|_2 \leq \frac{\mu_i R_i}{4},\)
\end{enumerate}
where \(\mu_i\) is the minimum eigenvalue of the Hessian at the minimizer for the parameters of layer \(i\), then it holds that
\begin{equation}
\left\|\nabla L(\boldsymbol{\theta}_i)^T(\nabla^2 L(\boldsymbol{\theta}_i))^{-1}\nabla L(\boldsymbol{\theta}_i)\right\| \leq 4(L(\boldsymbol{\theta}_i) - \min L). \label{eq:layerwise_bound}
\end{equation}
\end{lemma}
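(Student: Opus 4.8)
The plan is to reduce the claim to the gradient-flow energy identity furnished by Lemma 4 and then convert the multiplicative Hessian stability of Assumption 2 into a Loewner lower bound on the inverse Hessian. First I would use the two hypotheses to confine the entire Newton trajectory \(\boldsymbol{\theta}_i(t)\) of Lemma 4 to a region in which Assumption 2 is available. Under hypothesis (1), \(L(\boldsymbol{\theta}_i)-\min L \le \mu_i R_i^2/16\) is precisely the hypothesis of Lemma 2 (Parameter Bound) applied at radius \(R_i/2\), since \(\mu_i (R_i/2)^2/4 = \mu_i R_i^2/16\); because \(L\) is nonincreasing along the flow (its time derivative is a negative quadratic form), every iterate satisfies \(L(\boldsymbol{\theta}_i(t))-\min L \le \mu_i R_i^2/16\), so Lemma 2 gives \(\|\boldsymbol{\theta}_i(t)-\boldsymbol{\theta}_i^*\|_2 \le R_i/2\) for all \(t \ge 0\). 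Under hypothesis (2), \(\|\nabla L(\boldsymbol{\theta}_i)\|_2 \le \mu_i R_i/4\) is the hypothesis of Lemma 3 (Gradient Norm Bound) at radius \(R_i/2\), and since the flow drives \(\nabla L(\boldsymbol{\theta}_i(t)) = e^{-t}\nabla L(\boldsymbol{\theta}_i)\) with nonincreasing norm, Lemma 3 again confines \(\|\boldsymbol{\theta}_i(t)-\boldsymbol{\theta}_i^*\|_2 \le R_i/2\) throughout. In either case the triangle inequality yields \(\|\boldsymbol{\theta}_i(t)-\boldsymbol{\theta}_i\|_2 \le R_i\), so Assumption 2 may be invoked at every point of the flow.

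Next I would write down the energy identity. Differentiating \(L\) along \(\dot{\boldsymbol{\theta}}_i = -(\nabla^2 L(\boldsymbol{\theta}_i(t)))^{-1}\nabla L(\boldsymbol{\theta}_i(t))\) gives \(\tfrac{d}{dt}L(\boldsymbol{\theta}_i(t)) = -\nabla L(\boldsymbol{\theta}_i(t))^\top (\nabla^2 L(\boldsymbol{\theta}_i(t)))^{-1}\nabla L(\boldsymbol{\theta}_i(t))\), so integrating from \(0\) to \(\infty\) and substituting the exponential gradient decay of Lemma 4 yields
\[
L(\boldsymbol{\theta}_i)-\min L = \int_0^\infty e^{-2t}\, \nabla L(\boldsymbol{\theta}_i)^\top (\nabla^2 L(\boldsymbol{\theta}_i(t)))^{-1}\nabla L(\boldsymbol{\theta}_i)\, dt .
\]

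The decisive step is to replace the time-varying inverse Hessian inside the integrand by the fixed one at \(\boldsymbol{\theta}_i\). Applying Assumption 2 with the roles of \(\boldsymbol{\theta}_i\) and \(\boldsymbol{\theta}_i(t)\) swapped (legitimate since those roles are symmetric and the pair lies within \(R_i\)) gives \(\|(\nabla^2 L(\boldsymbol{\theta}_i))^{-1}\nabla^2 L(\boldsymbol{\theta}_i(t))\|_2 \le 2\); as both Hessians are symmetric positive definite by strict convexity, the largest eigenvalue is bounded by the spectral norm, so \(\nabla^2 L(\boldsymbol{\theta}_i(t)) \preceq 2\,\nabla^2 L(\boldsymbol{\theta}_i)\), and inverting (which reverses the Loewner order) gives \((\nabla^2 L(\boldsymbol{\theta}_i(t)))^{-1} \succeq \tfrac12 (\nabla^2 L(\boldsymbol{\theta}_i))^{-1}\). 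Plugging this lower bound into the integrand, pulling the now-constant quadratic form out of the integral, and using \(\int_0^\infty e^{-2t}\,dt = \tfrac12\) yields
\[
L(\boldsymbol{\theta}_i)-\min L \ \ge\ \tfrac14\,\nabla L(\boldsymbol{\theta}_i)^\top (\nabla^2 L(\boldsymbol{\theta}_i))^{-1}\nabla L(\boldsymbol{\theta}_i),
\]
which rearranges to the asserted bound (the left side of the claim being a nonnegative scalar, its absolute value is itself).

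The main obstacle I anticipate is the passage from the spectral-norm bound of Assumption 2 to the Loewner ordering in the last step: the product \((\nabla^2 L)^{-1}\nabla^2 L\) of two symmetric positive definite factors is itself not symmetric, so one must argue through the similarity to \((\nabla^2 L(\boldsymbol{\theta}_i))^{-1/2}\nabla^2 L(\boldsymbol{\theta}_i(t))(\nabla^2 L(\boldsymbol{\theta}_i))^{-1/2}\) (equivalently, use that the spectral radius is dominated by any operator norm) to read off the eigenvalue bound and hence the Loewner order. A secondary technical point is the trajectory-confinement argument, where Assumption 2 must genuinely hold at every \(t\); this is exactly where the tighter constants \(16\) and \(4\) in the hypotheses, versus \(4\) and \(2\) in Lemmas 2 and 3, earn their keep by supplying the factor-of-two slack that places the whole flow inside the ball of radius \(R_i\).
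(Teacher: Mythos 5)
Your proposal follows essentially the same route as the paper: integrate the quadratic form along the Newton gradient flow of Lemma 4, confine the trajectory to the radius-$R_i$ ball via Lemmas 2 and 3 so that Assumption 2 applies, and use the resulting Loewner bound $(\nabla^2 L(\boldsymbol{\theta}_i(t)))^{-1}\succeq\tfrac12(\nabla^2 L(\boldsymbol{\theta}_i))^{-1}$ to freeze the inverse Hessian in the integrand. If anything your version is tidier: by working with the exponential decay $\nabla L(\boldsymbol{\theta}_i(t))=e^{-t}\nabla L(\boldsymbol{\theta}_i)$ that Lemma 4's proof actually derives (rather than the $(1-t)$ form it asserts), your integral $\int_0^\infty e^{-2t}\,dt=\tfrac12$ yields the stated constant $4$ exactly, whereas the paper's $\int_0^1(1-t)^2\,dt=\tfrac13$ produces a factor of $6$ that does not match the lemma's claimed bound.
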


\begin{proof}
Let \(\{\boldsymbol{\theta}_i(t)\}_{t=0}^1\) be the solution of the following differential equation:
\begin{equation}
\frac{d\boldsymbol{\theta}_i(t)}{dt} = -(\nabla^2 L(\boldsymbol{\theta}_i(t)))^{-1}\nabla L(\boldsymbol{\theta}_i(t)), \quad \boldsymbol{\theta}_i(0) = \boldsymbol{\theta}_i, \quad \boldsymbol{\theta}_i(1) = \boldsymbol{\theta}_i^*. \label{eq:ode_layer}
\end{equation}

From Lemma 4, adapted for each layer, we have \(\nabla L(\boldsymbol{\theta}_i(t)) = (1 - t)\nabla L(\boldsymbol{\theta}_i)\) for all \(t \in [0, 1]\). Assume \(\left\|\boldsymbol{\theta}_i(t) - \boldsymbol{\theta}_i^*\right\| \leq R_i/2\) by Lemmas 2 and 3.

By Assumption 2, for each layer, this implies:
\begin{equation}
(\nabla^2 L(\boldsymbol{\theta}_i(t)))^{-1} \geq \frac{1}{2} (\nabla^2 L(\boldsymbol{\theta}_i))^{-1} \label{eq:layerwise_hessian_bound}
\end{equation}
for all \(t \in [0, 1]\).

Integrating the quadratic form along the path, we have:
\begin{align}
L(\boldsymbol{\theta}_i) - \min L &= L(\boldsymbol{\theta}_i(0)) - L(\boldsymbol{\theta}_i(1)) \notag \\
&= \int_0^1 (1 - t)^2 (\nabla L(\boldsymbol{\theta}_i))^T (\nabla^2 L(\boldsymbol{\theta}_i(t)))^{-1}\nabla L(\boldsymbol{\theta}_i) dt. \label{eq:integration_path}
\end{align}

Substituting the inequality from \eqref{eq:layerwise_hessian_bound}, we simplify:
\begin{equation}
\begin{aligned}
& \frac{1}{2} \int_0^1 (1 - t)^2 dt (\nabla L(\boldsymbol{\theta}_i))^T (\nabla^2 L(\boldsymbol{\theta}_i))^{-1}\nabla L (\boldsymbol{\theta}_i)\\
= & \frac{1}{6} (\nabla L(\boldsymbol{\theta}_i))^T (\nabla^2 L(\boldsymbol{\theta}_i))^{-1}\nabla L(\boldsymbol{\theta}_i).
\end{aligned}
\end{equation}
This integration shows that \(\left\|\nabla L(\boldsymbol{\theta}_i)^T(\nabla^2 L(\boldsymbol{\theta}_i))^{-1}\nabla L(\boldsymbol{\theta}_i)\right\| \leq 4(L(\boldsymbol{\theta}_i) - \min L)\), completing the proof.
\end{proof}

\begin{lemma}[Gradient and Loss Bound]
Assuming the gradient norm \(\|\nabla L(\boldsymbol{\theta}_i)\|_2\) and the conditions on the loss function \(L\) are such that either
\begin{enumerate}
  \item \(L(\boldsymbol{\theta}_i) - \min L \leq \frac{\mu_i R_i^2}{4},\) or
  \item \(\|\nabla L(\boldsymbol{\theta}_i)\|_2 \leq \frac{R_i \mu_i}{2},\)
\end{enumerate}
it holds that
\begin{equation}
L(\boldsymbol{\theta}_i) - \min L \leq \frac{1}{\mu_i} \|\nabla L(\boldsymbol{\theta}_i)\|^2. \label{eq:layerwise_loss_bound}
\end{equation}
\end{lemma}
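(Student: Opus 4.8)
The plan is to reduce the claim to a local Polyak--\L{}ojasiewicz (gradient domination) inequality, exploiting the fact that both hypotheses place $\boldsymbol{\theta}_i$ inside the ball of radius $R_i$ about the minimizer $\boldsymbol{\theta}_i^*$ on which Assumption 2 is active. Concretely, I would first dispatch the two cases of the hypothesis: if $L(\boldsymbol{\theta}_i) - \min L \le \mu_i R_i^2/4$ then Lemma 2 yields $\|\boldsymbol{\theta}_i - \boldsymbol{\theta}_i^*\|_2 \le R_i$, while if $\|\nabla L(\boldsymbol{\theta}_i)\|_2 \le \mu_i R_i/2$ then Lemma 3 yields the same containment. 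In either case the whole segment $\{\boldsymbol{\theta}_i^* + s(\boldsymbol{\theta}_i - \boldsymbol{\theta}_i^*) : s\in[0,1]\}$ stays within distance $R_i$ of $\boldsymbol{\theta}_i^*$, so Assumption 2 is available at every point of it (here, as in Lemma 1, $L$ is read as $L(\cdot\mid\boldsymbol{\theta}_{-i})$ on $\mathbb{R}^{d_i}$).

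Next I would convert Assumption 2 into a uniform lower bound on the Hessian along this segment. Writing $A := \nabla^2 L(\boldsymbol{\theta}_i^*)$, Assumption 1 gives $A \succeq \mu_i \boldsymbol{I}$. The delicate point is to orient Assumption 2 correctly: taking the reference point to be an arbitrary segment point $\xi$ and the second point $\boldsymbol{\theta}_i^*$ gives $\|\nabla^2 L(\xi)^{-1} A\|_2 \le 2$, hence $A \preceq 2\,\nabla^2 L(\xi)$ and therefore $\nabla^2 L(\xi) \succeq \tfrac12 A \succeq \tfrac{\mu_i}{2}\boldsymbol{I}$. Orienting the inequality the other way would only produce an upper bound, which is the easy-to-make error to avoid. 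Thus $L$ is $\tfrac{\mu_i}{2}$-strongly convex along the segment, which is all the strong convexity the argument requires.

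Finally I would run the standard strong-convexity/PL computation using only this segmentwise bound. By Taylor's theorem with Lagrange remainder, expanding at $\boldsymbol{\theta}_i$ and setting $v := \boldsymbol{\theta}_i^* - \boldsymbol{\theta}_i$,
\[
\min L \;=\; L(\boldsymbol{\theta}_i) + \nabla L(\boldsymbol{\theta}_i)^\top v + \tfrac12 v^\top \nabla^2 L(\xi)\, v \;\ge\; L(\boldsymbol{\theta}_i) + \nabla L(\boldsymbol{\theta}_i)^\top v + \tfrac{\mu_i}{4}\|v\|_2^2 .
\]
Rearranging gives $L(\boldsymbol{\theta}_i) - \min L \le -\nabla L(\boldsymbol{\theta}_i)^\top v - \tfrac{\mu_i}{4}\|v\|_2^2$, and bounding this specific right-hand side by its maximum over all $v \in \mathbb{R}^{d_i}$ (attained at $v = -\tfrac{2}{\mu_i}\nabla L(\boldsymbol{\theta}_i)$) produces exactly $\tfrac{1}{\mu_i}\|\nabla L(\boldsymbol{\theta}_i)\|_2^2$, the claimed inequality; note that the factor $2$ lost in Assumption 2 is precisely what turns the textbook constant $\tfrac{1}{2\mu_i}$ into $\tfrac{1}{\mu_i}$.

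I expect the main obstacle to be purely the second step: correctly extracting a \emph{lower} Hessian bound from the multiplicative condition of Assumption 2 (choosing the right reference point and invoking $A \succeq \mu_i\boldsymbol{I}$ only at the minimizer), and verifying that the remainder point $\xi$ genuinely lies in the region where Assumption 2 is valid --- which is exactly what the containment supplied by Lemmas 2 and 3 secures. The remaining maximization of the quadratic in $v$ is routine.
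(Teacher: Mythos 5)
Your proof is correct and follows essentially the same route as the paper's --- a local strong-convexity/PL argument via Taylor expansion near the minimizer --- but yours is the careful version: you invoke Lemmas 2 and 3 to secure the containment $\|\boldsymbol{\theta}_i - \boldsymbol{\theta}_i^*\|_2 \le R_i$, orient Assumption 2 correctly to obtain $\nabla^2 L(\xi) \succeq \tfrac{\mu_i}{2}\boldsymbol{I}$ along the segment, and track the resulting factor of $2$ so that the maximization over $v$ lands exactly on the stated constant $1/\mu_i$. The paper's own proof is only a sketch that asserts a $\tfrac{1}{2\mu_i}$ bound from the quadratic approximation without accounting for that multiplicative Hessian degradation, so your derivation is the one that actually matches the lemma as stated.
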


\begin{proof}
The proof follows a reasoning similar to that of Lemma 5 but adapted for each layer. Given the conditions on \(L(\boldsymbol{\theta}_i) - \min L\) or the norm of the gradient \(\|\nabla L(\boldsymbol{\theta}_i)\|_2\), we utilize the connection between the gradient norm and the difference in loss to bound \(L(\boldsymbol{\theta}_i) - \min L\).

From the gradient norm bound \(\|\nabla L(\boldsymbol{\theta}_i)\|_2\) and the positive definiteness and continuity of \(\nabla^2 L\), the loss function exhibits quadratic behavior near the minimizer. This is characterized by the Taylor expansion:
\begin{equation}
L(\boldsymbol{\theta}_i) \approx L(\boldsymbol{\theta}_i^*) + \frac{1}{2} (\boldsymbol{\theta}_i - \boldsymbol{\theta}_i^*)^T \nabla^2 L(\boldsymbol{\theta}_i^*) (\boldsymbol{\theta}_i - \boldsymbol{\theta}_i^*), \label{eq:taylor_expansion_layerwise}
\end{equation}
where \(\boldsymbol{\theta}_i^*\) is the minimizer of \(L\).

Using the bound \(\|\nabla L(\boldsymbol{\theta}_i)\|_2 \leq \frac{R_i \mu_i}{2}\), the Taylor series expansion around \(\boldsymbol{\theta}_i^*\) implies:
\begin{equation}
L(\boldsymbol{\theta}_i) - \min L \leq \frac{1}{2\mu_i} \|\nabla L(\boldsymbol{\theta}_i)\|^2, \label{eq:loss_gradient_relation_layerwise}
\end{equation}
satisfying the condition given by Lemma 6.

This completes the proof by relating the behavior of the loss function’s gradient at \(\boldsymbol{\theta}_i\) to its minimum value, leveraging the quadratic approximation provided by the Hessian at the minimizer.
\end{proof}

\begin{lemma}[Norm Bound on Inverse Hessian and Gradient Product]
 Assuming the gradient \(\nabla L(\boldsymbol{\theta}_i)\) and the Hessian \(\nabla^2 L(\boldsymbol{\theta}_i)\) satisfy certain conditions such that either
\begin{enumerate}
  \item \(L(\boldsymbol{\theta}_i) - \min L \leq \frac{\mu_i R_i^2}{16},\) or
  \item \(\|\nabla L(\boldsymbol{\theta}_i)\|_2 \leq \frac{R_i \mu_i}{4},\)
\end{enumerate}
it holds that
\begin{equation}
\|\nabla^2 L(\boldsymbol{\theta}_i)^{-1} \nabla L(\boldsymbol{\theta}_i)\|_2 \leq \frac{8(L(\boldsymbol{\theta}_i) - \min L)}{\mu_i}. \label{eq:layerwise_norm_bound}
\end{equation}
\end{lemma}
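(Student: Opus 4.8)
The plan is to control the Newton step $\boldsymbol{H}^{-1}\boldsymbol{g}$, where I abbreviate $\boldsymbol{H} = \nabla^2 L(\boldsymbol{\theta}_i)$ and $\boldsymbol{g} = \nabla L(\boldsymbol{\theta}_i)$, by combining the quadratic-form estimate of Lemma 5 with a spectral lower bound on $\boldsymbol{H}$ coming from Assumption 2. The first observation is that each of the two hypotheses of the present lemma ($L(\boldsymbol{\theta}_i) - \min L \le \mu_i R_i^2/16$ and $\|\boldsymbol{g}\|_2 \le R_i\mu_i/4$) coincides exactly with one of the two hypotheses of Lemma 5, so Lemma 5 applies verbatim and delivers $\boldsymbol{g}^{\top}\boldsymbol{H}^{-1}\boldsymbol{g} \le 4\,(L(\boldsymbol{\theta}_i) - \min L)$. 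Writing $\boldsymbol{H}^{-1/2}$ for the inverse square root of the symmetric positive definite Hessian, this reads $\|\boldsymbol{H}^{-1/2}\boldsymbol{g}\|_2 \le 2\sqrt{L(\boldsymbol{\theta}_i) - \min L}$.

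Next I would localize $\boldsymbol{\theta}_i$ near the minimizer so that Assumption 2 becomes usable. Under either hypothesis, Lemmas 2 and 3 give $\|\boldsymbol{\theta}_i - \boldsymbol{\theta}_i^*\|_2 \le R_i$, placing $\boldsymbol{\theta}_i$ and $\boldsymbol{\theta}_i^*$ within distance $R_i$. I then apply Assumption 2 to the pair $(\boldsymbol{\theta}_i,\boldsymbol{\theta}_i^*)$ in the direction that bounds $\|\boldsymbol{H}^{-1}\nabla^2 L(\boldsymbol{\theta}_i^*)\|_2 \le 2$; since $\boldsymbol{H}^{-1}\nabla^2 L(\boldsymbol{\theta}_i^*)$ is similar to the symmetric matrix $\boldsymbol{H}^{-1/2}\nabla^2 L(\boldsymbol{\theta}_i^*)\boldsymbol{H}^{-1/2}$, this yields the comparison $\tfrac12\,\nabla^2 L(\boldsymbol{\theta}_i^*) \preceq \boldsymbol{H}$, whence $\lambda_{\min}(\boldsymbol{H}) \ge \tfrac12\,\lambda_{\min}(\nabla^2 L(\boldsymbol{\theta}_i^*)) = \mu_i/2$ and therefore $\|\boldsymbol{H}^{-1/2}\|_2 = \lambda_{\min}(\boldsymbol{H})^{-1/2} \le \sqrt{2/\mu_i}$.

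Combining the two estimates submultiplicatively gives $\|\boldsymbol{H}^{-1}\boldsymbol{g}\|_2 = \|\boldsymbol{H}^{-1/2}(\boldsymbol{H}^{-1/2}\boldsymbol{g})\|_2 \le \|\boldsymbol{H}^{-1/2}\|_2\,\|\boldsymbol{H}^{-1/2}\boldsymbol{g}\|_2 \le \sqrt{2/\mu_i}\cdot 2\sqrt{L(\boldsymbol{\theta}_i)-\min L}$, i.e. $\|\boldsymbol{H}^{-1}\boldsymbol{g}\|_2 \le \sqrt{8\,(L(\boldsymbol{\theta}_i)-\min L)/\mu_i}$. To reach the stated right-hand side $8\,(L(\boldsymbol{\theta}_i)-\min L)/\mu_i$, I would invoke the regime in which the scaled Newton decrement is at least one, namely $8\,(L(\boldsymbol{\theta}_i)-\min L)/\mu_i \ge 1$: there $\sqrt{B}\le B$ with $B = 8\,(L(\boldsymbol{\theta}_i)-\min L)/\mu_i$, and the claimed inequality $\|\boldsymbol{H}^{-1}\boldsymbol{g}\|_2 \le B$ follows immediately from the square-root bound.

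The main obstacle is exactly this final conversion. The chain above naturally produces the square-root form $\sqrt{8(L(\boldsymbol{\theta}_i)-\min L)/\mu_i}$, which is the form one expects on scaling grounds, whereas the lemma is stated without the square root; the two are equivalent only when $L(\boldsymbol{\theta}_i)-\min L \ge \mu_i/8$. I would therefore either (i) restrict the argument to that regime, in which $\sqrt{B}\le B$ closes the gap, or (ii) carry the square-root bound forward directly, since the downstream use of this lemma requires only control of the step length and is insensitive to which of the two (regime-equivalent) forms is used. Checking that the intended application remains in regime (i), or else rescaling the constants so that the square-root form suffices, is the delicate step; the spectral comparison from Assumption 2 and the Newton-decrement bound from Lemma 5 are routine once the localization $\|\boldsymbol{\theta}_i-\boldsymbol{\theta}_i^*\|_2 \le R_i$ has been established.
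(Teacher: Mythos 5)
Your proposal follows essentially the same route as the paper's own proof: Lemma 5 supplies the Newton--decrement bound $\boldsymbol{g}^{\top}\boldsymbol{H}^{-1}\boldsymbol{g} \le 4\,(L(\boldsymbol{\theta}_i)-\min L)$, Lemmas 2--3 localize $\boldsymbol{\theta}_i$ within radius $R_i$ of $\boldsymbol{\theta}_i^*$, and Assumption 2 turns $\nabla^2 L(\boldsymbol{\theta}_i^*)\succeq \mu_i I$ into $\boldsymbol{H}\succeq \tfrac{\mu_i}{2}I$, i.e.\ $\boldsymbol{H}^{-1}\preceq \tfrac{2}{\mu_i}I$ (the paper simply asserts these two spectral facts; your similarity argument via $\boldsymbol{H}^{-1/2}\nabla^2 L(\boldsymbol{\theta}_i^*)\boldsymbol{H}^{-1/2}$ is the justification it omits). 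The one point where you diverge is the point that matters: you correctly note that this chain yields the square-root bound $\|\boldsymbol{H}^{-1}\boldsymbol{g}\|_2 \le \sqrt{8(L(\boldsymbol{\theta}_i)-\min L)/\mu_i}$, and that the linear bound in the statement follows from it only when $8(L(\boldsymbol{\theta}_i)-\min L)/\mu_i \ge 1$. This obstacle is not a defect of your argument but of the paper's: its proof bounds the \emph{squared} norm $\|\boldsymbol{H}^{-1}\boldsymbol{g}\|_2^2$ by a constant times $(L(\boldsymbol{\theta}_i)-\min L)/\mu_i$ and then, ``finally,'' states the unsquared norm is at most $8(L(\boldsymbol{\theta}_i)-\min L)/\mu_i$ --- silently dropping the square root, which is exactly the invalid conversion you declined to make. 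Since the lemma's hypotheses cap $L(\boldsymbol{\theta}_i)-\min L$ from \emph{above}, the small-loss regime in which the conversion fails is precisely the regime of interest, so the gap is real. Your two proposed repairs --- restricting to the large-decrement regime, or carrying the square-root form forward (which is all the downstream step-length arguments, e.g.\ the no-clipping claim in Lemma 11, actually require) --- are the right way to patch both your write-up and the paper's.
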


\begin{proof}
We derive this by using the properties of the Hessian and the gradient for the loss function \(L\) specific to layer \(i\). From Lemma 2, we have:
\[
\|\boldsymbol{\theta}_i - \boldsymbol{\theta}_i^*\|_2 \leq R_i.
\]
Given that \(\nabla^2 L(\boldsymbol{\theta}_i^*) \geq \frac{\mu_i}{2} I\), and from Lemma 5 adapted for layer \(i\), it holds that:
\[
4(L(\boldsymbol{\theta}_i) - \min L) \geq \|\nabla L(\boldsymbol{\theta}_i)^T (\nabla^2 L(\boldsymbol{\theta}_i))^{-1} \nabla L(\boldsymbol{\theta}_i)\|.
\]
Expanding and manipulating the inequality, we derive:
\[
\|\nabla L(\boldsymbol{\theta}_i)\|^T (\nabla^2 L(\boldsymbol{\theta}_i))^{-1} \|\nabla L(\boldsymbol{\theta}_i)\| = \|\nabla^2 L(\boldsymbol{\theta}_i)^{-1} \nabla L(\boldsymbol{\theta}_i)\|^2.
\]
Given \(\nabla^2 L(\boldsymbol{\theta}_i)^{-1} \leq \frac{2}{\mu_i} I\), we can substitute this into our calculation to find:
\[
\|\nabla^2 L(\boldsymbol{\theta}_i)^{-1} \nabla L(\boldsymbol{\theta}_i)\|^2 \leq \frac{2}{\mu_i} \|\nabla L(\boldsymbol{\theta}_i)\|^2 \leq \frac{4(L(\boldsymbol{\theta}_i) - \min L)}{\mu_i},
\]
and finally,
\[
\|\nabla^2 L(\boldsymbol{\theta}_i)^{-1} \nabla L(\boldsymbol{\theta}_i)\|_2 \leq \frac{8(L(\boldsymbol{\theta}_i) - \min L)}{\mu_i},
\]
completing the proof.
\end{proof}

\begin{lemma}
For any $\boldsymbol{\theta}_i \in \mathbb{R}^{d_i}$, where $\boldsymbol{\theta}_i$ represents the parameters for the $i$-th layer, and satisfying that 
\[
\|\nabla^2 L(\boldsymbol{\theta}_i)^{-1} \nabla L(\boldsymbol{\theta}_i)\|_2 \leq \frac{R}{2},
\]
it holds that
\[
L(\boldsymbol{\theta}_i) - \min L \leq \nabla L(\boldsymbol{\theta}_i)^T (\nabla^2 L(\boldsymbol{\theta}_i))^{-1} \nabla L(\boldsymbol{\theta}_i) \leq 4 (L(\boldsymbol{\theta}_i) - \min L).
\]
\end{lemma}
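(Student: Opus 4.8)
The plan is to reuse the Newton-flow identity underlying Lemmas 4 and 5: the two-sided bound is just that single integral read off in both directions. I would start from the gradient-flow trajectory $\{\boldsymbol{\theta}_i(t)\}$ solving $\frac{d\boldsymbol{\theta}_i(t)}{dt} = -(\nabla^2 L(\boldsymbol{\theta}_i(t)))^{-1}\nabla L(\boldsymbol{\theta}_i(t))$ with $\boldsymbol{\theta}_i(0)=\boldsymbol{\theta}_i$. As derived in the proof of Lemma 4, along this flow $\frac{d}{dt}\nabla L(\boldsymbol{\theta}_i(t)) = -\nabla L(\boldsymbol{\theta}_i(t))$, so $\nabla L(\boldsymbol{\theta}_i(t)) = e^{-t}\nabla L(\boldsymbol{\theta}_i)$ and $\boldsymbol{\theta}_i(t)\to\boldsymbol{\theta}_i^*$ as $t\to\infty$. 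Differentiating the loss along the flow then gives the key identity
\[
L(\boldsymbol{\theta}_i) - \min L = \int_0^\infty e^{-2t}\, \nabla L(\boldsymbol{\theta}_i)^T (\nabla^2 L(\boldsymbol{\theta}_i(t)))^{-1}\nabla L(\boldsymbol{\theta}_i)\, dt.
\]

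Next I would bracket the moving inverse Hessian against the fixed one at the base point. Assumption 2 gives $\|\nabla^2 L(\boldsymbol{\theta}_i')^{-1}\nabla^2 L(\boldsymbol{\theta}_i)\|_2 \le 2$ whenever $\|\boldsymbol{\theta}_i-\boldsymbol{\theta}_i'\|_2 \le R_i$; applying it with the two points in both orders and using that $A^{-1}B$ shares its spectrum with the symmetric $A^{-1/2}BA^{-1/2}$, I obtain the Loewner bracketing
\[
\tfrac12\,(\nabla^2 L(\boldsymbol{\theta}_i))^{-1} \preceq (\nabla^2 L(\boldsymbol{\theta}_i(t)))^{-1} \preceq 2\,(\nabla^2 L(\boldsymbol{\theta}_i))^{-1}
\]
for all $t$ along the trajectory. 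Writing $Q := \nabla L(\boldsymbol{\theta}_i)^T(\nabla^2 L(\boldsymbol{\theta}_i))^{-1}\nabla L(\boldsymbol{\theta}_i)$, substituting these inequalities into the identity, and using $\int_0^\infty e^{-2t}\,dt = \tfrac12$, I get $\tfrac14 Q \le L(\boldsymbol{\theta}_i)-\min L \le Q$. Rearranging the two halves yields exactly the claimed $L(\boldsymbol{\theta}_i)-\min L \le Q \le 4(L(\boldsymbol{\theta}_i)-\min L)$.

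The step needing real care---and the one the hypothesis $\|\nabla^2 L(\boldsymbol{\theta}_i)^{-1}\nabla L(\boldsymbol{\theta}_i)\|_2 \le R/2$ is there to guarantee---is showing that the entire trajectory $\{\boldsymbol{\theta}_i(t)\}_{t\ge 0}$ remains inside the radius-$R_i$ ball around $\boldsymbol{\theta}_i$ on which Assumption 2 holds, so the bracketing applies uniformly in $t$. I would control the displacement by integrating the velocity, $\boldsymbol{\theta}_i^*-\boldsymbol{\theta}_i = -\int_0^\infty e^{-t}(\nabla^2 L(\boldsymbol{\theta}_i(t)))^{-1}\nabla L(\boldsymbol{\theta}_i)\,dt$, bounding the integrand through the base-point Newton step $\|(\nabla^2 L(\boldsymbol{\theta}_i))^{-1}\nabla L(\boldsymbol{\theta}_i)\|_2 \le R/2$ and the bracketing itself, and closing the estimate with a continuity/bootstrap argument on the exit time (or, equivalently, by invoking the parameter bounds of Lemmas 2, 3 and 8). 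This bootstrap is the main obstacle, because the very Hessian comparison used to bound the displacement is valid only while the trajectory stays in the ball; once uniform containment is secured, the remaining substitutions into the integral identity are routine.
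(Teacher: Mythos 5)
Your proposal follows essentially the same route as the paper's proof: the Newton flow $\dot{\boldsymbol{\theta}}_i(t) = -(\nabla^2 L(\boldsymbol{\theta}_i(t)))^{-1}\nabla L(\boldsymbol{\theta}_i(t))$, the integral identity for $L(\boldsymbol{\theta}_i)-\min L$ along it, the Loewner bracketing $\tfrac12(\nabla^2 L(\boldsymbol{\theta}_i))^{-1}\preceq(\nabla^2 L(\boldsymbol{\theta}_i(t)))^{-1}\preceq 2(\nabla^2 L(\boldsymbol{\theta}_i))^{-1}$ from Assumption 2, and an exit-time bootstrap driven by the hypothesis $\|(\nabla^2 L(\boldsymbol{\theta}_i))^{-1}\nabla L(\boldsymbol{\theta}_i)\|_2\le R/2$ to keep the trajectory inside the radius-$R_i$ ball. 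The only substantive difference is the time parametrization (your exponential decay $\nabla L(\boldsymbol{\theta}_i(t))=e^{-t}\nabla L(\boldsymbol{\theta}_i)$ on $[0,\infty)$ versus the paper's linear one on $[0,1]$), and your version actually recovers the constants $1$ and $4$ more cleanly than the paper's computation does.
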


\begin{proof}
Let $\{\boldsymbol{\theta}_i(t)\}_{t=0}^1$ be the solution of the following differential equation:
\[
\frac{d\boldsymbol{\theta}_i(t)}{dt} = -(\nabla^2 L(\boldsymbol{\theta}_i(t)))^{-1} \nabla L(\boldsymbol{\theta}_i(t)),
\]
where $\boldsymbol{\theta}_i(0) = \boldsymbol{\theta}_i$ and $\boldsymbol{\theta}_i(1) = \boldsymbol{\theta}_i^*$.

We claim that for all $t \in [0,1]$, $\|\boldsymbol{\theta}_i(t) - \boldsymbol{\theta}_i\|_2 \leq R_i$. If not, let $T$ be the smallest positive number such that $\|\boldsymbol{\theta}_i(T) - \boldsymbol{\theta}_i\|_2 = R_i$. Such $T$ exists because $\|\boldsymbol{\theta}_i(t) - \boldsymbol{\theta}_i\|_2$ is continuous in $t$ and $\|\boldsymbol{\theta}_i(0) - \boldsymbol{\theta}_i\|_2 = 0$.

We can now bound the distance:
\[
R_i = \|\boldsymbol{\theta}_i(T) - \boldsymbol{\theta}_i(0)\|_2 \leq \int_0^T \left\|\frac{d\boldsymbol{\theta}_i(t)}{dt}\right\|_2 dt.
\]
Substituting the derivative expression for $\boldsymbol{\theta}_i(t)$, we get:
\[
= \int_0^T \left\| (\nabla^2 L(\boldsymbol{\theta}_i(t)))^{-1} \nabla L(\boldsymbol{\theta}_i(t)) \right\|_2 dt
\]
\[
\leq \int_0^T \|\nabla^2 L(\boldsymbol{\theta}_i(t))^{-1}\|_2 \|\nabla L(\boldsymbol{\theta}_i(t))\|_2 dt.
\]
From Assumption 2, we know that:
\[
\nabla^2 L(\boldsymbol{\theta}_i)^{-1} \leq 2 (\nabla^2 L(\boldsymbol{\theta}_i(t)))^{-1}.
\]
Thus, we can bound this integral:
\[
\leq 2 \int_0^T \|\nabla^2 L(\boldsymbol{\theta}_i(t))^{-1} \nabla L(\boldsymbol{\theta}_i(t))\|_2 dt
\]
\[
\leq 2T \|\nabla^2 L(\boldsymbol{\theta}_i(t))^{-1} \nabla L(\boldsymbol{\theta}_i(t))\|_2.
\]
Using the assumption that $\|\nabla^2 L(\boldsymbol{\theta}_i)^{-1} \nabla L(\boldsymbol{\theta}_i)\|_2 \leq \frac{R_i}{2}$, we get:
\[
\leq 2T \frac{R_i}{2} = R_iT,
\]
which implies that $T = 1$.

Therefore, for all $t \in [0,1]$, $\|\boldsymbol{\theta}_i(t) - \boldsymbol{\theta}_i\|_2 \leq R_i$. By Assumption 2, we also have:
\[
2 (\nabla^2 L(\boldsymbol{\theta}_i))^{-1} \preceq (\nabla^2 L(\boldsymbol{\theta}_i(t)))^{-1} \preceq \frac{1}{2} (\nabla^2 L(\boldsymbol{\theta}_i))^{-1}.
\]

Now, we compute the difference in the loss function:
\[
L(\boldsymbol{\theta}_i) - \min L = L(\boldsymbol{\theta}_i(0)) - L(\boldsymbol{\theta}_i(1)) = \int_0^1 \nabla L(\boldsymbol{\theta}_i(t))^T (\nabla^2 L(\boldsymbol{\theta}_i(t)))^{-1} \nabla L(\boldsymbol{\theta}_i(t)) dt.
\]
\[
= \int_0^1 (1-t) \nabla L(\boldsymbol{\theta}_i)^T (\nabla^2 L(\boldsymbol{\theta}_i))^{-1} \nabla L(\boldsymbol{\theta}_i) dt.
\]
Thus:
\[
L(\boldsymbol{\theta}_i) - \min L \leq \frac{1}{2} \nabla L(\boldsymbol{\theta}_i)^T (\nabla^2 L(\boldsymbol{\theta}_i))^{-1} \nabla L(\boldsymbol{\theta}_i).
\]

Finally, using the fact that $\int_0^1 (1-t) dt = \frac{1}{2}$, we complete the proof, showing that:
\[
L(\boldsymbol{\theta}_i) - \min L \leq \nabla L(\boldsymbol{\theta}_i)^T (\nabla^2 L(\boldsymbol{\theta}_i))^{-1} \nabla L(\boldsymbol{\theta}_i) \leq 4 (L(\boldsymbol{\theta}_i) - \min L).
\]
\end{proof}

\begin{lemma}
If $\lambda_i \leq \frac{R_i}{2\sqrt{d_i}}$, then for any $\Delta \leq \frac{R_i \mu}{10}$ and any $\boldsymbol{\theta}_i \in \mathbb{R_i}^{d_i}$ satisfying
\[
\sum_{i=1}^{d_i} \min\left\{ \boldsymbol{p}_i^T \nabla L(\boldsymbol{\theta}_i) \sigma_i^{-1} \boldsymbol{p}_i^T \nabla L(\boldsymbol{\theta}_i) \right\} \leq \Delta,
\]
where $\nabla^2 L(\boldsymbol{\theta}_i) = V_i \Sigma_i V_i^T$ is the eigen-decomposition of $\nabla^2 L(\boldsymbol{\theta}_i)$, $\boldsymbol{p}_i$ is the $i$-th row of $V_i$, and $\Sigma_i = \text{diag}(\sigma_1, \dots, \sigma_{d_i})$, it holds that
\[
L(\boldsymbol{\theta}_i) - \min L \leq \frac{25 \Delta^2}{\lambda_i^2 \mu}.
\]
\end{lemma}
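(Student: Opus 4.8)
The plan is to diagonalize the layer Hessian and argue coordinatewise. With $\nabla^2 L(\boldsymbol\theta_i)=V_i\Sigma_iV_i^\top$, $\Sigma_i=\operatorname{diag}(\sigma_1,\dots,\sigma_{d_i})$, and $g_j:=\boldsymbol p_j^\top\nabla L(\boldsymbol\theta_i)$, the quadratic form governing Lemmas 5 and 8 reads $\nabla L^\top(\nabla^2L)^{-1}\nabla L=\sum_{j=1}^{d_i}\sigma_j^{-1}g_j^2$, while each summand of the hypothesis is $m_j=\min\{\lambda_i|g_j|,\sigma_j^{-1}g_j^2\}$ with $\sum_j m_j\le\Delta$. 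First I would check the regime needed to invoke the local lemmas: the assumption $\Delta\le R_i\mu/10$ together with Lemma 7 (and the parameter bound of Lemma 2) secures $\|(\nabla^2L)^{-1}\nabla L\|_2\le R_i/2$, so Lemma 8 applies and gives $L(\boldsymbol\theta_i)-\min L\le\sum_j\sigma_j^{-1}g_j^2$. It therefore suffices to bound this quadratic form by $25\Delta^2/(\lambda_i^2\mu)$.

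Next I would split the coordinates by the active branch of the minimum: let $C=\{j:\sigma_j^{-1}|g_j|\ge\lambda_i\}$ (where $m_j=\lambda_i|g_j|$) and let $U$ be its complement (where $m_j=\sigma_j^{-1}g_j^2$), so the hypothesis separates into $\lambda_i\sum_{j\in C}|g_j|\le\Delta$ and $\sum_{j\in U}\sigma_j^{-1}g_j^2\le\Delta$. The clipped block is where the stated bound comes out cleanly: since $\sigma_j\ge\mu$ by Assumption 1, $\sum_{j\in C}\sigma_j^{-1}g_j^2\le\frac1\mu\sum_{j\in C}g_j^2=\frac1{\mu\lambda_i^2}\sum_{j\in C}(\lambda_i|g_j|)^2\le\frac1{\mu\lambda_i^2}\big(\sum_{j\in C}\lambda_i|g_j|\big)^2\le\frac{\Delta^2}{\mu\lambda_i^2}$, using only that a sum of nonnegative squares is at most the square of the sum. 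This single step produces both the quadratic dependence on $\Delta$ and the $1/\lambda_i^2$ factor.

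For the unclipped block I would inject the geometric bounds supplied by the earlier lemmas. Lemma 2 gives $\|\boldsymbol\theta_i-\boldsymbol\theta_i^*\|_2\le R_i$, and, up to the factor-two multiplicative Hessian stability of Assumption 2, this controls $\sum_j\sigma_j^{-2}g_j^2$ by $O(R_i^2)$. The calibration $\lambda_i=R_i/(2\sqrt{d_i})$ is precisely tuned so that a Cauchy--Schwarz step across the at most $d_i$ coordinates in $U$ converts the radius $R_i$ and the count $d_i$ into the factor $\lambda_i$, placing the unclipped contribution on the same $\Delta^2/(\lambda_i^2\mu)$ scale; the loose constant $25$ is what absorbs the factor-two losses together with the sum of the two blocks.

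I expect the unclipped coordinates to be the crux. From the hypothesis alone their contribution is only linear, $\sum_{j\in U}\sigma_j^{-1}g_j^2\le\Delta$, and upgrading this to the claimed quadratic-in-$\Delta$ bound is exactly what forces the joint use of the parameter-distance bound (Lemma 2), the multiplicative stability (Assumption 2), and the specific threshold $\lambda_i=R_i/(2\sqrt{d_i})$ through Cauchy--Schwarz. Arranging these so the constants actually close at $25$---rather than at some unspecified constant---is the delicate bookkeeping, which I would carry out last, once each block is controlled up to universal constants.
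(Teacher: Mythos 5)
Your treatment of the clipped block is correct and is the cleanest part of the argument: from $\lambda_i\sum_{j\in C}\lvert g_j\rvert\le\Delta$, the $\ell_1$--$\ell_2$ inequality together with $\sigma_j\ge\mu$ gives $\sum_{j\in C}\sigma_j^{-1}g_j^2\le\Delta^2/(\mu\lambda_i^2)$, which already has the right shape and constants to spare. The opening reduction to the quadratic form $\sum_j\sigma_j^{-1}g_j^2$ via Lemma 8 is also the right move, although note that invoking Lemma 7 to verify Lemma 8's precondition itself requires a smallness hypothesis on $L(\boldsymbol{\theta}_i)-\min L$ or on $\|\nabla L(\boldsymbol{\theta}_i)\|_2$ that you have not yet established at that point.

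The genuine gap is exactly where you place it, but it is not a matter of ``delicate bookkeeping'': the unclipped block cannot be closed with the tools you list. Under the (reasonable) reading of the hypothesis you adopt, the only control on $U$ is $\sum_{j\in U}\sigma_j^{-1}g_j^2\le\Delta$, which is linear in $\Delta$, whereas the target $25\Delta^2/(\lambda_i^2\mu)$ is quadratic in $\Delta$ and is the \emph{smaller} quantity whenever $\Delta<\lambda_i^2\mu/25$. Cauchy--Schwarz over the $d_i$ coordinates, the radius bound of Lemma 2, and the factor-two Hessian stability of Assumption 2 are all $\Delta$-free, so no combination of them can manufacture a second power of $\Delta$ on that block. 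Concretely, a one-dimensional quadratic with $\sigma=\mu$ and $g=\sqrt{\mu\Delta}$, with $\Delta\ll\lambda_i^2\mu$ and all stated hypotheses satisfied, puts the minimum on the unclipped branch and has $L-\min L\approx\Delta/2\gg 25\Delta^2/(\lambda_i^2\mu)$; so the missing step does not merely resist your outline, it fails at the stated constants unless $\Delta$ is additionally bounded below (as it effectively is at the point in the convergence argument where this lemma is used). For comparison, the paper's own proof takes a different route entirely: it restricts $L$ to the subspace spanned by the unclipped coordinates $I_0$, applies Lemmas 6 and 8 to that restricted function, and integrates along the Newton ODE flow; it, too, passes over this linear-versus-quadratic issue without resolving it. As a standalone argument, however, your proposal does not establish the stated bound.
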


\begin{proof}
Let $\{\boldsymbol{\theta}_i(t)\}_{t=0}^1$ be the solution to the ODE
\[
\frac{d\boldsymbol{\theta}_i(t)}{dt} = -(\nabla^2 L(\boldsymbol{\theta}_i(t)))^{-1} \nabla L(\boldsymbol{\theta}_i(t)),
\]
starting from $\boldsymbol{\theta}_i(0) = \boldsymbol{\theta}_i$ and assume $\boldsymbol{\theta}_i(1) = \boldsymbol{\theta}_i^*$ as derived in previous lemmas.

By Lemma 2, $\|\boldsymbol{\theta}_i(t) - \boldsymbol{\theta}_i^*\|_2 \leq R_i$ for all $t \in [0, 1]$. Define $I_0 \subseteq [d_i]$ as the indices where clipping does not occur. We have:
\[
\sum_{i \in I_0} \sigma_i^{-1} \left| \boldsymbol{p}_i^T \nabla L(\boldsymbol{\theta}_i) \right|^2 \leq \Delta.
\]

Using Assumption 2, the Hessian continuity within a local radius implies:
\[
\sum_{i \in I_0} \left| \boldsymbol{p}_i^T \nabla L(\boldsymbol{\theta}_i(t)) \right|^2 \leq \Delta.
\]

For the newly restricted convex function $L_0$ on $\mathbb{R_i}^{I_0}$, which is $L$ restricted to the subspace of $\mathbb{R_i}^{d_i}$ spanned by vectors corresponding to $I_0$, by Lemma 1 and assuming $L_0$ is strictly convex, we apply Lemmas 6 and 8 by restricting to $I_0$:
\[
\|\nabla L_0(\boldsymbol{\theta}_i) + V_{I_0}^T \boldsymbol{\theta}_i^*\|_2^2 = \|\nabla L_0(\boldsymbol{\theta}_i)\|_2^2 \leq \mu^{-1} \|\nabla L_0(\boldsymbol{\theta}_i)\|_2^2 \leq \frac{25 \Delta^2}{\lambda_i^2 \mu}.
\]

Integrating the differential for $L_0$, we can show:
\[
L(\boldsymbol{\theta}_i) - \min L \leq \int_0^1 \nabla L(\boldsymbol{\theta}_i(t))^T (\nabla^2 L(\boldsymbol{\theta}_i(t)))^{-1} \nabla L(\boldsymbol{\theta}_i(t)) dt \leq \frac{25 \Delta^2}{\lambda_i^2 \mu}.
\]
This completes the proof.
\end{proof}

\begin{lemma}[Descent Lemma]
For any \( \eta > 0 \) and per-layer \( \lambda_i > 0 \) with \( \eta \lambda_i \leq \frac{R_i}{\sqrt{d_i}} \), define
\[
\boldsymbol{\theta}_i^+ = \boldsymbol{\theta}_i - \eta V_i^T \text{clip}( V_i \nabla^2_{\boldsymbol{\theta}_i} L(\boldsymbol{\theta}_i)^{-1} V_i^T \nabla L(\boldsymbol{\theta}_i), \lambda_i ),
\]
where \( V_i \) diagonalizes \( \nabla^2_{\boldsymbol{\theta}_i} L(\boldsymbol{\theta}_i) \) and the clipping function is applied coordinate-wise with threshold \( \lambda_i \). Then,
\begin{equation}
L(\boldsymbol{\theta}_i^+) - L(\boldsymbol{\theta}_i) \leq -(\eta - \eta^2 \beta_i \lambda_i) \sum_{j=1}^{d_i} \min \left\{ \lambda_i, \frac{1}{\sigma_{i,j}} | \mathbf{v}_{i,j}^T \nabla L(\boldsymbol{\theta}_i) |^2 \right\}.
\end{equation}
\end{lemma}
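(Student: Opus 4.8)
The plan is to prove this as a standard second-order descent estimate: expand $L$ to second order along the update step, show that the first-order term already accounts for the full sum of $\min$-terms on the right-hand side, and argue that the quadratic remainder is a lower-order correction controlled by the same sum scaled by $\eta\beta_i\lambda_i$. Write $\mathbf{d}_i := V_i^T\,\text{clip}\!\big(V_i\nabla^2_{\boldsymbol{\theta}_i}L(\boldsymbol{\theta}_i)^{-1}V_i^T\nabla L(\boldsymbol{\theta}_i),\lambda_i\big)$ for the clipped Newton direction, so that $\boldsymbol{\theta}_i^+ = \boldsymbol{\theta}_i - \eta\mathbf{d}_i$, and denote by $g_j := \mathbf{v}_{i,j}^T\nabla L(\boldsymbol{\theta}_i)$ the gradient coordinate in the Hessian eigenbasis and $c_j := \text{clip}(g_j/\sigma_{i,j},\lambda_i)$ the corresponding clipped Newton coordinate, so $\mathbf{d}_i = V_i^T c$ with $c=(c_1,\dots,c_{d_i})$ and $|c_j|\le\lambda_i$.

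First I would control the step length to guarantee that the whole segment $[\boldsymbol{\theta}_i,\boldsymbol{\theta}_i^+]$ stays inside the radius-$R_i$ ball where Assumption 2 is valid. Since every clipped coordinate satisfies $|c_j|\le\lambda_i$, we have $\|\mathbf{d}_i\|_2\le\sqrt{d_i}\,\lambda_i$, and hence $\|\boldsymbol{\theta}_i^+-\boldsymbol{\theta}_i\|_2 = \eta\|\mathbf{d}_i\|_2\le\eta\sqrt{d_i}\,\lambda_i\le R_i$ by the hypothesis $\eta\lambda_i\le R_i/\sqrt{d_i}$. This is exactly where the layer dimension $d_i$ enters (through the worst case in which all coordinates are clipped), and it is the mechanism that later yields the $\mathcal{O}(\max_i d_i)$ rate.

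Next I would Taylor-expand with integral remainder, $L(\boldsymbol{\theta}_i^+) - L(\boldsymbol{\theta}_i) = -\eta\,\nabla L(\boldsymbol{\theta}_i)^T\mathbf{d}_i + \eta^2\int_0^1(1-s)\,\mathbf{d}_i^T\nabla^2 L(\boldsymbol{\theta}_i - s\eta\mathbf{d}_i)\mathbf{d}_i\,ds$. For the first-order term, diagonalizing gives $\nabla L(\boldsymbol{\theta}_i)^T\mathbf{d}_i = \sum_j g_j c_j$, and a coordinate-wise case split (unclipped: $c_j = g_j/\sigma_{i,j}$, so $g_jc_j = g_j^2/\sigma_{i,j}$; clipped: $c_j = \lambda_i\,\mathrm{sgn}(g_j)$, so $g_jc_j$ equals the saturated value) shows this sum equals precisely the stated descent quantity $D_i := \sum_j\min\{\lambda_i,\,\sigma_{i,j}^{-1}|\mathbf{v}_{i,j}^T\nabla L(\boldsymbol{\theta}_i)|^2\}$. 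For the quadratic remainder, I would invoke Assumption 2 on the whole segment to replace $\nabla^2 L(\boldsymbol{\theta}_i - s\eta\mathbf{d}_i)$ by $2\nabla^2 L(\boldsymbol{\theta}_i)$ in the spectral (Loewner) order, giving $\mathbf{d}_i^T\nabla^2 L(\cdot)\mathbf{d}_i\le 2\sum_j\sigma_{i,j}c_j^2$, and then bound each $\sigma_{i,j}c_j^2$ by $\lambda_i$ times the per-coordinate descent contribution using $|c_j|\le\lambda_i$, so that the entire remainder is at most $\eta^2\beta_i\lambda_i D_i$. Combining the two terms yields $L(\boldsymbol{\theta}_i^+)-L(\boldsymbol{\theta}_i)\le-(\eta-\eta^2\beta_i\lambda_i)D_i$.

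The main obstacle I anticipate is the second-order step: showing that the curvature correction is controlled by the \emph{same} sum $D_i$ rather than by a crude $\|\mathbf{d}_i\|_2^2$ bound. The delicate point is the per-coordinate comparison $\sigma_{i,j}c_j^2\le\lambda_i\,(g_jc_j)$ up to the constant $\beta_i$, which must be verified separately in the clipped and unclipped regimes and which pins down $\beta_i$ as the relevant local curvature factor. Getting this comparison tight, and doing so consistently with the \emph{multiplicative} Hessian-continuity of Assumption 2 (since no additive Hessian-Lipschitz bound is available, per the note following Lemma 1), is what makes the final coefficient come out as $\eta-\eta^2\beta_i\lambda_i$ rather than something weaker.
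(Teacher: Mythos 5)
Your proposal follows essentially the same route as the paper's proof: bound the clipped step by $\eta\sqrt{d_i}\,\lambda_i$ to stay within the radius-$R_i$ ball, Taylor-expand with the integral remainder, identify the first-order term with the sum of $\min$-terms via a per-coordinate clipped/unclipped case split in the Hessian eigenbasis, and control the quadratic term through the multiplicative Hessian bound of Assumption 2. The one caveat applies equally to both arguments: the identity $g_jc_j=\min\{\lambda_i,\sigma_{i,j}^{-1}g_j^2\}$ in the clipped regime actually yields $\lambda_i|g_j|$ rather than $\lambda_i$, and the constant $\beta_i$ is never pinned down in the paper either, so your anticipated ``delicate point'' is a real gap shared with the original proof rather than a defect of your approach.
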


\begin{proof}
Let $u_i = \text{clip}(V (\nabla^2 L(\boldsymbol{\theta}_i)^{-1}) V^T \nabla L(\boldsymbol{\theta}_i), \lambda_i)$. By the definition of the clip operation, we know that $\|V^T u_i\|_2 = \|u_i\|_2 \leq \sqrt{d \lambda_i}$. Thus, we have $\|\boldsymbol{\theta}_i^+ - \boldsymbol{\theta}_i\| = \|\eta V^T u_i\|_2 \leq \eta \sqrt{d \lambda_i}$. Define
\[
    f(t) = L(\boldsymbol{\theta}_i + t(\boldsymbol{\theta}_i^+ - \boldsymbol{\theta}_i)).
\]
By Assumption 2, we know that $f''(t) \leq 2 \Sigma f''(0)$ for all $t \in [0, 1]$, and thus
\[
    f(1) = f(0) + f'(0) + \int_0^1 \int_0^s f''(t) \, dt \, ds \leq f(0) + f'(0) + \frac{1}{2} f''(0).
\]
It remains to show that $f'(0) = -\eta \sum_{i=1}^d \min \left\{ \lambda_i; \frac{1}{\sigma_i} |\mathbf{v}_i^T \nabla L(\boldsymbol{\theta}_i)|^2 \right\}$ and
\[
    f''(0) \leq \eta^2 \sum_{i=1}^d \min \left\{ \lambda_i; \frac{1}{\sigma_i} |\mathbf{v}_i^T \nabla L(\boldsymbol{\theta}_i)|^2 \right\}.
\]
First, by the chain rule, we have
\[
    f'(0) = (\nabla L(\boldsymbol{\theta}_i), -\eta V^T u_i) = -\eta \sum_{i=1}^d \min \left\{ \lambda_i; \frac{1}{\sigma_i} |\mathbf{v}_i^T \nabla L(\boldsymbol{\theta}_i)|^2 \right\}.
\]
Second, again by chain rule, we have
\[
    f''(0) = \eta^2 (u_i, V \Sigma V^T u_i) = \eta^2 \sum_{i=1}^d |\mathbf{u}_i|^2 \sigma_i.
\]
Note that by definition $|\mathbf{u}_i| = \min \left\{ \sqrt{\lambda_i}; \frac{1}{\sigma_i} |\mathbf{v}_i^T \nabla L(\boldsymbol{\theta}_i)| \right\}$, which completes the proof.
\end{proof}

\begin{lemma}[Convergence Lemma]
For any \( \lambda_i \leq \frac{R_i}{\sqrt{d_i}} \) and some \( T_i \in \mathbb{N} \), if \( L(\boldsymbol{\theta}_{T_i,i}) - \min L \leq \frac{\mu_i^2}{8} \), then for all \( t \geq T_i \),
\begin{enumerate}
    \item \( \boldsymbol{\theta}_{t+1,i} = \boldsymbol{\theta}_{t,i} - \eta (\nabla^2_{\boldsymbol{\theta}_i} L(\boldsymbol{\theta}_{t,i}))^{-1} \nabla L(\boldsymbol{\theta}_{t,i}) \),
    \item \( L(\boldsymbol{\theta}_{t,i}) - \min L \leq (1 - \eta (1 - \eta))^{t - T_i} (L(\boldsymbol{\theta}_{T_i,i}) - \min L) \).
\end{enumerate}
\end{lemma}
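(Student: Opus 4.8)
The plan is to prove both claims simultaneously by induction on $t \geq T_i$, maintaining the invariant that the loss gap $L(\boldsymbol{\theta}_{t,i}) - \min L$ stays below $\tfrac{\mu_i^2}{8}$. The conceptual heart of the argument is that once the iterate is this close to the minimizer, the coordinate-wise clipping at threshold $\lambda_i$ can no longer fire, so the update degenerates into the pure damped Newton step of claim~1; the geometric decay of claim~2 is then an immediate consequence of the Descent Lemma applied with the clip inactive.

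First I would establish claim~1 under the inductive hypothesis $L(\boldsymbol{\theta}_{t,i}) - \min L \leq \tfrac{\mu_i^2}{8}$. Invoking the Norm Bound on Inverse Hessian and Gradient Product, the full Newton step obeys $\|(\nabla^2_{\boldsymbol{\theta}_i} L)^{-1}\nabla L\|_2 \leq \tfrac{8}{\mu_i}\bigl(L(\boldsymbol{\theta}_{t,i}) - \min L\bigr)$, a quantity the threshold assumption $\lambda_i \leq R_i/\sqrt{d_i}$ is calibrated to dominate. Since the Euclidean norm controls every coordinate in the orthonormal eigenbasis $V_i$, each entry of $V_i (\nabla^2_{\boldsymbol{\theta}_i} L)^{-1} V_i^\top \nabla L$ then has magnitude at most $\lambda_i$, so $\mathrm{clip}(\cdot,\lambda_i)$ acts as the identity and $\boldsymbol{\theta}_{t+1,i} = \boldsymbol{\theta}_{t,i} - \eta\,(\nabla^2_{\boldsymbol{\theta}_i} L(\boldsymbol{\theta}_{t,i}))^{-1}\nabla L(\boldsymbol{\theta}_{t,i})$, which is exactly claim~1.

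With clipping inactive, I would feed this into the Descent Lemma: the coordinate-wise minimum $\min\{\lambda_i, \tfrac{1}{\sigma_{i,j}}|\mathbf{v}_{i,j}^\top \nabla L|^2\}$ collapses to $\tfrac{1}{\sigma_{i,j}}|\mathbf{v}_{i,j}^\top \nabla L|^2$, whose sum over $j$ equals the Newton decrement $Q := \nabla L^\top (\nabla^2_{\boldsymbol{\theta}_i} L)^{-1}\nabla L$, so the Descent Lemma's right-hand side becomes $-(\eta - \eta^2\beta_i\lambda_i)\,Q$. Combining the lower bound $Q \geq L(\boldsymbol{\theta}_{t,i}) - \min L$ with the matching upper bound $Q \leq 4\bigl(L(\boldsymbol{\theta}_{t,i}) - \min L\bigr)$ from the two-sided Newton-decrement bound yields a one-step contraction of the form $L(\boldsymbol{\theta}_{t+1,i}) - \min L \leq (1 - \eta(1-\eta))\bigl(L(\boldsymbol{\theta}_{t,i}) - \min L\bigr)$. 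Because $\eta \in (0,1)$ makes $1 - \eta(1-\eta) \in (0,1)$, the loss gap strictly shrinks, which re-establishes the invariant at step $t+1$ and keeps clipping off; unrolling this recursion from $T_i$ to $t$ produces claim~2.

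The main obstacle I anticipate is the constant-chasing hidden in claim~1: one must verify that the loss-gap threshold $\tfrac{\mu_i^2}{8}$, routed through the Newton-step norm bound, genuinely forces every eigen-coordinate below $\lambda_i$, while simultaneously respecting the hypothesis $\|(\nabla^2_{\boldsymbol{\theta}_i} L)^{-1}\nabla L\|_2 \leq R_i/2$ that is required to invoke the two-sided decrement bound in the first place. Equally delicate is closing the induction: the argument only goes through if the invariant is provably preserved, so it is the \emph{strict monotone decrease} of the loss gap, rather than merely its initial smallness, that guarantees clipping remains inactive at every subsequent step and lets the pure damped-Newton contraction iterate unimpeded.
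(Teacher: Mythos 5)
Your proposal follows essentially the same route as the paper's proof: both first use the norm bound on the inverse-Hessian--gradient product (together with the monotone decrease of the loss from the Descent Lemma) to conclude that the coordinate-wise clip is inactive once the loss gap is below $\mu_i^2/8$, and then feed the unclipped Newton decrement into the Descent Lemma with the lower bound $\nabla L^\top (\nabla^2 L)^{-1} \nabla L \geq L(\boldsymbol{\theta}_{t,i}) - \min L$ to obtain the per-step contraction factor $1 - \eta(1-\eta)$. Your explicit induction is simply a more careful rendering of the paper's implicit "for all $t \geq T$" argument, and the constant-chasing concern you flag is present in the paper's version as well.
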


\begin{proof}
By Lemma 10, we have for all $t \geq T$, $(\boldsymbol{\theta}_{t,i}) - \min L \leq L(\boldsymbol{\theta}_{T,i}) - \min L \leq \frac{\mu^2}{8}$. Therefore, by Lemma 7, we have that $\|\nabla^2 L(\boldsymbol{\theta}_{t,i}) - \nabla L(\boldsymbol{\theta}_{t,i})\|_2 \leq \lambda_i$ for all $t \geq T$, which implies clipping will not happen.

For the second claim, by Lemmas 5 and 10, we have that
\begin{equation}
    L(\boldsymbol{\theta}_{t+1,i}) - L(\boldsymbol{\theta}_{t,i}) \leq -(\eta - \eta^2) \sum_{i=1}^d \sigma_i^{-1} |\mathbf{v}_i^T \nabla L(\boldsymbol{\theta}_{t,i})|^2, \label{eq:convergence}
\end{equation}
where $v_i$ is the $i$-th row of matrix $V$ from the eigen-decomposition of $\nabla^2 L(\boldsymbol{\theta}_i)$. By further simplification,
\[
    = -(\eta - \eta^2) \nabla L(\boldsymbol{\theta}_{t,i})^T (\nabla^2 L(\boldsymbol{\theta}_{t,i}))^{-1} \nabla L(\boldsymbol{\theta}_{t,i})
    \leq -\eta (1 - \eta) (L(\boldsymbol{\theta}_{t,i}) - \min L),
\]
thus, we conclude that the loss decreases at least geometrically by the factor $(1 - \eta(1 - \eta))$ each step after time $T$, thereby proving the convergence rate.
\end{proof}

\begin{theorem}
Under Assumptions 1 and 2, let \( \eta = \frac{1}{2} \) and \( \lambda_i = \frac{R_i}{2\sqrt{d_i}} \). The update reaches a loss at most \( \epsilon \) in
\begin{equation}
T \leq \max_{i} \left\{ d_i \cdot \left( L(\boldsymbol{\theta}_{0,i}) - \min L \right) + \ln\left(\frac{\mu_i R_i^2}{32 d_i \epsilon}\right) \right\} .
\end{equation}
steps, where \( L \) is the loss function, \( \boldsymbol{\theta}_{0,i} \) is the initial parameter vector for layer \( i \).
\end{theorem}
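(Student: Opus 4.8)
The plan is to exploit the block-separable structure of the \helen{} update: because the diagonal preconditioner and the clipping both act layer-wise, the parameter block $\boldsymbol{\theta}_i$ of each layer $i$ evolves autonomously, so I would bound the number of steps $T_i$ needed to drive that layer's suboptimality $L(\boldsymbol{\theta}_{t,i}) - \min L$ below $\epsilon$ and then set $T = \max_i T_i$, which is exactly the outer maximum in the statement. For each fixed layer I would run the standard two-phase argument, splitting the trajectory at the threshold below which Lemma 11 guarantees that clipping ceases and the iteration becomes an undamped/damped Newton step.

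\textbf{Phase 1 (burn-in).} While the suboptimality is above that threshold, I would invoke the Descent Lemma (Lemma 10), which with $\eta = \tfrac12$ gives a per-step decrease of at least $\tfrac14\sum_{j}\min\{\lambda_i,\, \sigma_{i,j}^{-1}|\mathbf{v}_{i,j}^\top\nabla L(\boldsymbol{\theta}_i)|^2\}$. To turn this into a usable decrement I would apply the contrapositive of Lemma 9: as long as $L(\boldsymbol{\theta}_{t,i}) - \min L > \tfrac{25\Delta^2}{\lambda_i^2\mu_i}$, the summed clipped quantity exceeds $\Delta$, so the loss drops by a fixed amount per step. Telescoping these decrements from $L(\boldsymbol{\theta}_{0,i})$ down to the Phase-2 entry value, together with the choice $\lambda_i = \tfrac{R_i}{2\sqrt{d_i}}$ (so that $\lambda_i^2 = R_i^2/(4d_i)$ injects the $1/d_i$ scale into the guaranteed progress), is what I expect to yield a Phase-1 length of order $d_i\cdot(L(\boldsymbol{\theta}_{0,i}) - \min L)$.

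\textbf{Phase 2 (local linear convergence).} Once the suboptimality falls below the threshold, Lemma 11 certifies that the clip is inactive and that the iteration contracts the suboptimality by the factor $1 - \eta(1-\eta) = \tfrac34$ per step. Iterating this geometric bound from the entry value $\tau = \tfrac{\mu_i R_i^2}{32 d_i}$ down to $\epsilon$ costs $\mathcal{O}\!\left(\ln(\tau/\epsilon)\right) = \mathcal{O}\!\left(\ln\tfrac{\mu_i R_i^2}{32 d_i\epsilon}\right)$ steps, which is precisely the logarithmic term in the statement. Summing the two phases gives $T_i \le d_i(L(\boldsymbol{\theta}_{0,i}) - \min L) + \ln\tfrac{\mu_i R_i^2}{32 d_i\epsilon}$, and taking the maximum over layers delivers the claim after substituting $\eta = \tfrac12$ and $\lambda_i$.

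The delicate part I expect to be Phase 1. Lemma 9 only supplies the lower bound on the summed clipped gradient under the side condition $\Delta \le \tfrac{R_i\mu_i}{10}$, so the guaranteed per-step decrease saturates; I must track how this saturation interacts with the chosen $\lambda_i$ to extract precisely the factor $d_i$ rather than a weaker $\sqrt{d_i}$, and to verify that $\lambda_i = R_i/(2\sqrt{d_i})$ simultaneously meets the hypotheses $\lambda_i \le R_i/\sqrt{d_i}$ of Lemmas 10--11 and $\lambda_i \le R_i/(2\sqrt{d_i})$ of Lemma 9. I would also confirm that the iterates never leave the radius-$R_i$ ball where Assumption 2 is valid, which should follow from Lemma 2 once monotone decrease of the suboptimality is established.
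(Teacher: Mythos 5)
Your proposal follows essentially the same route as the paper's proof: a per-layer two-phase argument in which Lemma 10 (the Descent Lemma) drives the burn-in phase, Lemma 11 gives the local geometric contraction with factor $1-\eta(1-\eta)=\tfrac34$, and the final bound is obtained by taking the maximum of the per-layer step counts $T_i$. If anything, your Phase~1 is more explicit than the paper's, which leaves the telescoping argument implicit, whereas you correctly identify that it must be routed through the contrapositive of Lemma 9 to convert the clipped descent quantity into a fixed per-step decrement; your Phase~2 entry threshold $\tfrac{\mu_i R_i^2}{32 d_i}=\tfrac{\mu_i\lambda_i^2}{8}$ is also the one consistent with the stated logarithmic term.
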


\begin{proof}
\textbf{Phase 1: Initial Rapid Decrease.}

By Lemma 10 (Descent Lemma), we have a guarantee on the descent rate per step for each layer \(i\):
\[
L(\boldsymbol{\theta}_{t+1,i}) - L(\boldsymbol{\theta}_{t,i}) \leq -(\eta - \eta^2) \sum_{j=1}^{d_i} \min\left\{\lambda_i; \frac{1}{\sigma_{i,j}} \left| v_{i,j}^T \nabla L(\boldsymbol{\theta}_{t,i}) \right|^2\right\},
\]
where \(\sigma_{i,j}\) is the \(j\)-th eigenvalue corresponding to the \(i\)-th layer, and \(v_{i,j}\) is the corresponding eigenvector.

Applying this result, we estimate a decrease in the loss function per layer under the condition that the gradient norm for layer \(i\) is significantly larger than the error threshold \(\epsilon\). This phase continues until the loss reduction per step for each layer falls below a certain threshold, say when:
\[
L(\boldsymbol{\theta}_{t,i}) - \min L \leq \frac{\mu_i^2}{8}.
\]

\textbf{Phase 2: Exponential Decay.}

Once the loss for each layer is sufficiently reduced, Lemma 11 guides the convergence from this point:
\[
L(\boldsymbol{\theta}_{t,i}) - \min L \leq (1 - \eta(1 - \eta))^{t-T_i} (L(\boldsymbol{\theta}_{T_i,i}) - \min L),
\]
indicating an exponential decay in error for each layer. The factor \((1 - \eta(1 - \eta))\) represents the contraction per step, dependent on the learning rate \(\eta\).

To calculate the total number of steps \(T_i\) for each layer, consider that:
\[
\frac{\mu_i^2}{8} \approx \epsilon \Rightarrow T_i \approx \frac{\ln\left(\frac{L(\boldsymbol{\theta}_{0,i}) - \min L}{\epsilon}\right)}{-\ln(1-\eta(1-\eta))}.
\]
Simplifying the expression for \(\eta = \frac{1}{2}\), we get:
\[
T_i \approx 2 \ln\left(\frac{L(\boldsymbol{\theta}_{0,i}) - \min L}{\epsilon}\right),
\]
since \(\ln(1 - \eta(1-\eta)) \approx -\eta(1-\eta)\) for small \(\eta\).

\textbf{Combining Phases 1 and 2.}

For each layer, combining the estimates from Phase 1 and Phase 2, the total number of steps \(T_i\) needed to reach a loss of \(\epsilon\) for layer \(i\) is given by:
\[
T_i \leq d_i \cdot \left( L(\boldsymbol{\theta}_{0,i}) - \min L \right) + \ln\left(\frac{\mu_i R_i^2}{32 d_i \epsilon}\right),
\]

Finally, to ensure convergence across all layers, we take the maximum over all layers:
\[
T \leq \max_{i} \left\{ d_i \cdot \left( L(\boldsymbol{\theta}_{0,i}) - \min L \right) + \ln\left(\frac{\mu_i R_i^2}{32 d_i \epsilon}\right) \right\}.
\]

This completes the proof by integrating the rapid initial decrease and the subsequent exponential decay for each layer.

\end{proof}
This reflects an improved convergence rate due to the use of different $ \lambda_i $ values for different layers, reducing the dependency on the total dimension \( d \) into the dimension $\max_{i}d_i$.

% \section{}

\subsection{Limitations}

Like other second-order optimizers, HELENE stores the history of gradients and Hessian values, with memory usage proportional to the size of the parameters. Therefore, both theoretically and practically, HELENE requires only three times the memory of MeZO. For example, in OPT-1.3b, MeZO/zero-shot requires 4GB, ICL needs 6GB, Prefix Fine-Tuning uses 19GB, and full-parameter fine-tuning consumes 27GB, while HELENE needs just 14GB. Despite this, HELENE achieves up to 20 times faster convergence and delivers the best overall performance.

\end{document}